\documentclass{article}

\PassOptionsToPackage{numbers, compress}{natbib}
 \usepackage[preprint]{neurips_2026}

\usepackage{amsmath,amsfonts,bm}

\def\eqref#1{equation~\ref{#1}}
\def\Eqref#1{Equation~\ref{#1}}

\def\1{\bm{1}}
\newcommand{\data}{\mathcal{D}}

\DeclareMathAlphabet{\mathsfit}{\encodingdefault}{\sfdefault}{m}{sl}
\SetMathAlphabet{\mathsfit}{bold}{\encodingdefault}{\sfdefault}{bx}{n}

\usepackage[utf8]{inputenc} 
\usepackage[T1]{fontenc}    
\usepackage{hyperref}       
\usepackage{url}            
\usepackage{booktabs}       
\usepackage{amsfonts}       
\usepackage{nicefrac}       
\usepackage{microtype}      
\usepackage{xcolor}         
\usepackage{amsmath,amssymb,amsthm}
\usepackage{bbm}
\usepackage{mathrsfs}
\usepackage{graphicx}
\usepackage{caption}
\usepackage{array}
\usepackage{makecell}
\usepackage{threeparttable}
\usepackage{pifont}
\usepackage{pdflscape}
\usepackage{longtable}
\usepackage{wrapfig}
\usepackage{svg}
\usepackage{enumitem}
\usepackage{subcaption}

\numberwithin{equation}{section}
\newtheorem{theorem}{Theorem}[section]
\newtheorem{proposition}{Proposition}[section]
\newtheorem{corollary}{Corollary}[section]
\newtheorem{definition}{Definition}[section]
\newtheorem{assumption}{Assumption}[section]

\definecolor{CheckGreen}{HTML}{238B45}
\definecolor{CrossRed}{HTML}{CB181D}
\newcommand{\benchcmark}{\textcolor{CheckGreen}{\ding{51}}}
\newcommand{\benchxmark}{\textcolor{CrossRed}{\ding{55}}}

\def\eg{\textit{e.g.}}

\definecolor{shiang}{rgb}{0.6,0.0,0.0}

\definecolor{vahid}{rgb}{0.0,0.0,0.6}

\definecolor{coopermj}{rgb}{0.0,0.6,0.0}

\title{SurvivalPFN: Amortizing Survival Prediction via In-Context Bayesian Inference}

\author{%
  Shi-ang Qi$^{1}$ \quad Vahid Balazadeh$^{1,2}$ \quad Michael Cooper$^{1,2}$ \\ \textbf{Russell Greiner$^{3,4}$ \quad Rahul G. Krishnan$^{1,2}$}\\
  {}$^1$ Vector Institute \quad {}$^2$ University of Toronto \quad  
  {}$^3$ University of Alberta \\ {}$^4$ Alberta Machine Intelligence Institute
}

\begin{document}

\maketitle

\addtocontents{toc}{\protect\setcounter{tocdepth}{-1}}

\begin{abstract}
    Survival analysis provides a powerful statistical framework for modeling time-to-event outcomes in the presence of censoring. 
    However, selecting an appropriate estimator from the many specialized survival approaches often requires substantial methodological and domain expertise. 
    We introduce SurvivalPFN, a prior-data fitted network that amortizes Bayesian inference for censored observations through in-context learning. 
    SurvivalPFN is pretrained on a diverse family of synthetic, identifiable, and right-censored data-generating processes, enabling it to amortize survival analysis in a single forward pass during inference. 
    As a result, the model adapts to the effective complexity of each dataset without task-specific training or hyperparameter tuning, avoids restrictive parametric assumptions, and produces calibrated survival distributions. 
    In a large-scale benchmark spanning 61 datasets, 21 methods, and 5 evaluation metrics, SurvivalPFN achieves strong predictive performance and often improves upon established survival models. 
    These results suggest that SurvivalPFN offers a principled and practical foundation model for survival analysis, with potential applications in high-impact domains such as healthcare, finance, and engineering (\url{https://github.com/rgklab/SurvivalPFN}).
\end{abstract}

\section{Introduction}

\begin{wrapfigure}{r}{9cm}
   \vspace{-27pt}
    \centering
    \includegraphics[width=\linewidth]{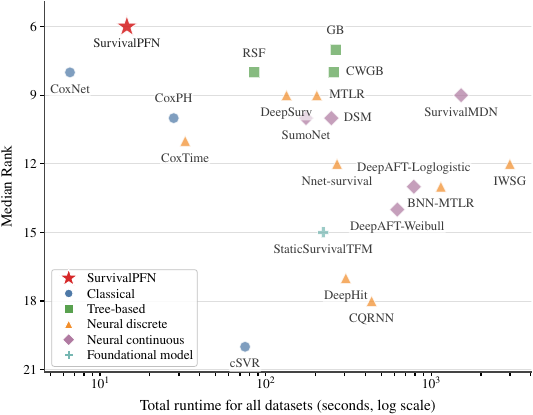}
    \caption{\textbf{Computational efficiency vs. performance across 61 datasets and 5 metrics.} SurvivalPFN achieves the best median rank while matching classical models in speed.}
    \label{fig:teaser}
    \vspace{-20pt}
\end{wrapfigure}

Survival analysis models the distribution of time to an event of interest, with applications spanning medicine~\citep{li2021predicting, she2020development, qi2022personalized, cooper2023machine, cooper2024dynameld, gao2024predicting}, e-commerce~\citep{lu2002predicting, perianez2016churn, equihua2023modelling}, engineering~\citep{papathanasiou2023machine, broadway2001survival, kuhajda2016using}, and finance~\citep{luoma1991survival, gepp2008role, giot2007ipos}. Such models are learned and evaluated on data that often exhibits \textit{right-censoring}: for some instances, the event is not observed during the follow-up period, so we only know that the event time exceeds the censoring time.

Various survival analysis methods have been proposed to handle right-censored data, but each imposes different inductive biases. 
Classical models such as Cox proportional hazards (CoxPH)~\citep{cox1972regression} often rely on constant hazard ratios and linear covariate effects.
Ensemble methods and deep survival models improve flexibility, but typically require careful tuning and often retain structural assumptions through parametric forms~\citep{ranganath2016deep, norman2024deepaft}, proportional hazards~\citep{katzman2018deepsurv}, fixed time/quantile discretizations~\citep{yu2011learning,lee2018deephit,pearce2022censored}, or mixture-based continuous-time distributions~\citep{nagpal2021dsm,han2022survival}. 
Consequently, practitioners must navigate a large set of estimators with distinct assumptions and limitations; model selection, training, and validation require substantial domain and methodological expertise.

This work aims to design a survival estimator that \textit{(i) avoids rigid simplifying assumptions}; \textit{(ii) adapts to the effective complexity of the observed data}; and \textit{(iii) enables efficient inference without extensive training or hyperparameter tuning}.

\begin{figure}
    \centering
    \includegraphics[width=\textwidth]{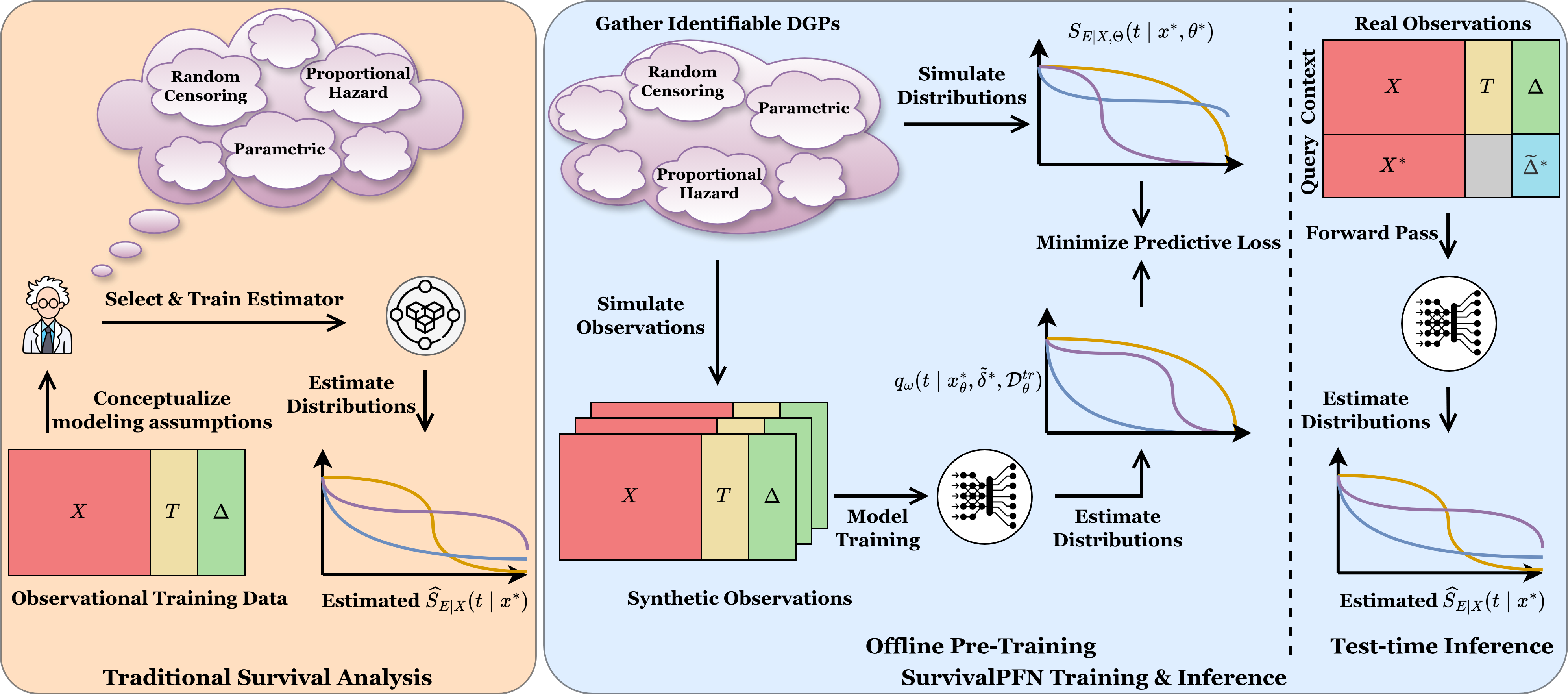}
    \caption{\textbf{Traditional survival analysis vs. SurvivalPFN.} \textit{(Left):} Traditional survival analysis requires an analyst to select and fit a suitable estimator for the observational data. \textit{(Right):} SurvivalPFN pre-trains on diverse synthetic, identifiable DGPs. At inference, an observed dataset is provided as context, and the survival distributions for query instances are obtained with a single forward pass.}
    \label{fig:overview}
\end{figure}

To do so, we build on prior-data fitted networks (PFNs)~\citep{muller2021transformers}: transformer-based models~\citep{vaswani2017attention} that learn in-context approximations of posterior predictive distributions using synthetic tasks. 
Rather than fitting a new survival model for each dataset, SurvivalPFN shifts computation to an offline prior-data pretraining stage. 
At inference time, an observed right-censored dataset is provided as context, and a single forward pass returns posterior survival distributions for new individuals. 
This approach provides a practical route to Bayesian survival prediction that avoids dataset-specific optimization and extensive hyperparameter tuning.

We present \textit{SurvivalPFN}, a transformer model for survival prediction via in-context learning. 
Our framework uses a general-purpose prior under conditional independent censoring to generate millions of simulated data generating processes (DGPs). 
By training on these diverse DGPs, SurvivalPFN learns to infer conditional survival distributions directly from observed right-censoring data, yielding an easy-to-use and efficient estimator with strong empirical performance; see Figure~\ref{fig:teaser}. 
Figure~\ref{fig:overview} contrasts the SurvivalPFN workflow with traditional survival modeling.
Our key contributions:
\begin{enumerate}
    \item We introduce a framework for amortized Bayesian survival prediction via large-scale pretraining. SurvivalPFN uses a single forward pass to adapt to data complexity without task-specific training or hyperparameter tuning, while avoiding restrictive parametric assumptions.

    \item We provide theoretical justification for SurvivalPFN as an asymptotically consistent estimator under identifiable right-censored data-generating processes.

    \item We conduct a large-scale benchmark comparing $21$ models across $61$ datasets and $5$ evaluation metrics, making this, to our knowledge, one of the largest survival model benchmarking studies to date. SurvivalPFN achieves the best median rank.
    
    \item We release the code for training SurvivalPFN, together with a \texttt{scikit-learn}-style API (see Supplementary Material).
\end{enumerate}

\section{Background}
\label{sec:background}

\textbf{Survival Analysis and Prediction.}
Let $X \in \mathbb{R}^d$ denote a covariate vector, and $E, C \in \mathbb{R}_+$ represent the event and censoring times. 
We assume a true joint distribution $P$ over the tuple $(X, E, C, T, \Delta)$, where $T = \min(E, C)$ and $\Delta = \mathbbm{1}[E\leq C]$. Specifically, we observe $N$ draws $\data=\{(x_i,t_i,\delta_i)\}_{i=1}^N$ from the distribution on observed variables $P_{\mathrm{obs}}(X, T, \Delta)$; $E$ and $C$ are latent variables; only $T$ and $\Delta$ are observed. Survival predictors aim to learn the conditional densities or survival functions:
\begin{align*}
    f_{E\mid X}(t\mid x) \ = \ \Pr(E=t \mid X=x), \quad \text{or (equivalently)} \quad S_{E\mid X}(t\mid x) \ = \ \Pr(E>t\mid X=x).
\end{align*}

\textbf{Identifiable Survival Analysis.}
We say that the
conditional survival function $S_{E\mid X}$ is (nonparametrically) \emph{identified} if any two
candidate data generating processes that induce the same observed law over $(X,T,\Delta)$ must also induce the same event-time survival function~\citep{tsiatis1975nonidentifiability, tsiatis2006semiparametric}:
\begin{align*}
    P^{(1)}_{\mathrm{obs}}(X,T,\Delta) \ = \ P^{(2)}_{\mathrm{obs}}(X,T,\Delta)
    \qquad\Longrightarrow\qquad
    S^{(1)}_{E\mid X}(t\mid x) \ = \ S^{(2)}_{E\mid X}(t\mid x),
\end{align*}
for almost every $x$ and all $t$ in the identifiable support.

A sufficient condition for nonparametric identification is given by
the following standard assumptions.
\begin{assumption}[Conditional independent censoring]
$E \perp C \mid X$.
\end{assumption}
\begin{assumption}[Positivity]
For a time region $\mathcal{T}$ of interest,
$\Pr(C \ge t \mid X=x)>0$, $\forall t\in\mathcal{T}$.
\end{assumption}

When $E\not\perp C\mid X$, the event distribution is generally not
nonparametrically identifiable from $(X,T,\Delta)$ alone; identification then
requires additional assumptions, \eg, a specified copula family
for the dependence between $E$ and $C$
\citep{gharari2023copula,zhang2024deep}.
Appendix~\ref{app:identifiable} includes more theory on identifiability.

\textbf{Bayesian Survival Prediction.}
Consider a family of identifiable survival data-generating processes indexed by $\theta\in\Theta$, with prior $\pi(\cdot)$ over $\Theta$. 
Each $\theta$ induces conditional densities and survival functions for both event and censoring times, $f_{E\mid X,\Theta}(e\mid x,\theta)$ and $S_{E\mid X,\Theta}(e\mid x,\theta)$.
In Bayesian survival modeling, we place a prior density $f_{\Theta}(\theta)$ and infer the posterior density via Bayes' rule,
\begin{equation}
    f_{\Theta\mid \mathscr{D}}(\theta \mid \data) \ \propto \ f_{\mathscr{D}\mid \Theta}(\data\mid \theta) f_{\Theta}(\theta).
    \label{eq:survival-bayes}
\end{equation}
Under conditional independent censoring, the likelihood can be decomposed into:
\begin{equation*}
\resizebox{\linewidth}{!}{$\displaystyle
    f_{\mathscr{D}\mid \Theta}(\data\mid \theta) =
    \prod_{i=1}^N
    \left[
        f_{E\mid X,\Theta}(t_i\mid x_i,\theta) \,
        S_{C\mid X,\Theta}(t_i\mid x_i,\theta)
    \right]^{\delta_i}
    \left[
        f_{C\mid X,\Theta}(t_i\mid x_i,\theta) \,
        S_{E\mid X,\Theta}(t_i\mid x_i,\theta)
    \right]^{1-\delta_i}
$}
\end{equation*}
where, for $A\in\{E,C\}$, $S_{A\mid X,\Theta}(t_i\mid x_i,\theta) = \int_{t_i}^{\infty} f_{A\mid X,\Theta}(\tau\mid x_i,\theta)\,d\tau$.
Given a new covariate vector $x^\ast$, the Bayesian posterior predictive distribution (PPD) of the event time is
\begin{equation}
    f_{E\mid X,\mathscr{D}}(t\mid x^\ast,\data) \ = \
    \int_{\Theta}
    f_{E\mid X,\Theta}(t\mid x^\ast,\vartheta)
    f_{\Theta\mid \mathscr{D}}(\vartheta\mid \data)\,d\vartheta .
    \label{eq:PPD}
\end{equation}
Analogously, the posterior predictive survival distribution (PPSD) is
\begin{equation}
    S_{E\mid X,\mathscr{D}}(t\mid x^\ast,\data) \ = \ \int_{\Theta}
    S_{E\mid X,\Theta}(t\mid x^\ast,\vartheta)
    f_{\Theta\mid \mathscr{D}}(\vartheta\mid \data)\,d\vartheta .
    \label{eq:PPSD}
\end{equation}
This framework is attractive because it integrates over plausible survival mechanisms rather than committing to a single fitted model. 
However, it is difficult to use directly with flexible survival models: evaluating the likelihood can require numerical integration to obtain $S_{E\mid X,\Theta}$; the normalizing constant in \Eqref{eq:survival-bayes} and the posterior predictive integrals in Equations~\ref{eq:PPD} and \ref{eq:PPSD} are generally intractable; and approximate inference methods such as Markov chain Monte Carlo (MCMC)~\citep{neal2012bayesian,andrieu2003introduction,welling2011bayesian} or variational inference (VI)~\citep{jordan1999introduction,wainwright2008graphical,hoffman2013stochastic,qi2023effective} must be rerun for each new dataset. 
We therefore seek an amortized procedure that preserves the posterior-predictive interpretation of Bayesian survival prediction while avoiding dataset-specific posterior computation.

\textbf{Prior-Data Fitted Networks and Amortized Bayesian Inference.} 
Prior-data fitted networks (PFNs) amortize Bayesian posterior prediction by training transformers on synthetic tasks sampled from a prior-data generating process~\citep{muller2021transformers,muller2025position}. 
Each task consists of a context set and query inputs, and the PFN is trained to predict query targets according to the posterior predictive distribution induced by the prior. 
After pretraining, posterior inference is no longer explicit: the transformer's in-context computation maps a new dataset and query points directly to predictive distributions in a single forward pass, replacing dataset-specific MCMC or VI. 
This connects PFNs to meta-learning~\citep{finn2017model}, but replaces task-specific adaptation with in-context inference. 
PFNs have achieved strong transfer in tabular prediction~\citep{hollmann2022tabpfn,hollmann2025accurate,qu2026tabiclv2}, causal effect estimation~\citep{balazadeh2025causalpfn,robertson2025pfn}, and time-series prediction~\citep{hoo2025tables,berghaus2025context}, motivating our use of this paradigm for amortized Bayesian survival prediction.

\section{Method}
\label{sec:method}



\subsection{SurvivalPFN: Amortized Posterior Predictive Inference}
\label{sec:survivalpfn}

\textbf{Overview.} SurvivalPFN learns an in-context approximation to Bayesian posterior predictive inference for right-censored survival data.
Instead of specifying a tractable likelihood and performing posterior inference separately for each dataset, we specify a prior through a simulator over identifiable right-censored DGPs. 
A draw $\theta \sim \pi(\cdot)$ determines a joint law $P^\theta$ over $(X,E,C,T,\Delta)$. 
For each synthetic task, the simulator produces an observed right-censored context dataset $\data^{tr}_{\theta}=\{(x_i,t_i,\delta_i)_{\theta}\}_{i=1}^{N}$, together with held-out query covariates $x^\ast_\theta$ and their latent event and censoring times $(e^\ast_\theta,c^\ast_\theta)$. 
The latent times are used only during prior-data training; at inference time, SurvivalPFN receives the same information available in ordinary survival prediction: an observed dataset $\data$ and query covariates $x^\ast$.

\textbf{Architecture.}
Let $q_\omega$ denote a transformer with parameters $\omega$. 
Given a context dataset and a query covariate, SurvivalPFN outputs a predictive distribution over time. 
We additionally provide a binary \emph{query indicator} $\widetilde{\delta}^{\ast}$, which is a control input specifying which PPD the model should return:
\begin{align}
    q_{\omega}\!\left(t \,\middle\vert\, x^\ast_{\theta},\widetilde{\delta}^{\ast},\data^{tr}_{\theta}\right)
    \ \approx \ 
    \begin{cases}
    f_{E\mid X,\data}\!\left(t\mid x^\ast_{\theta},\data^{tr}_{\theta}\right),
    & \quad \widetilde{\delta}^{\ast}=1,\\[2mm]
    f_{C\mid X,\data}\!\left(t\mid x^\ast_{\theta},\data^{tr}_{\theta}\right),
    & \quad \widetilde{\delta}^{\ast}=0.
    \end{cases}
    \label{eq:survivalpfn_ppd_target}
\end{align}
Thus, $\widetilde{\delta}^{\ast}=1$ asks the model for the event-time PPD, whose tail probability gives the PPSD, while $\widetilde{\delta}^{\ast}=0$ asks for the posterior predictive censoring distribution (PPCD). 
This indicator is not an observed event label for the query point; rather, it specifies the prediction target.

\begin{figure}[t]
    \centering
    \includegraphics[width=0.85\linewidth]{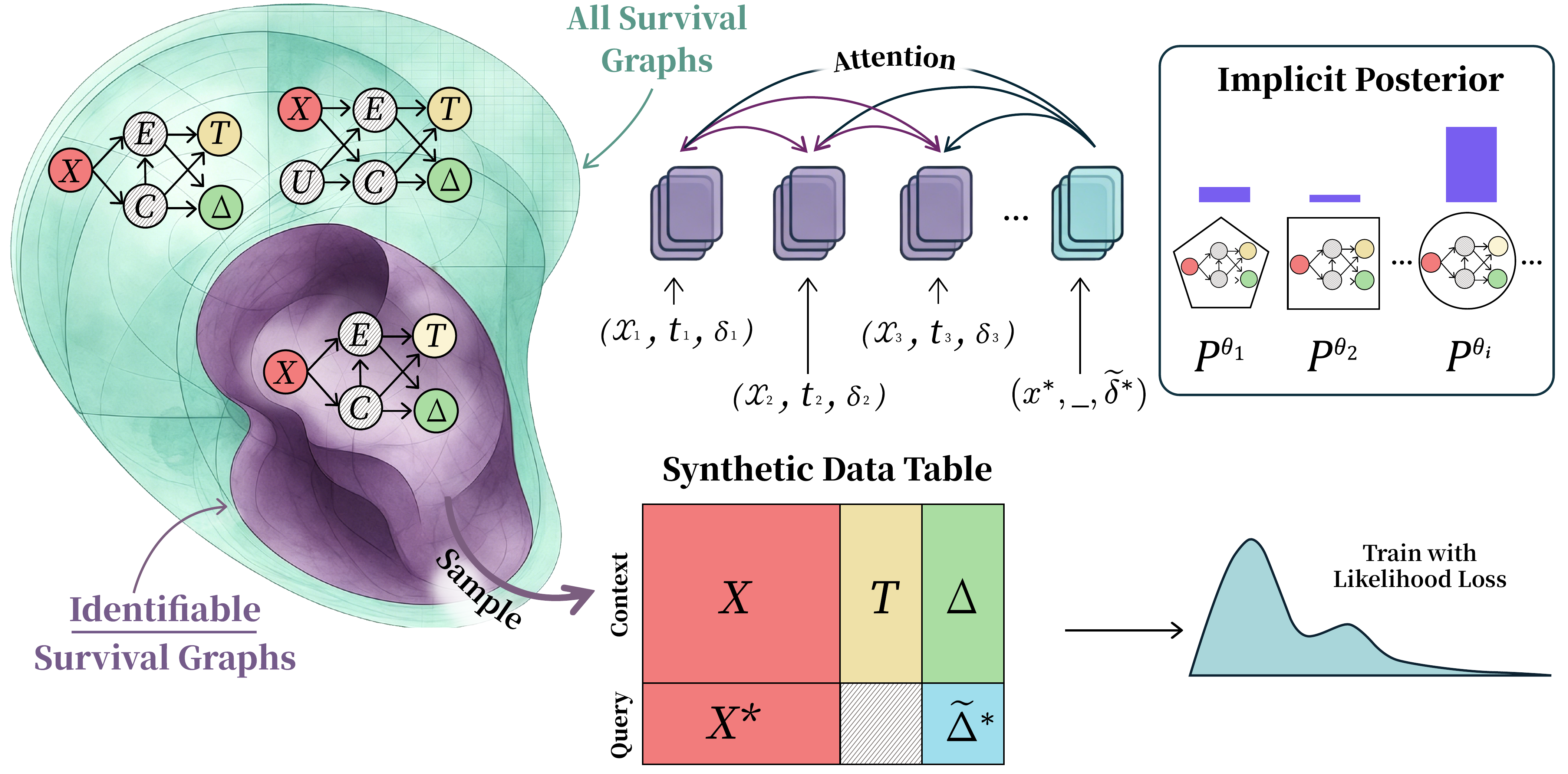}
    \caption{\textbf{Training SurvivalPFN.} At each iteration, we sample an identifiable survival DGP and use it to generate context tokens $(X,T,\Delta)$ together with query covariates $X^\ast$. Query tokens are formed by pairing $X^\ast$ with query indicators $\widetilde{\Delta}^\ast$, and SurvivalPFN predicts the requested event- or censoring-time distribution. The model is trained by minimizing the likelihood loss.}
    \label{fig:survivalpfn_training}
\end{figure}

SurvivalPFN parameterizes $q_\omega$ using the PFN-style transformer architecture of TabDPT~\citep{ma2024tabdpt} and CausalPFN~\citep{balazadeh2025causalpfn}; see Appendix~\ref{app:model_details} for details. 
As shown in Figure~\ref{fig:survivalpfn_training}, each context row $(x_i,t_i,\delta_i)_\theta \in \data^{tr}_{\theta}$ is embedded as a context token, while each query token is formed from $(x^\ast_\theta,\widetilde{\delta}^{\ast})$. 
We use three query-indicator schedules during training:
\begin{itemize}[leftmargin=10pt]
    \item \textit{Event-only:} always sets $\widetilde{\delta}^{\ast}=1$ and trains the model directly for PPSD prediction; 
    \item \textit{Both:} duplicates each query with $\widetilde{\delta}^{\ast}\in\{0,1\}$ and trains both event- and censoring-time prediction; 
    \item \textit{Random:} samples the query indicator according to the empirical censoring pattern in the context. 
\end{itemize} 
The transformer uses an asymmetric attention mask: context tokens attend to one another, while query tokens attend only to the context tokens and not to other queries, as shown by the two-way and one-way arrows in Figure~\ref{fig:survivalpfn_training}. 
Together with the absence of positional encodings, this makes predictions invariant to the ordering of the context dataset and conditionally independent across query points given the context.

The model represents each PPD as a discretized histogram over $L=1024$ time bins. 
For each query token, the transformer output is projected to logits over bins, followed by a softmax:
\begin{align*}
    q_{\omega}\!\left(\cdot \,\middle\vert\, x^\ast_\theta,\widetilde{\delta}^{\ast},\data^{tr}_{\theta}\right)
    \ = \
    \left[
    q_{\omega,\ell}(x^\ast_\theta,\widetilde{\delta}^{\ast},\data^{tr}_{\theta})
    \right]_{\ell=1}^{L}.
\end{align*}
Before discretization, we apply a monotone transformation of time, such as \texttt{lognormal2normal} or \texttt{time2quantile}, to respect the nonnegative support of survival times and allocate resolution more evenly across the context time range; see Appendix~\ref{app:monotone_transformation}. 
The predicted PPSD is obtained by summing the event-time tail probability mass:
\begin{equation}
    \widehat{S}_{\omega}(\tau_k\mid x^\ast,\data)
    \ = \
    \sum_{\ell=k+1}^{L}
    q_{\omega,\ell}(x^\ast,\widetilde{\delta}^{\ast}=1,\data).
    \label{eq:survival-pfn-survival}
\end{equation}

\textbf{Training.} Training follows the PFN principle~\citep{muller2021transformers,hollmann2022tabpfn}. 
At each gradient update, we sample $\theta\sim\pi(\cdot)$, generate a context dataset and query points from the corresponding DGP, and use the simulator-provided latent times as supervision. 
Given the query indicator, the supervised target is 
\begin{align*}
    r^\ast_\theta(\widetilde{\delta}^{\ast}) \ = \ \widetilde{\delta}^{\ast} e^\ast_\theta + \bigl(1-\widetilde{\delta}^{\ast}\bigr)c^\ast_\theta .
\end{align*}
Let $g_{\data^{tr}_{\theta}}$ be the context-fitted monotone time transformation, and let
$\kappa_{\data^{tr}_{\theta}}(r)\in\{1,\ldots,L\}$ denote the transformed-time bin containing
$g_{\data^{tr}_{\theta}}(r)$. 
We train SurvivalPFN with the discrete negative log-likelihood,
\begin{align}
\label{eq:survivalpfn_nll_loss}
    \mathcal{L}_{\mathrm{NLL}}(\omega)
    \ = \
    \mathbb{E}_{\theta\sim\pi(\cdot)}
    \, \mathbb{E}_{\data^{tr}_{\theta},\,x^\ast_\theta,\,e^\ast_\theta,\,c^\ast_\theta,\,\widetilde{\delta}^{\ast}}
    \left[
    -
    \log
    q_{\omega,\kappa_{\data^{tr}_{\theta}}\!\left(r^\ast_\theta(\widetilde{\delta}^{\ast})\right)}
    \!\left(
        x^\ast_\theta,
        \widetilde{\delta}^{\ast},
        \data^{tr}_{\theta}
    \right)
    \right].
\end{align}
This objective is a tractable prior-data likelihood for the requested latent query time. 
At the population optimum, the Bayes-optimal predictor is the conditional distribution of the latent target bin given the observed context and query. 
Thus, when the prior is restricted to identifiable right-censored DGPs, minimizing \Eqref{eq:survivalpfn_nll_loss} trains SurvivalPFN to approximate the Bayesian PPD induced by $\pi(\cdot)$. 
We also consider a smoothed cross-entropy variant, which replaces the one-hot target with a narrow Gaussian-smoothed histogram over nearby time bins; see Appendix~\ref{app:train_objective}.

\textbf{Inference.}
At inference time, SurvivalPFN is applied directly to a real right-censored dataset $\data$ and query covariates $x^\ast$. 
A single forward pass with $\widetilde{\delta}^{\ast}=1$ returns the event-time predictive distribution, from which we compute $\widehat{S}_{\omega}(t\mid x^\ast,\data)$ via \Eqref{eq:survival-pfn-survival}. 
No dataset-specific gradient updates, posterior sampling, variational optimization, or hyperparameter tuning are required. 
In this way, pretraining shifts the computational burden from test-time Bayesian inference to an offline prior-fitting stage, yielding a reusable amortized Bayesian survival predictor.

\subsection{Consistency of the Bayesian Posterior Predictive Target}
\label{sec:consistency}
We next give an informal consistency statement for the Bayesian target learned by SurvivalPFN; 
a formal version and proof are deferred to Appendix~\ref{app:survivalpfn_consistency}. 
The key point is that SurvivalPFN is consistent for conditional event distributions that are identifiable from the observed right-censored data.
\begin{proposition}[Informal consistency]
\label{prop:informal-survival-consistency}
Let $\theta^\ast$ denote the parameter of the true survival data-generating process.
Assume the prior over survival DGPs is supported only on identifiable right-censored mechanisms. 
Then, for every time $t$ in the support and for almost every query covariate $x^\ast$, the Bayesian PPSD converges almost surely to the true conditional survival function:
\begin{align*}
    S_{E\mid X,\mathscr{D}}(t\mid x^\ast,\data) \ = \
    \int_{\Theta}
    S_{E\mid X,\Theta}(t\mid x^\ast,\theta)
    f_{\Theta\mid\mathscr{D}}(\theta\mid \data)
    \,d\theta
    \quad 
    \xrightarrow[N\to\infty]{\mathrm{a.s.}}
    \quad
    S_{E\mid X,\Theta}(t\mid x^\ast,\theta^\ast).
\end{align*}
\end{proposition}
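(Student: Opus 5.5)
The plan is to reduce the statement to Doob's posterior consistency theorem, applied to the observed-data model, and then use identifiability to pass from the observed law to the event-time survival function. For each $\theta\in\Theta$, let $P^\theta_{\mathrm{obs}}$ be the law of $(X,T,\Delta)$ induced by $\theta$. The data $\data$ are i.i.d. draws from $P^{\theta^\ast}_{\mathrm{obs}}$, and the posterior $f_{\Theta\mid\mathscr{D}}$ depends on $\theta$ only through the observed-data likelihood from Section~\ref{sec:background}. The posterior therefore cannot distinguish two parameters with the same observed law. The natural object is the quotient map $\phi:\theta\mapsto P^\theta_{\mathrm{obs}}$, or any measurable function of $\theta$ that is constant on its fibres. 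Identifiability on the support of $\pi$ says exactly that $\psi_{t,x^\ast}(\theta):=S_{E\mid X,\Theta}(t\mid x^\ast,\theta)$ factors through $\phi$ for a.e. $x^\ast$ and all $t$ in the identifiable region.

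\textbf{Step 1 (Doob).} Take $\Theta$ to be a standard Borel space and assume measurability of $\theta\mapsto P^\theta_{\mathrm{obs}}$. Doob's theorem then says that for any bounded measurable $h$ that is a function of $P^\theta_{\mathrm{obs}}$, the posterior mean satisfies
\begin{equation*}
\int_\Theta h(\theta)\,f_{\Theta\mid\mathscr{D}}(\theta\mid\data)\,d\theta \ \xrightarrow[N\to\infty]{\mathrm{a.s.}}\ h(\theta^\ast)
\end{equation*}
for $\pi$-almost every $\theta^\ast$, almost surely under $P^{\theta^\ast}$. The proof uses the standard martingale argument. The posterior mean equals $\mathbb{E}[h(\theta)\mid \mathcal{F}_N]$, where $\mathcal{F}_N$ is generated by the first $N$ observations, so it is a bounded martingale. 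It converges to $\mathbb{E}[h(\theta)\mid\mathcal{F}_\infty]$. Finally, $h(\theta)$ is $\mathcal{F}_\infty$-measurable up to null sets, because $P^\theta_{\mathrm{obs}}$ can be recovered from an infinite i.i.d. sample via the empirical measure and Glivenko--Cantelli.

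\textbf{Step 2 (identification).} Apply Step~1 with $h=\psi_{t,x^\ast}$, which is bounded in $[0,1]$. The functional must be a measurable function of $\phi(\theta)$. To justify this, I will use the explicit identification formula available under Assumptions~2.1--2.2, for which Appendix~\ref{app:identifiable} can be cited. The conditional cumulative hazard is $\Lambda(t\mid x)=\int_0^t dP(T\le s,\Delta=1\mid x)/P(T\ge s\mid x)$, and $S_{E\mid X}=\prod(1-d\Lambda)$. This is an explicit measurable map of the observed law on $\mathcal{T}$. Hence $\psi_{t,x^\ast}(\theta)=G_{t,x^\ast}(P^\theta_{\mathrm{obs}})$, and Step~1 gives convergence to $\psi_{t,x^\ast}(\theta^\ast)$, which is the claim.

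\textbf{Step 3 (quantifiers).} The "almost every $x^\ast$" qualifier enters through conditioning on $X=x^\ast$. I will avoid pointwise conditional-law issues by first proving the result for $x^\ast$-integrated versions $\int \psi_{t,x}\,\mathbb{1}_A(x)\,dP_X$ over a countable generating family of sets $A$ and rational $t$. Lebesgue differentiation, or a regular-version argument, then yields the statement for a.e. $x^\ast$. Monotonicity and right-continuity in $t$ extend it from rational $t$ to all $t$ in the support.

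\textbf{Main obstacle.} The main obstacle is the "$\pi$-almost every $\theta^\ast$" caveat inherent in Doob's theorem, which is not a statement about a fixed true $\theta^\ast$. I expect the formal version in the appendix to state the result in this form. If a fixed-$\theta^\ast$ statement is needed, I would instead try a Schwartz-type argument. That route requires $\theta^\ast$ to lie in the Kullback--Leibler support of the prior on observed laws, together with uniformly consistent tests that separate $P^{\theta^\ast}_{\mathrm{obs}}$ from weak neighbourhoods' complements. Identifiability would then convert weak posterior concentration of the observed law into convergence of $S_{E\mid X}$, but only if $G_{t,x^\ast}$ is continuous. That continuity is the delicate point, since positivity must hold uniformly to keep the denominator $P(T\ge s\mid x)$ away from zero.
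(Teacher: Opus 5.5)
Your Steps 1--2 are essentially the paper's argument. You apply Doob's theorem on the observed-data (quotient) model, then use identifiability to replace the limit by $S_{E\mid X,\Theta}(t\mid x^\ast,\theta^\ast)$. The $\pi$-a.e.\ $\theta^\ast$ caveat you anticipate is exactly how the paper states it: Proposition~\ref{prop:formal_consistency} has $\pi(\Theta_0)=1$, so the Schwartz-type fallback is not needed.

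The paper orders things differently. It applies Doob to the class average $M_t(x^\ast,[\theta])=\mathbb{E}_\pi[S_{E\mid X,\Theta}(t\mid x^\ast,\vartheta)\mid[\vartheta]=[\theta]]$, which is $\sigma([\theta])$-measurable by construction. It then identifies the posterior mean of $M_t$ with the PPSD via the tower property and $\data\perp\theta\mid[\theta]$. Only at the end does it invoke identifiability, taken abstractly as Definition~\ref{def:survival_identifiable_prior}. This ordering gives the limit $M_t(x^\ast,[\theta^\ast])$ even without identifiability, and hence the converse.

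Your product-limit derivation in Step 2 has a real merit: it connects the conditional-independence and positivity assumptions to the factorization that the paper only asserts. However, it yields that factorization only for $P^\theta_X$-a.e.\ $x$, which is where your difficulty starts.

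\textbf{The gap is in Step 3.} Convergence of posterior means of $\int_A\psi_{t,x}\,dP_X$ for every $A$ in a countable generating family is only weak-$\ast$ convergence in $x$, and it does not imply a.e.\ pointwise convergence:
\begin{itemize}
\item Rademacher-type oscillations converge weakly to a constant but pointwise nowhere.
\item Lebesgue differentiation in $A$ cannot be interchanged with $N\to\infty$ without a uniformity you do not have.
\item When the integrating measure is $P^\theta_X$, the posterior mean of the integral is not even an integral of $x\mapsto$ the posterior mean of $\psi_{t,x}$.
\end{itemize}

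The detour is also unnecessary. The martingale argument in your Step 1 only requires $h(\theta)$ to agree $\pi$-a.s.\ with an $\mathcal{F}_\infty$-measurable variable; it need not be an exact functional of $P^\theta_{\mathrm{obs}}$. Take a jointly measurable version of $S_{E\mid X,\Theta}$ (Assumption~\ref{assump:regularity}, item 4), with identifiability holding off a null set of $(\theta,x)$. Fubini then gives, for a.e.\ $x^\ast$, $\psi_{t,x^\ast}(\theta)=F_t(x^\ast,[\theta])$ for $\pi$-a.e.\ $\theta$. Step 1 then applies directly at that fixed $x^\ast$, which is what the paper does implicitly.

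Likewise, extending from rational $t$ to all $t$ by monotonicity and right-continuity works only at continuity points of $S_{E\mid X,\Theta}(\cdot\mid x^\ast,\theta^\ast)$. At a jump $t_0$ you only get $\limsup_N S_{E\mid X,\mathscr{D}}(t_0\mid x^\ast,\data)\le S_{E\mid X,\Theta}(t_0^-\mid x^\ast,\theta^\ast)$. Since the statement is for each fixed $t$, run the argument at that $t$ (or on a countable grid) instead.
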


\begin{proof}[Proof sketch]
Group survival DGPs into observational equivalence classes:
\begin{align*}
    \theta_1 \ \sim\ \theta_2
    \quad\Longleftrightarrow\quad
    P^{\theta_1}_{\mathrm{obs}}(X,T,\Delta) \ = \
    P^{\theta_2}_{\mathrm{obs}}(X,T,\Delta).
\end{align*}
The observed data can distinguish different equivalence classes, but cannot distinguish DGPs within the same class. 
By applying Bayesian consistency to this quotient space, the posterior concentrates on the true observational equivalence class $[\theta^\ast]$ as $N\to\infty$. 
Therefore, the Bayesian PPSD converges to the posterior
average of $S_{E\mid X,\Theta}(t\mid x^\ast,\theta)$ over DGPs that are observationally equivalent to $\theta^\ast$, that is, over all $\theta$ satisfying $[\theta]=[\theta^\ast]$. 
If the prior is survival-identifiable, then every DGP in this equivalence class induces the same conditional event-time survival function. 
Hence this posterior average is equal to the true survival function $S_{E\mid X,\Theta}(t\mid x^\ast,\theta^\ast)$, which gives the desired consistency.
\end{proof}
Moreover, if SurvivalPFN exactly amortizes this posterior predictive
distribution, then its predicted survival curve inherits the same consistency guarantee:
\begin{align*}
    \widehat{S}_{\omega}(t\mid x^\ast,\data)  
    \xrightarrow[N\to\infty]{\mathrm{a.s.}}  
    S_{E\mid X,\Theta}(t\mid x^\ast,\theta^\ast).
\end{align*}

\subsection{Prior over Identifiable Survival DGPs}
\label{sec:survival_prior}

Optimizing our loss function does not require an explicit parameterization of the distribution $P^{\theta}$. 
Instead, it requires a prior $\pi(\cdot)$ that can generate observational datasets, consisting of tuples $(x_i, t_i, \delta_i)$, along with query points $x^\ast_\theta$ with their event/censoring times. 
Still, not every choice of prior has the desired properties. 
Proposition~\ref{prop:informal-survival-consistency} highlights two key design principles for the prior: \emph{(i)} it should rule out non-identifiable right-censored mechanisms; and \emph{(ii)} subject to this identifiability constraint, the prior should be broad enough to cover a diverse family of plausible survival mechanisms, increasing the chance that the true DGP $\theta^\ast$ lies within, or close to, the prior support.

Our prior generation relies on synthetic random multilayer perceptrons (MLPs). Following \citet{hollmann2022tabpfn}, we sample a random MLP with various numbers of layers, hidden dimensions, activations, and random initialization. Specifically, we apply additive Gaussian noise after each layer in the MLP to induce randomness in the generated data. 
To sample a synthetic right-censored dataset from our prior, we first sample a random MLP, apply it to standard Gaussian noise, and generate covariates of varying dimensions as different neurons in the MLP. Introducing randomness in the design of the MLPs aims to make the generated data more diverse. Next, we then sample two additional random MLPs, and then apply them on the generated covariates to sample event/censoring times. We then shift the sampled times to ensure non-negativity. By design, the event and censoring times are conditionally independent given the covariates, satisfying the first condition; see Figure~\ref{fig:prior_main} (left).

\begin{wrapfigure}{r}{9.2cm}
   \vspace{-10pt}
    \centering
    \includegraphics[width=\linewidth]{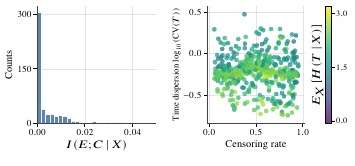}
     \vspace{-10pt}
    \caption{\textbf{Summary over 500 generated datasets.} \textit{(Left):} Histogram of conditional mutual information. \textit{(Right):} Diversity coverage over censoring rate and observed-time dispersion, colored by conditional observed-time entropy.}
    \label{fig:prior_main}
\end{wrapfigure}
To cover even more diverse survival regimes (see data diversity in Figure~\ref{fig:prior_main} right), the prior mixes four families of synthetic survival generators. 
The \emph{naive prior} treats generated table outputs as raw event and censoring times. 
The \emph{survival-distribution prior} samples smooth random monotone maps, such as Bernstein maps, and pushes uniform noise through them to obtain flexible distributions for event and censoring. 
The \emph{mixture prior} samples event and censoring times from mixtures of Weibull or log-normal components with covariate-dependent mixture parameters. 
Finally, the \emph{kitchen-sink prior} is a meta-prior that samples one of the above generators for each task, thereby increasing prior diversity.

For each synthetic dataset, we also sample one of four censoring mechanisms: \emph{uniform censoring}, where $C$ is sampled from a uniform time range; \emph{random censoring}, where $C$ is generated from an independent tabular mechanism; \emph{administrative censoring}, where subjects have different entry times but share a common study end date; and \emph{conditional independent censoring}, where both $E$ and $C$ depend on $X$ but are generated from independent conditional blocks. The observed survival data are then formed using the standard survival law.
Appendix~\ref{app:prior_generation} provides full prior-generation details.

\section{Experiments and Results}
We conduct extensive experiments to investigate the following research questions (RQs):
\begin{itemize}[leftmargin=30pt]
    \item[] \textbf{RQ1.} How does SurvivalPFN compare with survival baselines in predictive performance?
    \item[] \textbf{RQ2.} How efficient is SurvivalPFN compared with other survival estimators?
    \item[] \textbf{RQ3.} How sensitive is SurvivalPFN to the proportion of training/context samples?
    \item[] \textbf{RQ4.} How does SurvivalPFN compare with general tabular foundation models (TFMs)?
    \item[] \textbf{RQ5.} How do priors, query schedules, transformations, and losses affect performance?
\end{itemize}

\begin{wrapfigure}{r}{7.2cm}
   \vspace{-22pt}
    \centering
    \includegraphics[width=\linewidth]{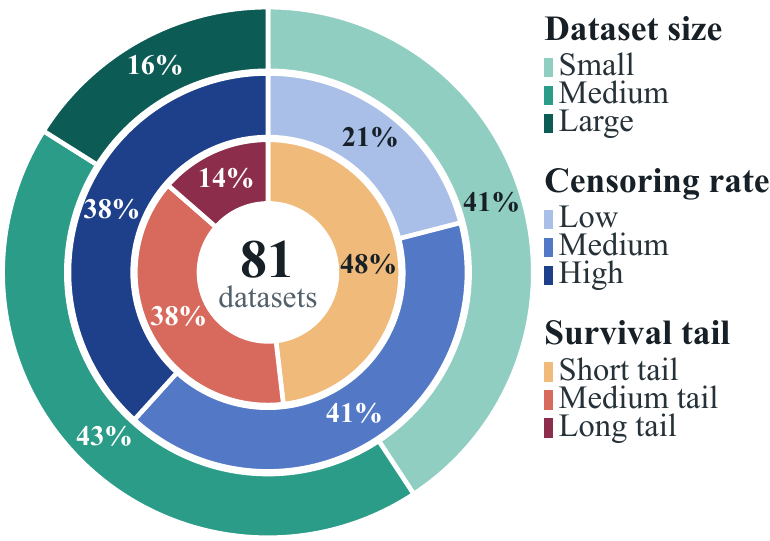}
    \vspace{-16pt}
    \caption{Dataset size (cutoffs at 500 and 5000), censoring rate and tail rate (cutoffs at 33\% and 67\%).}
    \label{fig:data_statistics}
    \vspace{-20pt}
\end{wrapfigure}
We evaluate SurvivalPFN on a large-scale benchmark covering diverse real-world regimes. 
The benchmark contains 81 datasets (see Figure~\ref{fig:data_statistics} for a preview): 20 are used only for SurvivalPFN checkpoint selection, and the remaining 61 are held out for final evaluation. 
Datasets are drawn from \texttt{SurvSet}~\citep{drysdale2022survset} and additional textbook, software-package, and recent-publication sources. 
Table~\ref{tab:dataset_summary} and Appendix~\ref{app:data} provide full dataset descriptions and summaries.

\textbf{Models.}
We evaluate 21 survival models from five families (Table~\ref{tab:survival_model_comparison}); details are in Appendices~\ref{app:model}-\ref{app:hparam_tuning}.
\begin{itemize}
    \item \textit{Tabular foundation models:} SurvivalPFN and StaticSurvivalTFM~\citep{kim2026tabular}.
    \item \textit{Classical survival models:} CoxPH~\citep{cox1972regression}, CoxNet~\citep{simon2011regularization}, cSVR~\citep{polsterl2015fast}. 
    \item \textit{Tree-based models:} GB~\citep{ridgeway1999state}, CWGB~\citep{hothorn2006survival}, and RSF~\citep{ishwaran2008random}.
    \item \textit{Neural discrete-time models:} DeepHit~\citep{lee2018deephit}, DeepSurv~\citep{katzman2018deepsurv}, MTLR~\citep{yu2011learning, fotso2018deep}, Nnet-survival~\citep{biganzoli1998feed, gensheimer2019scalable}, CoxTime~\citep{kvamme2019time}, IWSG~\citep{han2021inverse}, CQRNN~\citep{pearce2022censored}, and BNN-MTLR~\citep{qi2023using}. 
    \item \textit{Neural continuous-time models:} DSM~\citep{nagpal2021dsm}, SuMoNet~\citep{rindt2022survival}, SurvivalMDN~\citep{han2022survival}, DeepAFT-Weibull~\citep{norman2024deepaft}, and DeepAFT-Loglogistic~\citep{norman2024deepaft}. 
\end{itemize}


\textbf{Metrics.}
We evaluate models using five metrics (Appendix~\ref{app:metrics}): IPCW-adjusted integrated Brier score (IBS; probabilistic accuracy)~\citep{graf1999assessment},  concordance index (CI; discrimination)~\citep{harrell1996multivariable}, D-calibration (distributional calibration)~\citep{haider2020effective}, median-time mean absolute error (MAE; time prediction)~\citep{qi2023effective}, and log-rank reliability (agreement with observed time-to-event outcomes).

\textbf{Experimental Protocol.}
For each dataset, we conduct 10 repeated experiments using independent 70\%/30\% train/test splits.
We report the mean $\pm$ standard deviation across the 10 repetitions. 
For aggregate comparison, models are ranked separately within each dataset and metric, with rank 1 assigned to the best-performing method. 
Appendices~\ref{app:compute}-\ref{app:benchmark_protocol} provide further details.

\textbf{RQ1: Predictive Performance.}
Figure~\ref{fig:main_results} summarizes model ranks across all benchmark datasets, with better-performing methods appearing farther to the right. 
SurvivalPFN achieves the strongest overall rank among the compared methods, indicating robust performance across the full benchmark suite. 
Metric-wise, SurvivalPFN is among the leading methods for IBS, MAE, and Log-Rank, showing strong probabilistic survival prediction, accurate time estimation, and strong agreement with observed time-to-event outcomes. 
It also remains competitive on CI and D-calibration, although several other baselines (\eg, RSF, CWGB, DeepSurv) rank higher in these metric-specific rankings. 

Additional stratified results (based on sample size and censoring rate) are provided in Appendix~\ref{app:rq1}. There, SurvivalPFN shows its largest advantage on small datasets (Figure~\ref{fig:best_model_small_data}), while its relative performance decreases as dataset size increases (Figures~\ref{fig:best_model_mid_data} and \ref{fig:best_model_large_data}).
This behavior is consistent with the \textit{motivation of amortized Bayesian survival prediction}: when each downstream dataset contains limited observations, SurvivalPFN can leverage the inductive structure learned during prior-data pretraining rather than fitting a flexible survival model from scratch.
In contrast, across low-, medium-, and high-censoring regimes (Figures~\ref{fig:best_model_low_censor}-\ref{fig:best_model_high_censor}), SurvivalPFN remains consistently strong.
\begin{figure}
    \centering
    \includegraphics[width=\linewidth]{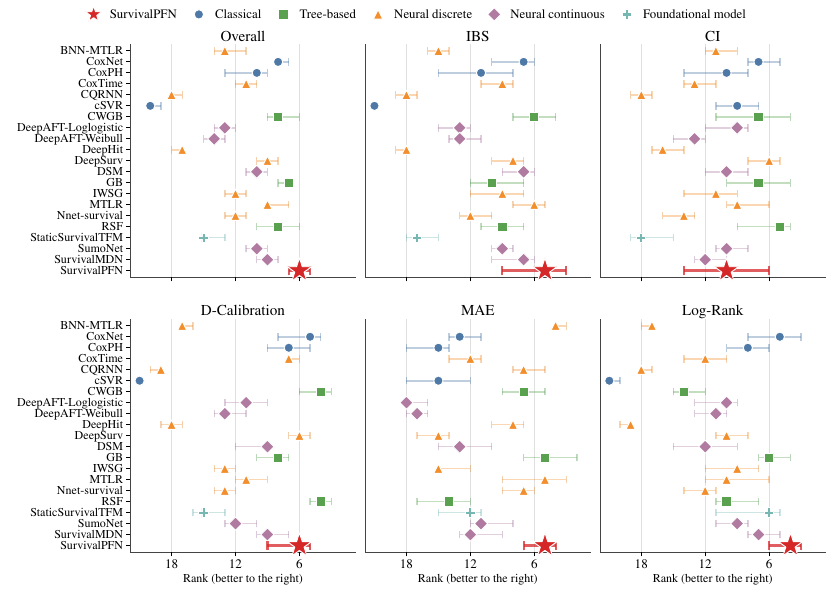}
    \caption{\textbf{Model ranks across 61 benchmark datasets.} Points/stars denote median ranks across datasets, with horizontal bars showing 95\% bootstrap confidence intervals for the median rank. }
    \label{fig:main_results}
\end{figure}

\textbf{RQ2: Computational Efficiency.} 
Figure~\ref{fig:teaser} compares the computational efficiency of SurvivalPFN with all baselines.
We report the total training-plus-inference time across datasets, excluding hyperparameter-tuning time for neural-network-based methods.
SurvivalPFN is highly efficient: it is only modestly slower than CoxNet, the fastest baseline, while achieving the best overall ranking across the five evaluation metrics.
By contrast, tree-based and neural-network-based methods require substantially greater computation.

\begin{wrapfigure}{r}{7.5cm}
   \vspace{-20pt}
    \centering
    \includegraphics[width=\linewidth]{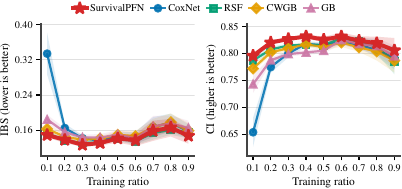}
    \caption{\textbf{Performance on the PBC dataset for SurvivalPFN and top-performing models.} Shaded regions denote standard errors over 10 repeated runs.}
    \label{fig:diff_ratio_main}
    \vspace{-10pt}
\end{wrapfigure}
\textbf{RQ3: Sensitivity to Training-Set Size.}
Figure~\ref{fig:diff_ratio_main} shows how model performance changes as the fraction of training data varies. 
SurvivalPFN shows stable predictive performance across split ratios, maintaining consistently low IBS and high CI even when only a small fraction of the data is used for training. 
In contrast, several baselines are more sensitive to limited training data: CoxNet, GB, and CWGB perform poorly at the smallest ratio and improve markedly as more data become available. 
These trends suggest that SurvivalPFN is comparatively robust in low-data regimes.
Appendix~\ref{app:rq3} and Figure~\ref{fig:diff_ratio_appendix} contain results for more datasets.

\textbf{RQ4: Comparison with General TFMs.}
Figure~\ref{fig:tfm_main} compares SurvivalPFN with general-purpose tabular foundational regressors by training them only on uncensored instances. 
SurvivalPFN achieves the best overall rank and is consistently the top-ranked method across all metrics. This suggests that directly adapting generic TFMs to survival outcomes is insufficient: explicitly pretraining on identifiable right-censored survival tasks yields substantially more reliable survival prediction.

\textbf{RQ5: Ablation Studies.}
Due to space constraints, Appendix~\ref{app:rq5} reports how prior design, query schedules, monotone transformations, and objective functions affect SurvivalPFN performance.

\begin{wrapfigure}{r}{7.5cm}
   \vspace{-40pt}
    \centering
    \includegraphics[width=\linewidth]{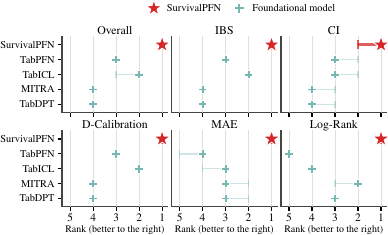}
    \vspace{-10pt}
    \caption{\textbf{Comparison of SurvivalPFN with selected general TFMs across 61 benchmark datasets.} Plotting conventions follow Figure~\ref{fig:main_results}.}
    \label{fig:tfm_main}
    \vspace{-10pt}
\end{wrapfigure}
\section{Related Work}
\label{sec:related_work}
\textbf{Classical and Deep Survival Analysis.}
A broad range of estimators has been developed for right-censored data. 
CoxPH~\citep{cox1972regression} relies on proportional hazards and linear covariate effects, while fully parametric models such as exponential~\citep{mendenhall1958estimation} and Weibull~\citep{peto1973weibull} models impose explicit distributional forms.
Modern machine-learning methods improve flexibility, but introduce other assumptions. 
Neural Cox-based models, such as DeepSurv~\citep{katzman2018deepsurv} and CoxTime~\citep{kvamme2019time}, relax linear covariate effects but retain Cox-style hazard modeling. 
Discrete-time models, including MTLR~\citep{yu2011learning,fotso2018deep} and DeepHit~\citep{lee2018deephit,lee2019dynamic}, depend on a chosen time grid and may become overparameterized with many bins. 
Continuous-time neural models are more flexible, but still impose structure through parametric mixtures~\citep{nagpal2021dsm}, latent-variable models~\citep{ranganath2016deep,miscouridou2018deep}, monotonic density estimators~\citep{rindt2022survival,chilinski2020neural}, or neural ODE hazards~\citep{groha2020general,tang2022soden}.

\textbf{Bayesian Survival Analysis.} Bayesian survival models quantify uncertainty by placing priors over model parameters and integrating over posterior uncertainty. Recent neural variants include BNN-ISD, which uses Bayesian neural networks to obtain credible intervals and supports feature selection~\citep{qi2023using}; Bayesian LSTM-SURV, which combines the survival likelihood with Bayesian mixed-effects updating under a Weibull parametric form~\citep{gao2025neural}; and NeuralSurv, which uses variational inference for Bayesian deep survival prediction~\citep{monod2025neuralsurv}. While these methods demonstrate the value of Bayesian uncertainty quantification, they remain tied to method-specific assumptions and per-dataset inference or updating.

\textbf{Concurrent Work on TFMs for Survival Analysis.}
Two concurrent works also explore TFMs for survival analysis. 
\citet{kim2026tabular} convert survival prediction to a sequence of binary classification tasks over discretized time points, enabling off-the-shelf TFMs without survival-specific pretraining. 
This simple reduction avoids new pretraining, but expands each instance into many time-indexed examples; as a result, context length scales with the number of bins, larger datasets require subsampling, and performance can suffer when the TFM sees only a compressed view of the risk set. 
\citet{seletkov2026survival} instead pretrain a survival-specific in-context model on synthetic data from parametric extended-hazard mechanisms. 
This is closer to SurvivalPFN, but its prior is limited to parametric hazard families and random censoring ($E \perp C$), potentially limiting broader use.
In contrast, SurvivalPFN uses a broader family of identifiable right-censored DGPs, supports covariate-dependent censoring under conditional independence, avoids explicit parametric structure, and provides a posterior-predictive consistency argument. 
We include \citet{kim2026tabular}'s static formulation as \textsc{StaticSurvivalTFM}; SIC is not directly compared because public weights/code are not yet available.

\section{Conclusions, Limitations, and Future Work}
\label{sec:conclusion}

We introduced SurvivalPFN, a prior-data fitted network for amortized Bayesian survival prediction from right-censored data. 
By pretraining on diverse, identifiable survival data-generating processes, SurvivalPFN produces posterior predictive survival distributions in a single forward pass, without dataset-specific training or hyperparameter tuning. 
Across 61 real-world datasets, SurvivalPFN achieves strong overall performance while remaining computationally efficient, suggesting that PFN-style in-context learning is a promising foundation for flexible survival prediction. 
Because survival models already inform high-stakes decisions -- from clinical risk scores~\citep{lip2010refining} to resource-allocation policies~\citep{kim2021meld} -- we believe that advances in accuracy, efficiency, and uncertainty quantification for survival models can generate substantial human benefit.

Several limitations remain. 
First, SurvivalPFN relies on conditional independent censoring to preserve identifiability; under dependent censoring, the event-time distribution is not nonparametrically identifiable from observed data alone, and our method is not valid. 
Extending SurvivalPFN to identifiable dependent-censoring priors, such as copula methods~\citep{gharari2023copula,zhang2024deep}, is an important direction. 
Additionally, SurvivalPFN inherits the size-scalability trade-off of PFN-style models, with reduced relative performance on larger tables~\citep{hollmann2025accurate,balazadeh2025causalpfn}; improving long-context inference is left for future work.

\bibliographystyle{plainnat}
\bibliography{neurips_2026}


\newpage
\appendix

\addtocontents{toc}{\protect\setcounter{tocdepth}{2}}

\renewcommand{\contentsname}{Appendix Contents}
\tableofcontents

\newpage

\section{Notation}
\label{app:notation}

Table~\ref{tab:notation-summary} summarizes all the notation used throughout the paper.
Following the convention, we use uppercase letters for random variables and lowercase letters for realizations.

\begin{longtable}{p{0.27\linewidth}p{0.67\linewidth}}
\caption{Summary of notation in the paper.}
\label{tab:notation-summary}
\\
\toprule
\textbf{Notation} & \textbf{Definition} \\
\midrule
\endfirsthead

\toprule
\textbf{Notation} & \textbf{Definition} \\
\midrule
\endhead

\midrule
\multicolumn{2}{r}{\emph{Continued on next page}} \\
\endfoot

\bottomrule
\endlastfoot

$\mathcal{X}$, $\mathbb{R}_{+}$
& Covariate space and nonnegative time domain. \\

$i \in \{1,\ldots,N\}$
& Index for an individual observation. \\

$d$
& Number of covariates/features. \\

$X \in \mathcal{X}$
& Covariates. \\

$E \in \mathbb{R}_{+}$
& Latent event time of interest. \\

$C \in \mathbb{R}_{+}$
& Latent censoring time. \\

$T = \min(E,C)$
& Observed follow-up time. \\

$\Delta = \mathbbm{1}\{E \le C\}$
& Event indicator: $\Delta=1$ for observed event and $\Delta=0$ for right-censored. \\

$(x_i,t_i,\delta_i)$
& Observed right-censored tuple. \\

$(e_i,c_i)$
& Latent event and censoring times. \\

$\mathscr{D}$
& Random observed dataset. \\

$\data=\{(x_i,t_i,\delta_i)\}_{i=1}^{N}$
& Realization of $\mathscr{D}$. \\

$\data^{tr}_{\theta}$, $\data^{te}_{\theta}$
& Synthetic context/training set and query/test set sampled from DGP parameter $\theta$ during prior-data pretraining. \\

$\omega$ 
& Trainable parameters of SurvivalPFN. \\

$\theta \in \Theta$
& Latent parameter indexing a survival data-generating process. \\

$\theta^\ast$
& True DGP parameter for a downstream dataset. \\

$\pi(\cdot)$
& Prior distribution over survival DGPs used to generate synthetic training tasks. \\

$P^{\theta}$
& Joint law of $(X,E,C,T,\Delta)$ under DGP parameter $\theta$. \\

$P^{\theta}_{\mathrm{obs}}$
& Observational law of $(X,T,\Delta)$ induced by $P^{\theta}$. \\

$f^{\theta}_{E\mid X}(t\mid x)$
& Conditional density function for event time, under $\theta$. \\

$F^{\theta}_{E\mid X}(t\mid x)$
& Conditional CDF for event time, $P^{\theta}(E\leq t\mid X=x)$. \\

$S^{\theta}_{E\mid X}(t\mid x)$
& Conditional survival function for event time, $P^{\theta}(E>t\mid X=x)=1-F^{\theta}_{E\mid X}(t\mid x)$. \\

$\lambda^{\theta}_{E \mid X}(t\mid x)$
& Conditional hazard function,
$\lambda^{\theta}_{E}(t\mid x)=
f^{\theta}_{E\mid X}(t\mid x)/
S^{\theta}_{E\mid X}(t\mid x)$. \\

$S^{\theta}_{C\mid X}(t\mid x)$
& Conditional survival function for censoring time, $P^{\theta}(C\ge t\mid X=x)$. \\

$\lambda^{\theta}_{C\mid X}(t\mid x)$
& Conditional censoring hazard. \\

$f_{E\mid X,\mathscr{D}}(t\mid x,\data)$
& Posterior predictive event-time distribution. \\

$S_{E\mid X,\mathscr{D}}(t\mid x,\data)$
& Posterior predictive survival distribution (PPSD). \\

$x^\ast$
& Query covariates for a new individual. \\

$\widetilde{\delta}^\ast$
& Query indicator supplied to SurvivalPFN. \\

$q_{\omega}(\cdot\mid x^\ast,\widetilde{\delta}^\ast,\data)$
& SurvivalPFN's predictive distribution over discretized time bins. \\

$L$
& Number of time bins used to represent the predictive distribution. \\

$\{\mathcal I_\ell\}_{\ell=1}^{L}$
& SurvivalPFN's transformed-time bins used to represent the discretized predictive distribution. \\

$\mathcal{G}=\{t_1<\cdots<t_m\}$
& Discrete time grid for benchmarking discrete-time survival models. \\

$\mathrm{CV}(T)$
& Coefficient of variation of observed times:
$\mathrm{CV}(T)=\operatorname{sd}(T)/\mathbb{E}[T]$, when used in DGP diagnostics. 

\end{longtable}

\section{Identifiability and Non-identifiability under Right Censoring}
\label{app:identifiable}

In the context of survival analysis and causal inference, identifiability is the prerequisite for learning. If a model is non-identifiable, infinite data cannot distinguish between multiple underlying ``truths'' (\eg, whether a drug works or if patients are simply dropping out due to side effects). Without identifiability, no consistent estimator exists, and any conclusions drawn from the data rely entirely on untestable assumptions rather than empirical evidence.

\citet[Theorem 2]{tsiatis1975nonidentifiability} first proved that the latent joint distribution of event time $E$ and censoring time $C$ is not identifiable from the (infinite) observed data $(T, \Delta)$ without additional assumptions.
Formally, we restate the theorem using the notation that is consistent within this paper, by considering $E$ and $C$ as two competing events:
\begin{theorem}[Non-identifiability; Marginal]
Let $S_{E, C}(e, c) = P(\, E > e, C > c \,)$ be an arbitrary joint survival function where $E$ and $C$ are dependent. There exists a different joint survival function $S_{E, C}^\ast(e, c)$, constructed such that $E$ and $C$ are independent -- $S_{E, C}^\ast(e, c) = S_E^\ast(e) \, S_C^\ast(c)$ -- which generates the exact same observed data distribution $P(T, \Delta)$.
\end{theorem}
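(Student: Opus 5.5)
The plan is to construct $S^\ast$ from the \emph{crude} (sub-distribution) hazards, which are the only quantities the observed law of $(T,\Delta)$ depends on. I would work under the usual regularity conditions of \citet{tsiatis1975nonidentifiability}: the joint law of $(E,C)$ is absolutely continuous and $P(E=C)=0$.

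The observed law is determined by the two sub-densities
\begin{align*}
g_1(t) &= \frac{d}{dt}P(T\le t,\Delta=1) = -\frac{\partial}{\partial e} S_{E,C}(e,c)\Big|_{e=c=t},\\
g_0(t) &= \frac{d}{dt}P(T\le t,\Delta=0) = -\frac{\partial}{\partial c} S_{E,C}(e,c)\Big|_{e=c=t}.
\end{align*}
The first step is to verify these identities. For $g_1$, write $P(t<E\le t+h,\,C>E)$ and let $h\to0$; for $g_0$, argue symmetrically. The key consequence is that the observed law is a functional of $S_{E,C}$ only through its behaviour on the diagonal $e=c$. Together with $S_T(t)=S_{E,C}(t,t)$, this gives crude hazards $h_j(t)=g_j(t)/S_T(t)$ for $j\in\{0,1\}$, which satisfy $h_1+h_0=-\tfrac{d}{dt}\log S_T$.

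The second step defines the independent candidate by
\begin{align*}
S_E^\ast(e) = \exp\Bigl(-\int_0^e h_1(u)\,du\Bigr), \qquad S_C^\ast(c) = \exp\Bigl(-\int_0^c h_0(u)\,du\Bigr),
\end{align*}
and sets $S^\ast_{E,C}(e,c)=S_E^\ast(e)\,S_C^\ast(c)$. This is a valid joint survival function of independent variables, since each factor is nonincreasing, right-continuous, and equals $1$ at $0$. If $\int h_j=\infty$ fails, so that a factor has mass at $+\infty$, I would allow improper or defective marginals, or note that this matches the mass of $E>C$-type tails on the original scale.

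The third step checks that the observed laws agree. On the diagonal, $S^\ast_{E,C}(t,t)=\exp\bigl(-\int_0^t(h_0+h_1)\bigr)=S_T(t)$. Differentiating in the first argument gives $-\partial_e S^\ast(t,t)=h_1(t)\,S_T(t)=g_1(t)$, and likewise $g_0$ is reproduced. Hence the sub-densities coincide, and so does $P(T\le t,\Delta=\delta)$ for all $t$ and $\delta$. By construction $S^\ast$ factorizes, while $S_{E,C}$ is dependent, so the two are genuinely different.

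I expect the main obstacle to be the first step: rigorously justifying the diagonal-derivative identities. This requires the limit exchange, $P(E=C)=0$, and the existence of the partial derivatives of $S_{E,C}$ at the diagonal. The point needing care is that $S_T(t)>0$ so that the $h_j$ are defined; beyond the support of $T$, I would define $S^\ast$ arbitrarily, because it does not affect the observed law.
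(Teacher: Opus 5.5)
Your proposal is correct and is essentially the argument the paper gives in its proof of the conditional version, Theorem~\ref{thm:nonident_conditional}, which reduces to this statement when $x$ is dropped: build independent marginals from the observable crude hazards $-\tfrac{d}{dt}\Phi/(\Phi+\Psi)$ and $-\tfrac{d}{dt}\Psi/(\Phi+\Psi)$, then check that their product reproduces the observed sub-distributions. The only difference is that you verify through diagonal partial derivatives of the joint survival function, whereas the paper works directly with the observed sub-survival functions and the independence formula $\Phi^{\mathrm{ind}}(t)=\int_t^\infty f^{\mathrm{ind}}_E(u)\,S^{\mathrm{ind}}_C(u)\,du$, so the step you flag as the main obstacle can be bypassed; your caveat that the proxy marginals may be defective is a point the paper's proof does not address.
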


This means, without the assumption of random censoring ($E \perp C$), the marginal survival function $S_{E\mid X}(t)$ cannot be uniquely determined.
This theorem can be easily extend to the conditional setting:
\begin{theorem}[Non-identifiability; Conditional]
\label{thm:nonident_conditional}
Let $X$ be a set of covariates. Let the true conditional joint survival function be $S_{E, C \mid X}(e, c \mid x) = P(E > e, C > c \mid X=x)$, where $E \not\perp C \mid X$.
For any such dependent model, there exists a valid conditional independent model $S_{E, C \mid X}^\ast(e, c \mid x) = S_{E\mid X}^\ast(e \mid x) S_{C\mid X}^\ast(c \mid x)$ such that the observable distributions of $(T, \Delta \mid X)$ are identical.
\end{theorem}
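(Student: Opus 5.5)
We will prove Theorem~\ref{thm:nonident_conditional} by running the classical Tsiatis construction pointwise in $x$. The idea is that the conditional law of $(T,\Delta)$ given $X=x$ is pinned down by two cause-specific (crude) sub-distribution functions. We then build an independent model whose net marginals reproduce exactly those crude quantities. Since the law of $X$ is left unchanged, equality of the conditional observable laws for (almost) every $x$ gives equality of the joint law of $(X,T,\Delta)$.

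\textbf{Step 1: reduce the observed law to crude hazards.} Fix $x$. The law of $(T,\Delta)\mid X=x$ is characterized by the two sub-distributions $F_1(t\mid x)=P(T\le t,\Delta=1\mid x)$ and $F_0(t\mid x)=P(T\le t,\Delta=0\mid x)$. Equivalently, it is characterized by the overall survival $S_T(t\mid x)=S_{E,C\mid X}(t,t\mid x)$ together with the crude cumulative hazards
\begin{align*}
\Lambda_1(t\mid x)=\int_{[0,t]}\frac{dF_1(s\mid x)}{S_T(s^-\mid x)},\qquad
\Lambda_0(t\mid x)=\int_{[0,t]}\frac{dF_0(s\mid x)}{S_T(s^-\mid x)} .
\end{align*}

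\textbf{Step 2: construct the independent model.} Define $S^\ast_{E\mid X}(\cdot\mid x)=\prod_{(0,t]}(1-d\Lambda_1(s\mid x))$ and $S^\ast_{C\mid X}(\cdot\mid x)=\prod_{(0,t]}(1-d\Lambda_0(s\mid x))$, using the product integral. In the absolutely continuous case these reduce to $\exp(-\Lambda_j)$. Then set $S^\ast_{E,C\mid X}=S^\ast_{E\mid X}S^\ast_{C\mid X}$. Each factor is nonincreasing and right-continuous with values in $[0,1]$, because the crude hazard jumps satisfy $\Delta\Lambda_j\le 1$. It follows that each factor, and hence the product, is a valid (possibly defective, mass at $\infty$) survival function. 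We also need measurability in $x$, which holds because $F_0$, $F_1$ and $S_T$ are measurable regular conditional quantities.

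\textbf{Step 3: verify the observed laws match.} Under the starred model, conditional independence gives
\begin{align*}
P^\ast(T\le t,\Delta=1\mid x)=\int_{[0,t]} S^\ast_{C\mid X}(s^-\mid x)\,\bigl(-dS^\ast_{E\mid X}(s\mid x)\bigr),
\end{align*}
with the analogous formula for $\Delta=0$. In the continuous case the check is direct. We have $S^\ast_T=S^\ast_ES^\ast_C=\exp(-\Lambda_1-\Lambda_0)$. The original model also satisfies $S_T=\exp(-\Lambda_1-\Lambda_0)$, because $-dS_T=dF_1+dF_0$. Consequently $dF_1^\ast=S^\ast_T\,d\Lambda_1=S_T\,d\Lambda_1=dF_1$, and the same argument gives $dF_0^\ast=dF_0$.

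\textbf{Main obstacle.} The difficult part is handling common atoms of $E$ and $C$ together with the tie convention $\Delta=\mathbbm 1[E\le C]$. If $E$ and $C$ jump at the same time $s$, independence gives $P^\ast(E=C=s)>0$, and that mass is assigned to $\Delta=1$. As a result, the identity $S_T=\prod(1-d\Lambda_1-d\Lambda_0)$ no longer equals $\prod(1-d\Lambda_1)\prod(1-d\Lambda_0)$. I would handle this in one of two ways. The first option is to assume continuous (or no-common-atom) conditional distributions, as in Tsiatis. The second is to modify the censoring hazard at shared atoms to $d\Lambda_0/(1-d\Lambda_1)$, so that ties are correctly attributed to events, and then recheck Step 3 by direct telescoping of the product integrals. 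I expect to present the continuous case in full and remark on this modification.

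Finally, the construction also shows that $S^\ast_{E\mid X}$ generally differs from the true marginal $S_{E\mid X}$ when $E\not\perp C\mid X$. This is exactly the non-identifiability conclusion.
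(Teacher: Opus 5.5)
Your proposal is correct and takes the same route as the paper. The paper also runs Tsiatis's construction pointwise in $x$: it defines the proxy hazards $-\partial_t\Phi/(\Phi+\Psi)$ and $-\partial_t\Psi/(\Phi+\Psi)$, takes $S^{\mathrm{ind}}_{E\mid X}$ and $S^{\mathrm{ind}}_{C\mid X}$ as exponentials of their integrals, and checks $\Phi^{\mathrm{ind}}=\Phi$ via $S^{\mathrm{ind}}(u,u\mid x)=\Phi(u\mid x)+\Psi(u\mid x)$, which is exactly your identity $S^\ast_T=S_T=\exp(-\Lambda_1-\Lambda_0)$ that the paper only asserts. Your product-integral version with the tie-corrected censoring hazard $d\Lambda_0/(1-\Delta\Lambda_1)$ is a correct extension beyond the continuous (differentiable) case the paper implicitly assumes.
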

\begin{proof}
While the proof for this conditional non-identifiability is straightforward via following~\citet{tsiatis1975nonidentifiability}'s step, we present the proof for completeness.

Let the observed data be characterized by the conditional sub-survival functions:
\begin{align*}
    \Phi(t \mid x) \ &= \ P(T > t, \Delta = 1 \mid x) \\
    \Psi(t \mid x) \ &= \ P(T > t, \Delta = 0 \mid x)
\end{align*}
These functions completely describe the likelihood of the observed data.

We construct a ``proxy'' independent world, denoted by a superscript $^\mathrm{ind}$, by defining its conditional hazards to match the observed cause-specific hazards of the original world:
\begin{align*}
    \lambda_{E\mid X}^{\mathrm{ind}}(t \mid x) \ &= \ \lim_{d t \rightarrow 0} \frac{\Pr^\mathrm{ind}(t\leq T < t + d t , \Delta=1\mid T \geq t, X=x)}{d t} \ = \ \frac{-\frac{\partial}{\partial t} \Phi(t \mid x)}{\Phi(t \mid x) + \Psi(t \mid x)} \\
    \lambda_{C\mid X}^\mathrm{ind}(t \mid x) \ &= \ \lim_{d t \rightarrow 0} \frac{\Pr^\mathrm{ind}(t\leq T < t + d t , \Delta=0\mid T \geq t, X=x)}{d t} \ = \ \frac{-\frac{\partial}{\partial t} \Psi(t \mid x)}{\Phi(t \mid x) + \Psi(t \mid x)}
\end{align*}
We define the marginal survival functions in the proxy world as:
\begin{align*}
    S_{E\mid X}^\mathrm{ind}(t \mid x) \ &= \ \exp\left( -\int_0^t \lambda_{E\mid X}^\mathrm{ind}(u \mid x) du \right), \\
    S_{C\mid X}^\mathrm{ind}(t \mid x) \ &= \ \exp\left( -\int_0^t \lambda_{C\mid X}^\mathrm{ind}(u \mid x) du \right).
\end{align*}

And the joint distribution (with conditional independence):
\begin{align*}
    S_{E, C \mid X}^\mathrm{ind}(e, c \mid x) \ = \ S_{E\mid X}^\mathrm{ind}(e \mid x) \, S_{C\mid X}^\mathrm{ind}(c \mid x)
\end{align*}
To verify that this proxy world generates the same data $\Phi(t \mid x)$, we calculate the probability of observing an event in the proxy world:
\begin{align*}
    \Phi^\mathrm{ind}(t \mid x) &= \int_t^\infty f_{E\mid X}^\mathrm{ind}(u \mid x) S_{C\mid X}^\mathrm{ind}(u \mid x) du \\
    &= \int_t^\infty \lambda_{E\mid X}^\mathrm{ind}(u \mid x) S_{E\mid X}^\mathrm{ind}(u \mid x) S_{C\mid X}^\mathrm{ind}(u \mid x) du \\
    &= \int_t^\infty \lambda_{E\mid X}^\mathrm{ind}(u \mid x) S^\mathrm{ind}(u, u \mid x) du
\end{align*}
Substituting the definitions of $\lambda_{E\mid X}^\mathrm{ind}$ and noting that $S^\mathrm{ind}(u, u \mid x) = \Phi(u \mid x) + \Psi(u \mid x)$ (the overall survival probability matches the sum of sub-survival functions):
\begin{align*}
    \Phi^\mathrm{ind}(t \mid x) &= \int_t^\infty \left( \frac{-\frac{\partial}{\partial u} \Phi(u \mid x)}{\Phi(u \mid x) + \Psi(u \mid x)} \right) (\Phi(u \mid x) + \Psi(u \mid x)) du \\
    &= \int_t^\infty -\frac{\partial}{\partial u} \Phi(u \mid x) du \\
    &= \Phi(t \mid x)
\end{align*}
Since $\Phi^\mathrm{ind}(t \mid x) = \Phi(t \mid x)$ (and by symmetry $\Psi^\mathrm{ind}(t \mid x) = \Psi(t \mid x)$), the independent model $S^\mathrm{ind}$ is indistinguishable from the true dependent model $S$.
\end{proof}

While Theorem~\ref{thm:nonident_conditional} states that we cannot distinguish \textit{dependent} from \textit{independent} censoring using data alone, the corollary below establishes that if we are willing to assume independence (either marginal or conditional), the latent event distribution becomes identifiable.

\begin{corollary}[Identifiability under Independence]
Suppose we restrict our attention to the class of models that satisfy conditional independent censoring, $E \perp C \mid X$ (including $E\perp C$).
Then, the marginal survival function of the event, $S_{E\mid X}(t \mid x)$, is uniquely identifiable from the observed data distribution.
That is, if two models $S$ and $S^\mathrm{ind}$ both satisfy conditional independence but have different event marginals ($S_{E\mid X} \neq S_{E\mid X}^\mathrm{ind}$), they must generate distinct observed data distributions.
\end{corollary}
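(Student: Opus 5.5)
Under conditional independent censoring, the observed law of $(X,T,\Delta)$ determines the conditional cause-specific sub-survival functions $\Phi(t\mid x)=P(T>t,\Delta=1\mid x)$ and $\Psi(t\mid x)=P(T>t,\Delta=0\mid x)$, as in the proof of Theorem~\ref{thm:nonident_conditional}. The plan is to show the converse map: from $(\Phi,\Psi)$ one can recover $S_{E\mid X}$ uniquely on the positivity region $\mathcal{T}$. Two models with the same observed law then share $(\Phi,\Psi)$ and hence share $S_{E\mid X}$. Contrapositively, distinct event marginals force distinct observed laws.

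\textbf{Step 1 (observed quantities in terms of latent ones).} For any model with $E\perp C\mid X$, factor the joint survival as $P(E>e,C>c\mid x)=S_{E\mid X}(e\mid x)S_{C\mid X}(c\mid x)$. This gives the at-risk function
\[
\Phi(t\mid x)+\Psi(t\mid x)=P(T>t\mid x)=S_{E\mid X}(t\mid x)\,S_{C\mid X}(t\mid x).
\]
It also gives
\[
-\tfrac{\partial}{\partial t}\Phi(t\mid x)=f_{E\mid X}(t\mid x)\,S_{C\mid X}(t\mid x),
\]
exactly the integrand used in the computation of $\Phi^{\mathrm{ind}}$ earlier. To avoid the strict/weak inequality ambiguity in $S_{C\mid X}(t\mid x)=P(C\ge t\mid x)$, I would assume continuous distributions, or else work with the left limit $P(T\ge t\mid x)$ throughout.

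\textbf{Step 2 (recover the hazard).} Dividing the two identities gives
\[
\lambda_{E\mid X}(t\mid x)=\frac{-\partial_t\Phi(t\mid x)}{\Phi(t\mid x)+\Psi(t\mid x)}
\]
wherever the denominator is positive. The censoring survival factor cancels, so the event hazard is a functional of the observed law alone. Integrating,
\[
S_{E\mid X}(t\mid x)=\exp\Bigl(-\int_0^t \lambda_{E\mid X}(u\mid x)\,du\Bigr).
\]
In the general, non-continuous case one uses the product-integral $\prod_{[0,t]}(1-d\Lambda)$ with $d\Lambda=-d\Phi/P(T\ge u\mid x)$. Thus $S_{E\mid X}$ is a deterministic functional of $(\Phi,\Psi)$, for almost every $x$ (conditional laws are defined up to $P_X$-null sets).

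\textbf{Step 3 (the main obstacle: where the division is legitimate).} The hazard is only recovered where $P(T\ge t\mid x)=S_{E\mid X}(t\mid x)S_{C\mid X}(t\mid x)>0$. Positivity guarantees $S_{C\mid X}>0$ on $\mathcal{T}$. I would handle the two remaining cases as follows:
\begin{itemize}
\item If $S_{E\mid X}(t_0\mid x)=0$ at some $t_0\in\mathcal{T}$, the recovered cumulative hazard diverges as $t\uparrow t_0$. Both models then have $S_{E\mid X}=0$ from that point on, so they agree there too.
\item The conclusion is therefore uniqueness on $\mathcal{T}$, the ``identifiable support'' of the statement. Beyond the support of $C$, the event marginal is genuinely not determined.
\end{itemize}
Since the corollary is phrased without restricting $t$, I would state explicitly that the claim holds for $t\in\mathcal{T}$ under the positivity assumption.
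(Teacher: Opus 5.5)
Your proposal is correct and follows essentially the same route as the paper: under $E\perp C\mid X$ the observable cause-specific hazard $-\partial_t\Phi/(\Phi+\Psi)$ equals the net event hazard, and that hazard then determines $S_{E\mid X}$ through the exponential (or product-integral) formula. Your explicit cancellation of the censoring survival factor spells out the step the paper attributes to ``standard survival theory.'' Your restriction of the conclusion to the positivity region $\mathcal{T}$ is a needed refinement, because the paper's final step (``different hazards imply different observed data distributions'') implicitly requires $P(T\ge t\mid x)>0$.
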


\begin{proof}
We prove this by contradiction.
Let $\lambda_{E\mid X}(t \mid x)$ denote the \textit{cause-specific} hazard derived purely from the data $(X, T, \Delta)$:
\begin{align*}
    \lambda_{E\mid X}(t \mid x) \ &= \ \lim_{d t \rightarrow 0} \frac{P(t\leq T < t + d t , \Delta=1 \mid T \geq t, X=x)}{d t}
\end{align*}
Let $\lambda(t \mid x)$ denote the \textit{net} hazard of the event of interest:
\begin{align*}
    \lambda(t \mid x)  \ &= \ \lim_{d t \rightarrow 0} \frac{P(t\leq E < t + d t \mid E \geq t, X=x)}{d t}
\end{align*}
Under the assumption of conditional independence ($E \perp C \mid X$), standard survival theory dictates that the net hazard is equal to the cause-specific hazard: $\lambda_{E\mid X}(t \mid x) = \lambda(t \mid x)$.
Since the hazard function uniquely defines the survival function via $S_{E\mid X}(t \mid x) = \exp(-\int_0^t \lambda_{E\mid X}(u \mid x) du)$, the latent distribution $S_{E\mid X}$ is uniquely determined by the observed function $\lambda_{E\mid X}$.

Now, consider two models with different marginals, $S_{E\mid X}^{(1)}(t \mid x) \neq S_{E\mid X}^{(2)}(t \mid x)$.
This inequality implies their net hazards must differ: $\lambda^{(1)}(t \mid x) \neq \lambda^{(2)}(t \mid x)$.
By the equality derived above, their observed cause-specific hazards must also differ: $\lambda_{E\mid X}^{(1)}(t \mid x) \neq \lambda_{E\mid X}^{(2)}(t \mid x)$.
Different hazards imply different observed data distributions. Thus, the model is identifiable.
\end{proof}

These properties imply that while SurvivalPFN can be robustly trained under assumptions of marginal or conditional independence, it cannot learn dependent censoring mechanisms from observed data alone.

\section{Posterior-Predictive Consistency}
\label{app:survivalpfn_consistency}

This appendix formalizes Proposition~\ref{prop:informal-survival-consistency}.
The result concerns the Bayesian posterior predictive survival distribution
(PPSD) defined in \Eqref{eq:PPSD}. It shows that, under an identifiable
survival prior, the Bayesian PPSD is asymptotically consistent for the true
conditional event-time survival function. We then state the corresponding
idealized implication for SurvivalPFN when the transformer exactly amortizes
this Bayesian target.

\subsection{Notation and regularity assumptions}

Recall that a survival data-generating process (DGP) $P^\theta(X,E,C,T,\Delta)$ is indexed by $\theta\in\Theta$.
We write $P^\theta_{\mathrm{obs}}$ for the marginal distribution over the observable random variables $(X,T,\Delta)$. 
We still use $\pi(\cdot)$ to denote the prior over $\Theta$ induced by the
synthetic prior-data generator used to pretrain SurvivalPFN.

For each $\theta$, let
\[
    S_{E\mid X,\Theta}(t\mid x,\theta) \ := \ \Pr(E>t\mid X=x, \Theta = \theta)
\]
denote the conditional event-time survival function. For an observed dataset $\data=\{(X_i,T_i,\Delta_i)\}_{i=1}^N$, where $(X_i,T_i,\Delta_i)  \overset{\mathrm{i.i.d.}}{\sim} P^\theta_{\mathrm{obs}}$, and a query covariate vector $x^\ast$,
the Bayesian PPSD is (repeated from \Eqref{eq:PPSD})
\[
    S_{E\mid X,\mathscr{D}}(t\mid x^\ast,\data) \ = \ 
    \int_\Theta
    S_{E\mid X,\Theta}(t\mid x^\ast,\vartheta)
    f_{\Theta\mid\mathscr{D}}(\vartheta \mid \data)
    \,d\vartheta .
\]

\begin{assumption}[Regularity]
\label{assump:regularity}
We assume the following standard regularity conditions.
\begin{enumerate}
    \item $(\Theta,\mathcal{B}_\Theta)$ is a standard Borel parameter space.
    \item The maps $\theta\mapsto P^\theta$ and
    $\theta\mapsto P^\theta_{\mathrm{obs}}$ are measurable.
    \item The image set $\{P^\theta_{\mathrm{obs}}:\theta\in\Theta\}$ is a Borel subset
    of the space of probability measures over $(X,T,\Delta)$.
    \item For each time $t$ in the evaluation region, there exists a version of
    $S_{E\mid X,\Theta}(t\mid x^\ast,\theta)$ that is jointly measurable in
    $(x^\ast,\theta)$.
\end{enumerate}
\end{assumption}

These assumptions are technical rather than substantive. They ensure that
priors, posteriors, conditional expectations, and the quotient construction
below are well-defined. They are satisfied by the usual finite-dimensional
parameter spaces and measurable simulators used in statistical and machine
learning models.

\subsection{Observed-law equivalence and survival identifiability}

The full latent law $P^\theta(X,E,C,T,\Delta)$ is generally not identifiable
from right-censored observations. The data can identify only the observed law
$P^\theta_{\mathrm{obs}}$ over $(X,T,\Delta)$. We therefore group DGP parameters by
observational equivalence:
\[
    \theta_1 \ \sim \ \theta_2
    \qquad\Longleftrightarrow\qquad
    P^{\theta_1}_{\mathrm{obs}} \ = \ P^{\theta_2}_{\mathrm{obs}}.
\]
Let
\[
    \mathcal{Q} \ := \ \Theta/\!\sim
\]
denotes the set of observational quotient space (equivalence classes) induced by $\sim$,
and let $[\theta]\in\mathcal{Q}$ denote the equivalence class of $\theta$. 
Each class $[\theta]$ contains all latent survival DGPs that are indistinguishable.

\begin{definition}[Survival-identifiable prior]
\label{def:survival_identifiable_prior}
Fix a time $t$ in the evaluation region. We say that the prior $\pi$ is
\emph{survival-identifiable at time $t$} if there exists a measurable map
\[
    F_t: \mathcal{X}\times \mathcal{Q} \ \to \ [0,1]
\]
such that, for $\pi$-almost every $\theta$ and $P_X^\pi$-almost every $x$ (except where $\theta$ or $x$ has probability $0$),\footnote{In the following, we will just use the phrase ``every $\theta$'' and ``every $x$'' for simplicity.}
\[
    S_{E\mid X,\Theta}(t\mid x,\theta) \ = \ F_t(x,[\theta]).
\]
Equivalently, within the support of the prior, any two DGPs that induce the
same observational distribution over $(X,T,\Delta)$ must also induce the same conditional event-time survival function at time $t$.
\end{definition}

The conditional independent censoring and positivity assumptions discussed in
Section~\ref{sec:background} provide sufficient conditions for this
definition: under those assumptions, $S_{E\mid X}(t\mid x)$ is a functional of
the observed law $P(X,T,\Delta)$ on the identifiable time region.

\subsection{Consistency of the Bayesian PPSD}

\begin{proposition}[Formal consistency]
\label{prop:formal_consistency}
Fix a time $t$ in the evaluation region. Under
Assumption~\ref{assump:regularity}, there exist sets
$\mathcal{X}_0\subseteq\mathcal{X}$ and
$\Theta_0\subseteq\Theta$ with
\[
    P_X^\pi(\mathcal{X}_0) \ = \ 1,
    \qquad
    \pi(\Theta_0) \ = \ 1,
\]
such that for every $x^\ast \in\mathcal{X}_0$ and every
$\theta^\ast\in\Theta_0$, 
then
\begin{align}
\label{eq:consistency}
    S_{E\mid X,\mathscr{D}}(t\mid x^\ast,\data)
    \ \xrightarrow[N\to\infty]{\mathrm{a.s.}} \ 
    S_{E\mid X,\Theta}(t\mid x^*,\theta^\ast)
\end{align}
if and only if the prior $\pi$ is survival-identifiable at time $t$.

For any finite or countable evaluation grid
$\mathcal{G}$, the same result holds simultaneously for all
$t\in\mathcal{G}$ by intersecting the corresponding full-measure sets.
\end{proposition}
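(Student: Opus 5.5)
The plan is to reduce the statement to Doob's posterior consistency theorem applied on the observational quotient $\mathcal{Q}$. The key object is the conditional expectation $\mathbb{E}_\pi[\,S_{E\mid X,\Theta}(t\mid x^\ast,\Theta)\mid \mathscr{D}_N]$. For fixed $x^\ast$, this is exactly the Bayesian PPSD in \Eqref{eq:PPSD}. It is a bounded martingale in $N$ under the joint law of $(\Theta,\mathscr{D}_1,\mathscr{D}_2,\dots)$ induced by $\pi$. Assumption~\ref{assump:regularity}(4) makes the integrand jointly measurable, and the martingale convergence theorem gives a.s.\ convergence to $\mathbb{E}_\pi[S_{E\mid X,\Theta}(t\mid x^\ast,\Theta)\mid \mathscr{D}_\infty]$, where $\mathscr{D}_\infty=\sigma(\mathscr{D}_1,\mathscr{D}_2,\dots)$.

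The next step is to identify $\mathscr{D}_\infty$ with $\sigma([\Theta])$ modulo null sets. Since the observations are i.i.d.\ from $P^\Theta_{\mathrm{obs}}$, the empirical measures converge a.s.\ (Varadarajan) to $P^\Theta_{\mathrm{obs}}$. So $P^\Theta_{\mathrm{obs}}$, and hence $[\Theta]$, is $\mathscr{D}_\infty$-measurable. Here parts (1)--(3) of Assumption~\ref{assump:regularity} ensure the map $\theta\mapsto P^\theta_{\mathrm{obs}}$ is Borel onto a Borel image, so $\mathcal{Q}$ is identified with a standard Borel space. Conversely, given $[\Theta]$ the data are i.i.d.\ from a fixed law, so $\mathscr{D}_\infty\subseteq\sigma([\Theta])$ up to null sets. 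Therefore the limit equals
\[
G_t(x^\ast,[\Theta]) := \mathbb{E}_\pi\bigl[S_{E\mid X,\Theta}(t\mid x^\ast,\Theta)\,\big|\,[\Theta]\bigr],
\]
which is the posterior average over the true equivalence class, as in the informal sketch.

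The convergence holds $\pi\otimes P^\Theta$-a.s.\ jointly over $\Theta$ and the data. To get the quantifiers in the statement, the plan is to apply Fubini twice. The first application runs over a random query $x^\ast\sim P_X^\pi$, treated as an extra independent coordinate, which needs the joint measurability in (4). The second disintegrates over $\theta^\ast\sim\pi$. Together these give full-measure sets $\mathcal{X}_0,\Theta_0$ on which $S_{E\mid X,\mathscr{D}}(t\mid x^\ast,\data)\to G_t(x^\ast,[\theta^\ast])$ almost surely under $P^{\theta^\ast}$. I expect this to be the main technical obstacle. Obtaining a product-form full-measure set $\mathcal{X}_0\times\Theta_0$, rather than just a joint full-measure set, requires care. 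I would first settle for a joint statement for almost every pair $(x^\ast,\theta^\ast)$ and then extract sections.

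It remains to prove the equivalence. For the ``if'' direction, survival-identifiability (Definition~\ref{def:survival_identifiable_prior}) means $S_{E\mid X,\Theta}(t\mid x,\theta)=F_t(x,[\theta])$ a.e. This quantity is $\sigma([\Theta])$-measurable, so $G_t=F_t$, and the limit equals $S_{E\mid X,\Theta}(t\mid x^\ast,\theta^\ast)$.

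For the ``only if'' direction, assume the consistency in \eqref{eq:consistency} holds on $\mathcal{X}_0\times\Theta_0$. Since limits are unique, $S_{E\mid X,\Theta}(t\mid x^\ast,\theta^\ast)=G_t(x^\ast,[\theta^\ast])$ for $\pi$-a.e.\ $\theta^\ast$ and $P_X^\pi$-a.e.\ $x^\ast$. Setting $F_t:=G_t$ then exhibits survival-identifiability.

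Finally, a countable grid $\mathcal{G}$ is handled by intersecting the countably many full-measure sets.
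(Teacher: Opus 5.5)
Your architecture matches the paper's. The PPSD converges, via Doob's theorem on the quotient $\mathcal{Q}$, to the class average $G_t(x^\ast,[\theta^\ast])$ (the paper's $M_t$). Survival-identifiability collapses this to $S_{E\mid X,\Theta}(t\mid x^\ast,\theta^\ast)$, and necessity follows from uniqueness of almost-sure limits with $F_t:=G_t$. The only difference is that you re-derive Doob (L\'evy's upward theorem plus identification of the limit) rather than citing it and then reconciling $\mathbb{E}_\pi[M_t\mid\data]$ with the PPSD by a tower argument.

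In that re-derivation, however, one claim is false. The inclusion $\mathscr{D}_\infty\subseteq\sigma([\Theta])$ up to null sets does not hold. The first observation $(X_1,T_1,\Delta_1)$ is not almost surely a function of $[\Theta]$: given $[\Theta]$ it still has the non-degenerate law $P^{\Theta}_{\mathrm{obs}}$. What your remark ``given $[\Theta]$ the data are i.i.d.\ from a fixed law'' actually gives is the conditional independence $\Theta\perp\mathscr{D}_\infty\mid[\Theta]$. Combined with $\sigma([\Theta])\subseteq\mathscr{D}_\infty$ (mod null sets, from your Varadarajan step), it yields, for $h=S_{E\mid X,\Theta}(t\mid x^\ast,\cdot)$,
\[
\mathbb{E}_\pi[h(\Theta)\mid\mathscr{D}_\infty] \;=\; \mathbb{E}_\pi[h(\Theta)\mid[\Theta],\mathscr{D}_\infty] \;=\; \mathbb{E}_\pi[h(\Theta)\mid[\Theta]] \;=\; G_t(x^\ast,[\Theta]).
\]
This is precisely the step $\data\perp\theta\mid[\theta]$ in the paper's proof. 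With this replacement, your identification of the limit is correct.

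On the quantifiers, the obstacle you flag is real, and extracting sections does not remove it. You obtain, for $\pi$-a.e.\ $\theta^\ast$, a full-measure set of $x^\ast$ that depends on $\theta^\ast$, not a rectangle $\mathcal{X}_0\times\Theta_0$. No argument can do better in general. Take $\mathcal{X}=\Theta=[0,1]$ with $P_X^\pi$ and $\pi$ uniform, and a model in which $P^\theta$ does not depend on $\theta$, so the posterior is the prior. Use a jointly measurable version of $S_{E\mid X,\Theta}(t\mid x,\theta)$ that is altered on the diagonal $\{x=\theta\}$. This is a legitimate version for every $\theta$, since the diagonal is null in every section, and survival-identifiability still holds. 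The PPSD is a prior integral over $\vartheta$, so it ignores the single point $\vartheta=x^\ast$ and stays at the unaltered value. The target, by contrast, is the altered value whenever $\theta^\ast=x^\ast$, and every full-measure rectangle meets the diagonal.

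The paper's proof has the same limitation: it fixes $x^\ast$ and lets the $\theta^\ast$-null set depend on it. So what both arguments actually prove is the joint/iterated almost-everywhere version. That version is all the ``only if'' direction needs, and your necessity argument goes through there, provided $G_t$ is taken jointly measurable (e.g.\ via a regular conditional distribution of $\Theta$ given $[\Theta]$).
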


\begin{proof}
We first define the quotient-level target.
For fixed $t$ and $x$, we define
\[
    M_t(x^\ast,[\theta]) \ := \
    \mathbb{E}_\pi
    \left[
        S_{E\mid X,\Theta}(t\mid x^\ast,\vartheta)
        \mid
        [\vartheta]=[\theta]
    \right].
\]
This is the prior-average survival probability among all DGPs that induce the
same observed law as $\theta$. Since
$0\le S_{E\mid X,\Theta}(t\mid x^\ast,\vartheta)\le 1$, this conditional expectation
is integrable.

By construction, two different elements of $\mathcal{Q}$ correspond to two
different observational distributions. 
Thus, the quotient parameter $[\theta]$ is identifiable from the observed distribution. 
Assumption~\ref{assump:regularity} ensures that the quotient model is measurable and that Doob's consistency theorem~\citep{doob1949application} applies to posterior expectations of integrable functions on this quotient space.

Therefore, for every true parameter $\theta^\ast$ (except where $\theta$ has probability $0$), if
$\data\sim P^{\theta^\ast}_{\mathrm{obs}}$, then
\begin{align}
\label{eq:doob_consistency}
    \mathbb{E}_\pi
    \left[
        M_t(x^\ast,[\theta])
        \mid
        \data
    \right]
    \ \xrightarrow[N\to\infty]{\mathrm{a.s.}} \
    M_t(x^\ast,[\theta^\ast]).
\end{align}

We now show that the left-hand side of
\Eqref{eq:doob_consistency} is the Bayesian PPSD. Since the observed
data distribution depends on $\theta$ only through the equivalence class
$[\theta]$, we have the conditional independence
\[
    \data \perp \theta \mid [\theta].
\]
Hence,
\[
\begin{aligned}
    \mathbb{E}_\pi \left[M_t(x^\ast,[\theta]) \mid \data \right] \
    &= \ \mathbb{E}_\pi
    \left[ \mathbb{E}_\pi
        \left[ S_{E\mid X,\Theta}(t\mid x^\ast,\theta) \mid [\theta] \right]
        \mid \data
    \right] \\
    &= \ \mathbb{E}_\pi
    \left[
        \mathbb{E}_\pi
        \left[ S_{E\mid X,\Theta}(t\mid x^\ast,\theta) \mid [\theta],\data \right]
        \mid \data
    \right] \\
    &= \ \mathbb{E}_\pi
    \left[
        S_{E\mid X,\Theta}(t\mid x^\ast,\theta) \mid \data
    \right] \\
    &= \ \int_\Theta
    S_{E\mid X,\Theta}(t\mid x^\ast,\vartheta)
    f_{\Theta\mid\mathscr{D}}(\vartheta\mid\data)
    \,d\vartheta \\
    &= \ S_{E\mid X,\mathscr{D}}(t\mid x^\ast,\data).
\end{aligned}
\]
Combining this equality with \Eqref{eq:doob_consistency} gives
\begin{align}
\label{eq:ppsd_limit_class_average}
    S_{E\mid X,\mathscr{D}}(t\mid x^\ast,\data)
    \ \xrightarrow[N\to\infty]{\mathrm{a.s.}} \
    M_t(x^\ast,[\theta^\ast]).
\end{align}
Without identifiability, this is the strongest possible conclusion: the PPSD
converges to the prior-average survival function over DGPs that are
observationally equivalent to the truth.

Finally, note $\pi$ is survival-identifiable at time $t$. 
Then, by Definition~\ref{def:survival_identifiable_prior}, all DGPs in the same
observational equivalence class have the same survival probability at
$(t,x^\ast)$. Therefore,
\begin{align*}
    M_t(x^\ast,[\theta^\ast]) \
    &= \ \mathbb{E}_\pi
    \left[
        S_{E\mid X,\Theta}(t\mid x^\ast,\vartheta)
        \mid
        [\vartheta]=[\theta^\ast]
    \right] \\
    &= \ S_{E\mid X,\Theta}(t\mid x^\ast,\theta^\ast).
\end{align*}
Substituting this into \Eqref{eq:ppsd_limit_class_average} proves
\Eqref{eq:consistency}. This establishes sufficiency.

For necessity, suppose that the Bayesian PPSD is consistent in the sense of
\Eqref{eq:consistency} for every $\theta^\ast$. From
\Eqref{eq:ppsd_limit_class_average}, the same sequence also converges almost
surely to $M_t(x^\ast,[\theta^\ast])$. By uniqueness of almost-sure limits,
\[
    S_{E\mid X,\Theta}(t\mid x^\ast,\theta^\ast) \ = \ M_t(x^\ast,[\theta^\ast])
\]
for every $\theta^\ast$. Since the right-hand side depends on
$\theta^\ast$ only through its observational equivalence class
$[\theta^\ast]$, the survival functional
$S_{E\mid X,\Theta}(t\mid x^\ast,\theta^\ast)$ also depends only on
$[\theta^\ast]$. Thus, Definition~\ref{def:survival_identifiable_prior} holds
with $F_t(x^\ast,[\theta])=M_t(x^\ast,[\theta])$, up to arbitrary definition on null sets. 
Hence the prior is survival-identifiable at time $t$.
\end{proof}

\subsection{Implication for SurvivalPFN}

Proposition~\ref{prop:formal_consistency} characterizes the
Bayesian posterior predictive target. SurvivalPFN is a finite neural network
trained to approximate this target, so the theorem does not by itself prove
finite-sample or finite-capacity consistency of the trained transformer. It does,
however, imply consistency in the idealized exact-amortization limit.

\begin{corollary}[Consistency of SurvivalPFN]
\label{cor:survivalpfn-consistency}
Assume the conditions of
Proposition~\ref{prop:formal_consistency}. Suppose that, for
$\widetilde{\delta}^\ast=1$, SurvivalPFN exactly amortizes the Bayesian posterior
predictive event-time distribution, so that its predicted survival curve
satisfies
\[
    \widehat{S}_{\omega}(t\mid x^\ast,\data) \ =\ S_{E\mid X,\mathscr{D}}(t\mid x^\ast,\data).
\]
Then, for every true DGP parameter $\theta^\ast$ and every $x$,
\[
    \widehat{S}_{\omega}(t\mid x^\ast,\data)
    \ \xrightarrow[N\to\infty]{\mathrm{a.s.}} \
    S_{E\mid X,\Theta}(t\mid x^\ast,\theta^\ast).
\]
More generally, the same conclusion holds if the amortization error vanishes:
\[
    \left|
        \widehat{S}_{\omega}(t\mid x^\ast,\data) - S_{E\mid X,\mathscr{D}}(t\mid x^\ast,\data)
    \right|
    \ \xrightarrow[N\to\infty]{} \ 0 .
\]
\end{corollary}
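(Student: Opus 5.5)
The corollary follows from Proposition~\ref{prop:formal_consistency} together with a triangle-inequality argument. Fix $t$ in the evaluation region. We work on the full-measure sets $\mathcal{X}_0$ and $\Theta_0$ supplied by the proposition, so "every $\theta^\ast$" and "every $x^\ast$" are read as $\pi$-a.e.\ and $P_X^\pi$-a.e. The "every" in the corollary is inherited from these sets and cannot be strengthened without further assumptions. For $\theta^\ast\in\Theta_0$ and $x^\ast\in\mathcal{X}_0$, the sufficiency direction of Proposition~\ref{prop:formal_consistency} uses the assumed survival-identifiability of $\pi$. It gives an event $\Omega_1$ of $P^{\theta^\ast}$-probability one on which
\[
S_{E\mid X,\mathscr{D}}(t\mid x^\ast,\data)\to S_{E\mid X,\Theta}(t\mid x^\ast,\theta^\ast).
\]

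For the exact-amortization case, $\widehat{S}_{\omega}(t\mid x^\ast,\data)$ equals the PPSD for every realized $\data$. The two sequences therefore coincide pathwise, and the almost-sure limit transfers directly on $\Omega_1$. One point needs care. The equality $\widehat{S}_\omega = S_{E\mid X,\mathscr{D}}$ should hold for all datasets, or at least outside a set of datasets that is null under $P^{\theta^\ast}_{\mathrm{obs}}$ for each $N$. If it holds only outside such null sets, I take a countable union over $N$; the result is still a null event, so the conclusion survives.

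For the vanishing-error case, I use
\[
\bigl|\widehat{S}_\omega - S_{E\mid X,\Theta}(t\mid x^\ast,\theta^\ast)\bigr|\le \bigl|\widehat{S}_\omega - S_{E\mid X,\mathscr{D}}\bigr| + \bigl|S_{E\mid X,\mathscr{D}} - S_{E\mid X,\Theta}(t\mid x^\ast,\theta^\ast)\bigr|.
\]
The second term tends to zero on $\Omega_1$. The first term tends to zero by hypothesis, and I interpret that convergence as almost sure along the same data sequence. If the hypothesis only gives convergence in probability, the conclusion weakens to convergence in probability, since a.s.\ plus in-probability convergence yields in-probability convergence. On the intersection of the two probability-one events, the sum tends to zero, which proves the claim.

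The main obstacle is not analytic but interpretive: making the mode of convergence of the amortization error match the almost-sure claim. I would handle this by stating the hypothesis as almost-sure, or deterministic for every dataset sequence, and noting the in-probability variant as a remark. Extending the result to a countable grid $\mathcal{G}$ is then immediate by intersecting the countably many full-measure sets, as in the proposition.
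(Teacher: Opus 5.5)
Your proof is correct and follows the paper's argument: direct substitution into Proposition~\ref{prop:formal_consistency} for the exact-amortization case, and the same triangle-inequality split for the vanishing-error case. Your extra remarks sharpen what the paper leaves implicit rather than change the route: that ``every'' means on the full-measure sets $\Theta_0,\mathcal{X}_0$, and that the amortization error must vanish almost surely (or for every dataset sequence) to get an a.s.\ conclusion.
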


\begin{proof}
The exact-amortization case follows immediately by substituting
$\widehat{S}_{\omega}(t\mid x^\ast,\data) = S_{E\mid X,\mathscr{D}}(t\mid x^\ast,\data)$ into
Proposition~\ref{prop:formal_consistency}. The vanishing-error case
follows from the triangle inequality:
\begin{align*}
    & \left|
        \widehat{S}_{\omega}(t\mid x^\ast,\data) - S_{E\mid X,\Theta}(t\mid x^\ast,\theta^\ast)
    \right|
    \\
    &\qquad\leq \
    \left|
        \widehat{S}_{\omega}(t\mid x^\ast,\data)
        -
        S_{E\mid X,\mathscr{D}}(t\mid x^\ast,\data)
    \right|
    +
    \left|
        S_{E\mid X,\mathscr{D}}(t\mid x^\ast,\data)
        -
        S_{E\mid X,\Theta}(t\mid x^\ast,\theta^\ast)
    \right|.
\end{align*}
The first term vanishes by assumption, and the second term vanishes almost
surely by Proposition~\ref{prop:formal_consistency}.
\end{proof}

\section{Additional SurvivalPFN Details}
\label{app:survivalpfn_details}

\subsection{Synthetic Prior and DGP Simulation}
\label{app:prior_generation}

SurvivalPFN is pretrained on synthetic right-censored survival datasets sampled from a prior $\pi(\cdot)$ over identifiable survival data-generating processes. 
A draw $\theta\sim\pi(\cdot)$ specifies all random choices needed to generate one synthetic task, including the covariate distribution, event-time mechanism, censoring-time mechanism, censoring type, and target censoring rate. 

The simulator returns both the observed survival dataset
\[
    \data^{tr}_{\theta} \ = \ \{(x_i,t_i,\delta_i)_\theta\}_{i=1}^{N},
\]
and the corresponding latent event and censoring times $\{(e_i,c_i)_\theta\}_{i=1}^{N}$, which are used only for constructing supervised query targets during prior-data training.

\paragraph{Random Tabular Generators.}
All survival priors are built from a generic tabular generator. We write $G$ for a random table generator that can sample either an unconditional table or a conditional table. Concretely, a call to $G$ first samples generator-specific latent parameters $\zeta$, and then produces either
\[
    X \ \sim \ G_\zeta(\cdot),
    \qquad
    Y \mid X \ \sim \ G_\zeta(\cdot\mid X).
\]
based on the requirement. 

This abstraction also lets the same survival-prior design use different tabular generators, including PFN-style random multilayer perceptrons, random structural causal models, tree-based generators, or mixtures of these sources.

\paragraph{Censoring Mechanisms.}
We include several censoring mechanisms to cover common right-censoring regimes while retaining identifiability.

\textit{Uniform censoring.}
Event times are generated from the event-time prior, while censoring times are sampled from a uniform distribution. Depending on the time-generation family, the support is either based on the generated event-time range or on a sampled global time horizon:
\[
    C_i \ \sim \ \operatorname{Unif}\!\left(\min_j E_j,\,\max_j E_j\right),
    \qquad \text{or} \qquad
    C_i \ \sim \  \operatorname{Unif}(0,t_{\max}).
\]

\textit{Random censoring.}
Event times are generated conditionally on $X$, while censoring times are generated from a separate unconditional tabular mechanism:
\[
    E\mid X \ \sim \ \Pr(E\mid X),
    \qquad
    C \ \sim \ \Pr(C).
\]
This produces censoring distributions that are more diverse than simple uniform censoring while remaining independent of the event-time noise conditional on the task.

\textit{Administrative censoring.}
Administrative censoring simulates staggered study entry with a common study end date. The simulator samples entry times $A_i$, chooses a fixed administrative end time $a_\ast$, and sets
\[
    C_i \ = \ a_\ast - A_i.
\]
This captures a common survival-analysis setting in which subjects enter at different calendar times but are all censored at the same study termination date.

\textit{Independent censoring.}
For independent (or covariate-dependent) censoring, both event and censoring times are generated conditional on $X$, but from independent blocks:
\[
    E\perp C\mid X,
    \qquad
    \Pr(E,C\mid X) \ = \ \Pr(E\mid X)\,\Pr(C\mid X).
\]
This mechanism is central to the identifiable-prior construction: it allows censoring to depend on covariates while preserving the standard conditional independent censoring assumption used for nonparametric identification.

\paragraph{Prior Family 1: Naive Survival Prior.}
The simplest prior treats generated table values as raw event and censoring times. Let $G_{\zeta}$ denote a conditional event/censoring table generator. For the event-time branch, we sample
\[
    E \ = \ G_{\zeta}(X;1),
\]
where $G_{\zeta}(X;1)$ denotes one conditional output column given $X$. The censoring branch is then chosen according to the censoring type: uniform censoring samples $C$ from a uniform distribution; tabular censoring samples $C$ from an unconditional table generator; administrative censoring constructs $C$ from entry times; and the conditional independent branch samples separate event and censoring columns conditional on $X$. This prior is intentionally simple and flexible: it exposes the model to irregular, nonparametric time patterns without imposing an explicit survival-time family.

\paragraph{Prior Family 2: Survival-Distribution Prior.}
The second prior generates smooth distributions using random monotone maps. For each dataset, the simulator samples a time horizon $t_{\max}>0$, a number of knots $K$, and unconstrained coefficients
\[
    c \ = \ (c_1,\ldots,c_K) \ \in \ \mathbb{R}^{K}.
\]
These coefficients are converted into positive increments
\[
    \Delta_j \ = \
    \frac{\exp(c_j)}
    {\sum_{\ell=1}^{K}\exp(c_\ell)},
    \qquad j=1,\ldots,K,
\]
which define monotone control points
\[
    b_0 \ = \ 0,
    \qquad
    b_j \ =\ \sum_{\ell=1}^{j}\Delta_\ell,
    \qquad j=1,\ldots,K.
\]
The resulting monotone Bernstein map is
\[
    f_c(u) \ = \
    \sum_{j=0}^{K}
    b_j {K\choose j} u^j(1-u)^{K-j},
    \qquad u\in[0,1].
\]
A raw time is then sampled by pushing uniform noise through this monotone map:
\[
    \tau \ = \ t_{\max} f_c(U),
    \qquad
    U\ \sim\ \operatorname{Unif}(0,1).
\]
This construction can be viewed as a random smooth quantile-like function. It gives a flexible family of event-time and censoring-time distributions without committing to a fixed parametric hazard form. Under conditional independent censoring, event and censoring coefficient blocks are generated independently given $X$:
\begin{align*}
    E_i\  &= \ t_{\max} f_{c_i^E}(U_i^E), \\
    C_i \ &= \  t_{\max} f_{c_i^C}(U_i^C),
\end{align*}
with independent $U_i^E,U_i^C\sim\operatorname{Unif}(0,1)$ and independent coefficient blocks conditional on $x_i$.

\paragraph{Prior Family 3: Mixture Prior.}
The third prior samples event and censoring times from mixtures of distributions. In our implementation, the component family is sampled from Weibull and lognormal distributions, and the number of components $K_{\mathrm{mix}}$ is sampled from a finite candidate set. For each row $i$, a table generator emits hidden values
\[
    h_i \ = \    (a_{i1},b_{i1},r_{i1},\ldots,a_{iK_{\mathrm{mix}}},b_{iK_{\mathrm{mix}}},r_{iK_{\mathrm{mix}}}).
\]
The logits $r_{ij}$ define mixture weights
\[
    \pi_{ij} \ = \
    \frac{\exp(r_{ij})}
    {\sum_{\ell=1}^{K_{\mathrm{mix}}}\exp(r_{i\ell})},
    \qquad
    \sum_{j=1}^{K_{\mathrm{mix}}}\pi_{ij} \ = \ 1,
\]
and a component index is sampled as
\[
    Z_i \ \sim \ \operatorname{Categorical}(\pi_{i1},\ldots,\pi_{iK_{\mathrm{mix}}}).
\]
For a Weibull component, positive shape and scale parameters are obtained by
\begin{align*}
    \kappa_{ij} \ &=\ \operatorname{softplus}(a_{ij})+0.1,\\
    \lambda_{ij}\ &=\ \operatorname{softplus}(b_{ij})+0.1,
\end{align*}
and the sampled time is
\[
    \tau_i \ = \
    \lambda_{iZ_i}
    \left[-\log(1-U_i)\right]^{1/\kappa_{iZ_i}},
    \qquad
    U_i \ \sim \ \operatorname{Unif}(0,1).
\]
For a lognormal component,
\begin{align*}
    \mu_{ij} \ &= \ \operatorname{softplus}(a_{ij}), \\
    \sigma_{ij} \ &= \ \operatorname{softplus}(b_{ij}),
\end{align*}
and
\[
    \log \tau_i
    =
    \mu_{iZ_i}
    +
    \sigma_{iZ_i}\varepsilon_i,
    \qquad
    \varepsilon_i\sim\mathcal N(0,1).
\]
This mixture prior provides a complementary source of smooth positive-time distributions with heavy tails and multimodality.

\paragraph{Prior Family 4: Kitchen-Sink Meta Prior.}
Finally, SurvivalPFN can use a meta-prior that mixes multiple complete survival priors. Let $P_1,\ldots,P_M$ denote child prior generators and let $w_1,\ldots,w_M$ be nonnegative mixture weights with $\sum_m w_m=1$. The kitchen-sink prior samples
\[
    M^\ast \ \sim  \ \operatorname{Categorical}(w_1,\ldots,w_M),
    \qquad
    \data^{tr}_{\theta} \ \sim \ P_{M^\ast}.
\]
Equivalently, the induced prior over synthetic datasets is
\[
    P_{\mathrm{sink}}(\data) \ = \ \sum_{m=1}^{M} w_m P_m(\data).
\]
This mixture increases prior diversity by combining direct table-output mechanisms, smooth monotone distributional mechanisms, and positive-time mixture mechanisms. In the current default configuration, the kitchen-sink prior places most mass on the direct table-output and monotone survival-distribution priors, while the exact mixture weights remain configurable.

\paragraph{Diagnostics for Synthetic DGPs.}
We further examine the synthetic DGPs induced by each prior family in
Figures~\ref{fig:prior-quality-comparison} and
\ref{fig:prior-time-distribution-comparison}. These diagnostics are designed to
check the two desiderata of the prior: it should remain close to the
identifiable independent-censoring regime, while still covering a broad range of
survival-data regimes. For each prior family, we sample 500 synthetic datasets,
each with 1{,}024 observations, and compute dataset-level and curve-level
summaries.
\begin{figure}[t]
    \centering

    \begin{subfigure}[t]{0.49\linewidth}
        \centering
        \includegraphics[width=\linewidth]{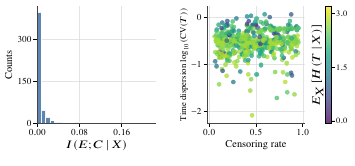}
        \caption{Naive prior}
        \label{fig:prior-quality-naive}
    \end{subfigure}
    \hfill
    \begin{subfigure}[t]{0.49\linewidth}
        \centering
        \includegraphics[width=\linewidth]{figs/survival_dist_prior_quality.pdf}
        \caption{Survival-distribution prior}
        \label{fig:prior-quality-survival-dist}
    \end{subfigure}

    \vspace{0.4em}

    \begin{subfigure}[t]{0.49\linewidth}
        \centering
        \includegraphics[width=\linewidth]{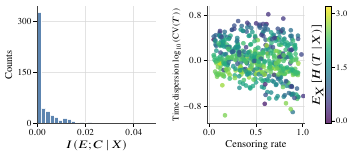}
        \caption{Mixture prior}
        \label{fig:prior-quality-mix-model}
    \end{subfigure}
    \hfill
    \begin{subfigure}[t]{0.49\linewidth}
        \centering
        \includegraphics[width=\linewidth]{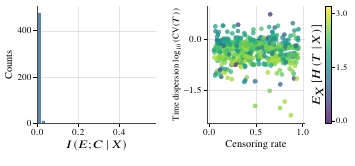}
        \caption{Kitchen-sink prior}
        \label{fig:prior-quality-kitchen-sink}
    \end{subfigure}
    \caption{
    Prior-quality diagnostics across four synthetic prior families, with 500 sampled datasets per family. 
    Each panel summarizes the induced dependence between event and censoring times,
    censoring-rate coverage, time-scale dispersion, and conditional event-time uncertainty.
    }
    \label{fig:prior-quality-comparison}
\end{figure}

\begin{figure}[ht]
    \centering

    \begin{subfigure}[t]{\linewidth}
        \centering
        \includegraphics[width=\linewidth]{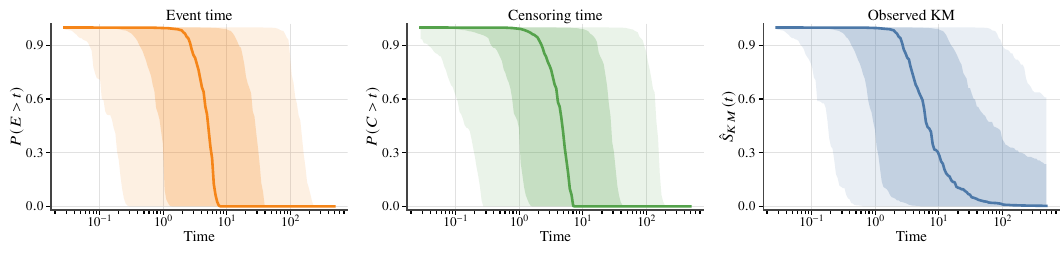}
        \vspace{-2em}
        \caption{Naive prior}
        \label{fig:prior-time-dist-naive}
    \end{subfigure}

    \vspace{0.5em}

    \begin{subfigure}[t]{\linewidth}
        \centering
        \includegraphics[width=\linewidth]{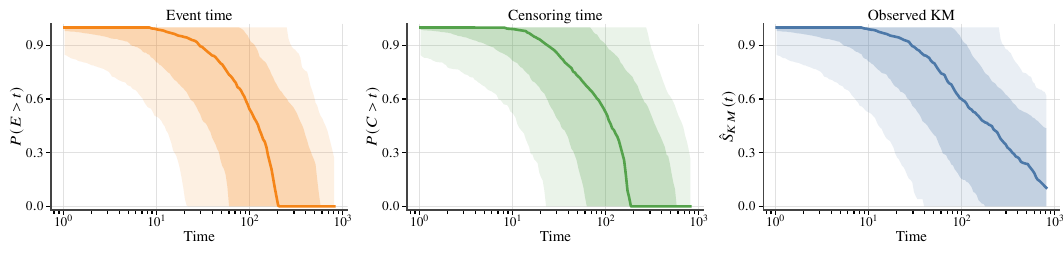}
        \vspace{-2em}
        \caption{Survival-distribution prior}
        \label{fig:prior-time-dist-survival-dist}
    \end{subfigure}

    \vspace{0.5em}

    \begin{subfigure}[t]{\linewidth}
        \centering
        \includegraphics[width=\linewidth]{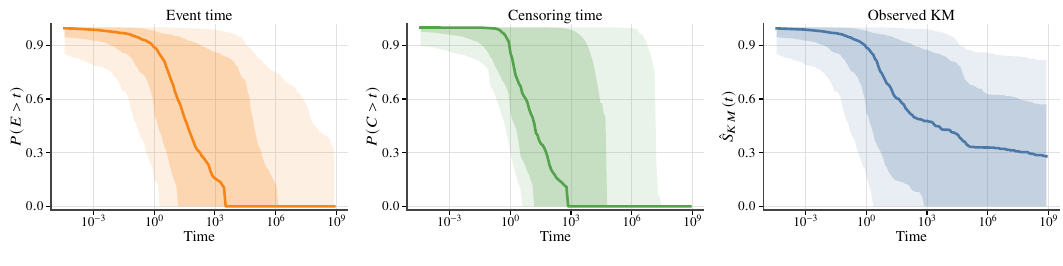}
        \vspace{-2em}
        \caption{Mixture-model prior}
        \label{fig:prior-time-dist-mix-model}
    \end{subfigure}

    \vspace{0.5em}

    \begin{subfigure}[t]{\linewidth}
        \centering
        \includegraphics[width=\linewidth]{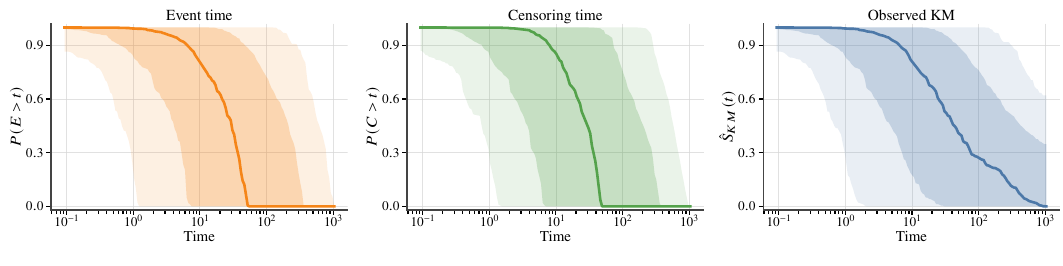}
        \vspace{-2em}
        \caption{Kitchen-sink prior}
        \label{fig:prior-time-dist-kitchen-sink}
    \end{subfigure}

    \caption{
    \textbf{Empirical distributions across 500 synthetic datasets.}
    Each panel shows the induced marginal event-time survival curve $P(E > t)$,
    censoring-time survival curve $P(C > t)$, and observed Kaplan-Meier curve
    $\widehat{S}_{\mathrm{KM}}(t)$.
    Solid lines denote the pointwise median curve across generated datasets. Dark shaded bands denote the interquartile range (25th-75th percentiles), and light shaded bands denote the 10th-90th percentile range.
    }
    \label{fig:prior-time-distribution-comparison}
\end{figure}

Figure~\ref{fig:prior-quality-comparison} reports scalar diagnostics for each generated dataset. 
The left subpanel estimates the conditional mutual information $I(E; C \mid X)$ between the latent event time $E$ and censoring time $C$ after conditioning on covariates $X$. 
Since our simulator is constructed under conditional independence, values concentrated near zero provide an empirical sanity check that the generated censoring process does not introduce substantial residual event-censoring dependence beyond $X$. 
The right subpanel summarizes dataset diversity: each point is one generated dataset, with the x-axis showing the censoring rate, the y-axis showing the observed-time dispersion $\log_{10}(\mathrm{CV}(T))$, where $\mathrm{CV}(T)=\mathrm{std}(T)/|\mathrm{mean}(T)|$ is the coefficient of variation of the observed times, and the color showing the conditional observed-time entropy $\mathbb{E}_{X}[H(T \mid X)]$.
Thus, this panel visualizes how broadly each prior covers censoring rates, relative time-scale variation, and residual outcome uncertainty after conditioning on covariates.

Several trends are apparent from these plots. 
First, across all four prior families, the estimated conditional mutual information $I(E;C\mid X)$ is highly concentrated near zero, suggesting that the generated datasets largely remain within the intended conditional independent censoring regime. 
This provides an empirical sanity check that the prior does not typically generate strongly informative censoring structures.
Second, the scatter plots show that the priors cover a broad range of right-censored survival regimes. 
The generated datasets span nearly the full
range of censoring rates, from lightly censored to heavily censored settings.
They also cover different observed-time dispersion regimes, and different levels of residual conditional uncertainty, as measured by $\mathbb{E}_{X}[H(T\mid X)]$.

Figure~\ref{fig:prior-time-distribution-comparison} complements these scalar summaries with curve-level diagnostics. 
For each generated dataset, we compute the empirical latent event-time survival curve $P(E > t)$, the empirical latent censoring-time survival curve $P(C > t)$, and the Kaplan-Meier estimate $\widehat{S}_{\mathrm{KM}}(t)$ from the observed pairs $(T,\Delta)$. 
The solid line denotes the pointwise median curve across generated datasets, while the dark and light bands denote the pointwise 25th-75th and 10th-90th percentile ranges, respectively. 
These curves show that the priors induce a wide range of event-time, censoring-time, and observed survival shapes, including different
time scales, tail behaviors, and censoring-distorted Kaplan-Meier patterns.
Together, Figures~\ref{fig:prior-quality-comparison} and
\ref{fig:prior-time-distribution-comparison} show that the proposed prior families generate diverse survival tasks while maintaining the identifiable right-censoring structure required by our theoretical framework.
These curve-level summaries further confirm that the diversity of DGPs. 
The empirical distributions vary substantially across generated datasets, with different decay rates, time scales, and tail behaviors.

\subsection{Architecture Details}
\label{app:model_details}

SurvivalPFN uses the same PFN-style transformer architecture as TabDPT~\citep{ma2024tabdpt} and CausalPFN~\citep{balazadeh2025causalpfn}, with only task-specific changes to the token contents and output interpretation. Each context row $(x_i,t_i,\delta_i)\in\data^{tr}_{\theta}$ is represented as a single token by combining embeddings of the covariates $x_i$, observed time $t_i$, and event indicator $\delta_i$. Each query row is represented by embeddings of the query covariate $x^\ast_\theta$ and the query indicator $\widetilde{\delta}^\ast$. We use linear embedding layers and omit positional encodings, so the context is treated as a set rather than an ordered sequence.

All context and query tokens are passed through a 20-layer transformer encoder with hidden dimension 384, RMS query-key normalization, and parallel SwiGLU feed-forward blocks. The attention mask follows the standard PFN design~\citep{hollmann2022tabpfn, hollmann2025accurate}: context tokens attend to one another, while query tokens attend only to the context. This masking prevents information leakage across queries and allows the model to process many query instances in parallel. The output representation of each query token is projected to $L=1024$ logits, and a softmax produces a discretized PPD over the transformed time bins:
\[
    q_\omega(\cdot\mid x^\ast_\theta,\widetilde{\delta}^\ast,\data^{tr}_{\theta})
    \ = \ 
    \left[
    q_{\omega,\ell}(x^\ast_\theta,\widetilde{\delta}^\ast,\data^{tr}_{\theta})
    \right]_{\ell=1}^{L}.
\]
Depending on $\widetilde{\delta}^\ast$, this distribution is interpreted as an approximation to either the PPD for event time or the PPD for censoring time. The corresponding PPSD or PPCD is obtained by summing the predicted tail probability mass across bins.

The full model has approximately 20M parameters and is trained in two stages: (i) a predictive phase that follows standard predictive PFN training from \citet{ma2024tabdpt}, and (ii) a survival phase that optimizes the survival likelihood or cross-entropy loss. 
We use AdamW~\citep{kingma2014adam} with warmup and cosine annealing in the predictive phase, and switch to the schedule-free optimizer~\citep{defazio2024road} in the survival phase. 
The maximum context length is 16K in the first phase and 2{,}048 in the second. The predictive phase is trained on four A100 GPUs for up to one week, followed by two and half days of survival-phase training on one H100 GPU.

Training is parallelized over both synthetic context and query tokens. At each gradient step, we sample $B_\theta$ independent DGPs from the prior, generate one context table for each DGP, and draw $B_q$ query rows per datasets. These $B_\theta B_q$ query predictions are computed in a single batched forward pass, and the final loss is averaged over all DGP-query pairs. This batching strategy is identical in spirit to the parallel training procedure used in CausalPFN; see Algorithm 1 in~\citet{balazadeh2025causalpfn} for the full procedure.

\subsection{Monotone Time Transformations}
\label{app:monotone_transformation}

Survival times are nonnegative and often highly skewed. Directly discretizing raw time can therefore allocate too many bins to sparse tail regions or make the learned distribution sensitive to dataset-specific time scales. 
To address this, SurvivalPFN applies a context-fitted monotone transformation before discretization. 

For each dataset $\data=\{(x_i,t_i,\delta_i)\}_{i=1}^{N}$,
let $g: \mathbb{R}_{+}\to\mathcal{Z}$ denote the dataset-specific transformation fitted on the observed times for the context tokens $\{t_i\}_{i=1}^{N_c}$, where $N_c$ is the size of context tokens. 
The observed times and query targets for context tokens are mapped into model space as
\begin{align*}
    z_i \ &= \ g (t_i),\\
    z_e^\ast \ &= \ g(e^\ast), \\
    z_c^\ast \ &= \ g(c^\ast).
\end{align*}
The histogram likelihood \Eqref{eq:survivalpfn_nll_loss} or cross-entropy loss in \Eqref{eq:ce_loss} is then evaluated in this transformed space. 

To make prediction, model-space bin edges are mapped back to raw time through $g^{-1}$, so that the final PPSD/PPCD is reported on the original time scale. In all cases, $g$ is monotone increasing, preserving the temporal ordering of events and censoring times.

\paragraph{\texttt{lognormal2normal} Transformation.}
The \texttt{lognormal2normal} transformation uses a simple parametric normalization motivated by the positivity and right-skewness of survival times. For each dataset, we first calculate the mean $m$ and variance $s^2$ on the context tokens.
We fit a lognormal distribution whose raw-time mean and standard deviation match $m$ and $s^2$. If $T=\exp(\mu+\sigma Z)$ with $Z\sim\mathcal{N}(0,1)$, then
\begin{align*}
    \sigma^2 \ &= \ \log\!\left(1+\frac{s^2}{m^2}\right),\\
    \mu \ &= \ \log(m)-\frac{1}{2}\sigma^2 .
\end{align*}
The forward transformation maps a raw time $t>0$ to its fitted normal coordinate:
\[
    g_\mathrm{LN}(t) \ = \ \frac{\log t-\mu}{\sigma}.
\]
The inverse transformation is
\[
    g_\mathrm{LN}^{-1}(z) \ = \ \exp(\mu+\sigma z).
\]
Thus, fixed-width bins in model space correspond to adaptive, nonuniform bins in raw time. This transformation is smooth, bijective on $\mathbb{R}_{+}$, preserves time ordering, and can extrapolate beyond the largest observed time. Its main limitation is that it imposes a lognormal shape on the observed-time distribution; this parametric normalization may not allocate resolution optimally when the parametric form is incorrect.

\paragraph{\texttt{time2quantile} Transformation.}
The \texttt{time2quantile} transformation is a nonparametric, context-adaptive alternative. It maps raw time into an empirical quantile coordinate in $[0,1]$. For all the context tokens in the dataset, sort the observed times and collapse duplicates to obtain unique knots
\[
    0 \ = \ a_0  \ < \ a_1 \ < \ \cdots \ <\  a_K\ =\ \max_i t_i .
\]
Each knot $a_j$ is assigned its right-continuous empirical CDF value
\[
    q_j \ = \ \widehat F(a_j) \ = \
    \frac{1}{N_c}\sum_{i=1}^{N_c}\mathbbm{1}\{t_i\le a_j\},
    \qquad
    q_0 \ =\ 0,\quad q_K\ =\ 1.
\]
The forward transformation is the piecewise-linear interpolation of these knots:
\[
    g_\mathrm{Q}(t) \ = \ q_j +
    \frac{t-a_j}{a_{j+1}-a_j}
    (q_{j+1}-q_j),
    \qquad
    a_j \ \leq \ t\ \leq  \ a_{j+1}.
\]
Values above the largest time are mapped to $1$. The inverse transformation swaps the axes and linearly interpolates from quantile space back to raw time:
\[
    g_\mathrm{Q}^{-1}(q) \ = \
    a_j
    +
    \frac{q-q_j}{q_{j+1}-q_j}
    (a_{j+1}-a_j),
    \qquad
    q_j \ \leq \ q \ \leq \ q_{j+1}.
\]
For this transformation, the model-space range is fixed to $[0,1]$. Uniform bins in quantile space become adaptive raw-time bins: regions with many observed times receive finer resolution, while sparse regions receive wider bins.

The \texttt{time2quantile} transformation makes no parametric assumption on the time distribution and is robust to changes in time units or monotone rescalings of raw time. However, because it is fitted from the empirical distribution of $\{t_i\}_{i=1}^{N_c}$, it is context-local and cannot resolve the tail shape beyond $\max_i t_i$; all larger times are mapped to quantile $1$, with probability mass beyond this point represented only by the final residual histogram bin.

\paragraph{Summary.}
The two transformations reflect complementary design choices. The \texttt{lognormal2normal} transformation provides a smooth positive-time coordinate with parametric tail extrapolation, while \texttt{time2quantile} provides a fully context-adaptive coordinate that normalizes all tasks to the common interval $[0,1]$. Both transformations preserve time ordering and allow SurvivalPFN to allocate discretization resolution more effectively than raw-time binning.

\subsection{Training Objective}
\label{app:train_objective}

SurvivalPFN is trained to predict a discretized PPD over transformed time. 
For a query with indicator $\widetilde{\delta}^{\ast}$, define the latent supervised target
\[
    r^\ast_\theta(\widetilde{\delta}^{\ast})
    \ = \
    \widetilde{\delta}^{\ast} e^\ast_\theta
    +
    \bigl(1-\widetilde{\delta}^{\ast}\bigr)c^\ast_\theta .
\]
Thus, $\widetilde{\delta}^{\ast}=1$ asks for the latent event time, whose posterior predictive tail gives the PPSD, while $\widetilde{\delta}^{\ast}=0$ asks for the latent censoring time, corresponding to the PPCD.

Let $g_{\data^{tr}_{\theta}}:\mathbb{R}_{+}\to\mathcal{Z}$ be the monotone transformation fitted from the observed context times in $\data^{tr}_{\theta}$, and let
$\{\mathcal I_\ell\}_{\ell=1}^{L}$ denote the ordered bins in transformed-time space. 
Define the bin-index map
\[
    \kappa_{\data^{tr}_{\theta}}(r)
    \ = \
    \ell
    \quad
    \text{if}
    \quad
    g_{\data^{tr}_{\theta}}(r)\in \mathcal I_\ell ,
\]
with boundary clipping handled by the same convention used in implementation. 
Equivalently, define the one-hot target
\[
    b_{\ell,\data^{tr}_{\theta}}(r)
    \ = \
    \mathbb{I}\!\left\{
        \kappa_{\data^{tr}_{\theta}}(r)=\ell
    \right\},
    \qquad
    \ell=1,\ldots,L .
\]

\paragraph{Likelihood loss.}
The main SurvivalPFN checkpoint is trained with the one-hot discrete negative log-likelihood over transformed-time bins:
\[
    \mathrm{NLL}\!\left(
        r^\ast_\theta
        \,\middle\|\,
        q_\omega
    \right)
    \ = \
    -
    \log
    q_{\omega,\kappa_{\data^{tr}_{\theta}}(r^\ast_\theta)}
    \!\left(
        x^\ast_\theta,
        \widetilde{\delta}^{\ast},
        \data^{tr}_{\theta}
    \right).
\]
Equivalently,
\[
    \mathrm{NLL}\!\left(
        r^\ast_\theta
        \,\middle\|\,
        q_\omega
    \right)
    \ = \
    -
    \sum_{\ell=1}^{L}
    b_{\ell,\data^{tr}_{\theta}}(r^\ast_\theta)
    \log
    q_{\omega,\ell}
    \!\left(
        x^\ast_\theta,
        \widetilde{\delta}^{\ast},
        \data^{tr}_{\theta}
    \right).
\]
The corresponding population objective is
\begin{align}
\label{eq:app_survivalpfn_nll_loss}
    \mathcal{L}_{\mathrm{NLL}}(\omega)
    \ = \
    \mathbb E_{\theta\sim\pi(\cdot)}
    \, \mathbb E_{\data^{tr}_{\theta},\,x^\ast_\theta,\,e^\ast_\theta,\,c^\ast_\theta,\,\widetilde{\delta}^{\ast}}
    \left[
    -
    \log
    q_{\omega,\kappa_{\data^{tr}_{\theta}}\!\left(r^\ast_\theta(\widetilde{\delta}^{\ast})\right)}
    \!\left(
        x^\ast_\theta,
        \widetilde{\delta}^{\ast},
        \data^{tr}_{\theta}
    \right)
    \right].
\end{align}
This is the objective used by the best validation checkpoint reported in the main results.

\paragraph{Smoothed cross-entropy variant.}
We also consider a smoothed cross-entropy loss, which is used in TabDPT~\citep{ma2024tabdpt} and CausalPFN~\citep{balazadeh2025causalpfn}. 
Instead of assigning all mass to one bin, the target time is converted into a narrow histogram in transformed-time space:
\[
    a_{\ell,\data^{tr}_{\theta}}^{(\sigma)}(r)
    \ = \
    \int_{\mathcal I_\ell}
    \alpha_\sigma
    \!\left(
        z;\,
        g_{\data^{tr}_{\theta}}(r)
    \right)
    dz,
    \qquad
    \ell=1,\ldots,L,
\]
where $\alpha_\sigma$ is a narrow smoothing density, implemented as a Gaussian centered at $g_{\data^{tr}_{\theta}}(r)$ with a predefined variance. 
The smoothed cross-entropy loss is
\begin{align}
\label{eq:ce_loss}
    \mathrm{SCE}_{\sigma}\!\left(
        r^\ast_\theta
        \,\middle\|\,
        q_\omega
    \right)
    \ = \
    -
    \sum_{\ell=1}^{L}
    a_{\ell,\data^{tr}_{\theta}}^{(\sigma)}(r^\ast_\theta)
    \log
    q_{\omega,\ell}
    \!\left(
        x^\ast_\theta,
        \widetilde{\delta}^{\ast},
        \data^{tr}_{\theta}
    \right).
\end{align}
As $\sigma\to 0$, the smoothed target
$a_{\ell,\data^{tr}_{\theta}}^{(\sigma)}(r)$ reduces to the one-hot target
$b_{\ell,\data^{tr}_{\theta}}(r)$, and the smoothed cross-entropy recovers the discrete NLL. 
In our experiments, this loss is treated as an alternative training option and evaluated in the ablation study.

\paragraph{Note.}
This prior-data likelihood differs from the standard right-censored survival likelihood in Section~\ref{sec:background}. 
The standard observed-data likelihood trains on partially observed query outcomes
$(t^\ast,\delta^\ast)$ and must account for censored queries through survival-tail probabilities. 
In contrast, SurvivalPFN training uses simulator-provided latent query times
$e^\ast_\theta$ or $c^\ast_\theta$ as supervision.

\subsection{Inference Procedure}
\label{app:inference_details}
At inference time, SurvivalPFN takes an observed survival dataset $\data$ as context and one or more query covariates $x^\ast$. For survival prediction, we set the query indicator to $\widetilde{\delta}^\ast=1$, so that the model outputs a discretized posterior predictive event-time distribution
\[    
q_\omega\!\left(\cdot \,\middle\vert\, x^\ast,\widetilde{\delta}^\ast=1,\data\right)    
\ = \ \left[    
q_{\omega,\ell}(x^\ast,\widetilde{\delta}^\ast=1,\data)    
\right]_{\ell=1}^{L}.
\]
This requires only a single forward pass and does not involve dataset-specific gradient updates, posterior sampling, or hyperparameter tuning.
Let $g_{\data}$ denote the monotone time transformation fitted from the observed times in $\data$, and let $\{\mathcal I_\ell\}_{\ell=1}^{L}$ denote the discretized bins in the transformed time space. 
The predicted posterior predictive survival distribution (PPSD) is obtained by summing the probability mass assigned to bins whose raw-time support lies above $t$:
\[    
\widehat S_\omega(t\mid x^\ast,\data) \ = \ \sum_{\ell:\, g_{\data}^{-1}(\mathcal I_\ell) > t}    
q_{\omega,\ell}\!\left(x^\ast,\widetilde{\delta}^\ast=1,\data\right),
\]
which is implemented as a cumulative tail sum over the discretized time bins. Similarly, setting $\widetilde{\delta}^\ast=0$ yields the PPCD (although we generally do not care about PPCD during inference).

\section{Benchmarking Details}

\subsection{Baseline Model Details}
\label{app:model}

In this study, we compare SurvivalPFN against a board range of 20 representative survival analysis methods spanning classical statistical models, tree-based ensembles, discrete-time neural survival models, continuous-time neural survival models, and methods using TFMs. Below, we briefly summarize each baseline, including its core modeling idea and the implementation details used in our benchmark. A side-to-side overview of their methodological properties is provided in Table~\ref{tab:survival_model_comparison}.

For each column in the table, \emph{continuous-time} indicates whether the method explicitly parameterizes a smooth survival distribution over continuous time, without relying on post-hoc interpolation over a discrete time grid. \emph{(Semi-)parametric} indicates whether the method specifies the hazard, density, or survival function through an explicit parametric or semi-parametric form. 
The \emph{proportional hazards (PH)} assumption indicates that covariates act multiplicatively on a shared baseline hazard, so that hazard ratios between individuals are constant over time. 
Finally, the \emph{ensemble/mixture} column indicates whether the method combines a finite collection of base learners, components, or experts, such as trees in an ensemble or distributions in a finite mixture model.

In particular, SurvivalMDN is marked parametric because it uses a finite mixture of Gaussians, and DSM is marked parametric because it uses a finite mixture of Weibulls; only an infinite-mixture construction would be nonparametric. 

\clearpage
\begingroup
\begin{landscape}
\thispagestyle{plain}
\begin{table}[p]
\centering
\setlength{\tabcolsep}{3.2pt}
\begin{threeparttable}
\caption{Qualitative comparison of survival models considered in our benchmark. Columns indicate whether a method explicitly parameterizes a continuous-time (cont. time) survival distribution, imposes a (semi-)parametric form, assumes proportional hazards (PH), uses an ensemble or finite-mixture construction, or belongs to the tabular foundation model (TMF) family.}
\label{tab:survival_model_comparison}
\begin{tabular}{@{}>{\raggedright\arraybackslash}p{0.17\linewidth}*{5}{>{\centering\arraybackslash}p{0.074\linewidth}}>{\raggedright\arraybackslash}p{0.38\linewidth}@{}}
\toprule
\textbf{Model} &
\makecell{\textbf{Cont.}\\\textbf{time}} &
\makecell{\textbf{(Semi-)}\\\textbf{parametric}} &
\makecell{\textbf{PH}\\\textbf{assump.}} &
\makecell{\textbf{Ensemble/}\\\textbf{mixture}} &
\textbf{TFM} &
\textbf{Distinguishing feature} \\
\midrule
\multicolumn{7}{@{}l}{\textit{Prior-fitted and tabular foundation models}} \\
SurvivalPFN                 & \benchxmark & \benchxmark & \benchxmark & \benchxmark & \benchcmark & Novel in-context Bayesian survival estimator. \\
StaticSurvivalTFM           & \benchxmark & \benchxmark & \benchxmark & \benchxmark & \benchcmark & Handling right-censoring with tabular foundation models. \\
\midrule
\multicolumn{7}{@{}l}{\textit{Classical survival models}} \\
CoxPH                       & \benchxmark & \benchcmark & \benchcmark & \benchxmark & \benchxmark & Semi-parametric Cox proportional hazards model. \\
CoxNet                      & \benchxmark & \benchcmark & \benchcmark & \benchxmark & \benchxmark & Regularized Cox proportional hazards model. \\
cSVR                        & \benchxmark & \benchxmark & \benchxmark & \benchxmark & \benchxmark & Censored support-vector regression baseline. \\
\midrule
\multicolumn{7}{@{}l}{\textit{Tree-based models}} \\
GB                          & \benchxmark & \benchcmark & \benchcmark & \benchcmark & \benchxmark & Gradient-boosted Cox proportional hazards model. \\
CWGB                        & \benchxmark & \benchcmark & \benchcmark & \benchcmark & \benchxmark & Censoring-weighted gradient-boosted Cox model. \\
RSF                         & \benchxmark & \benchxmark & \benchxmark & \benchcmark & \benchxmark & Random survival forest for nonlinear effects and interactions. \\
\midrule
\multicolumn{7}{@{}l}{\textit{Neural discrete-time models}} \\
DeepHit                     & \benchxmark & \benchxmark & \benchxmark & \benchxmark & \benchxmark & Neural discrete-time model emphasizing discrimination. \\
DeepSurv                    & \benchxmark & \benchcmark & \benchcmark & \benchxmark & \benchxmark & Neural Cox proportional hazards model. \\
MTLR                        & \benchxmark & \benchxmark & \benchxmark & \benchxmark & \benchxmark & Multi-task logistic regression for discrete-time survival. \\
Nnet-survival               & \benchxmark & \benchxmark & \benchxmark & \benchxmark & \benchxmark & Discrete-time neural survival model. \\
CoxTime                     & \benchxmark & \benchcmark & \benchxmark & \benchxmark & \benchxmark & Neural Cox model with time-varying effects. \\
IWSG                        & \benchxmark & \benchxmark & \benchxmark & \benchxmark & \benchxmark & Explicitly models the censoring mechanism. \\
CQRNN                       & \benchxmark & \benchxmark & \benchxmark & \benchxmark & \benchxmark & Quantile-regression survival baseline. \\
BNN-MTLR                    & \benchxmark & \benchxmark & \benchxmark & \benchxmark & \benchxmark & Bayesian neural extension of MTLR. \\
\midrule
\multicolumn{7}{@{}l}{\textit{Neural continuous-time models}} \\
DSM                         & \benchcmark & \benchcmark & \benchxmark & \benchcmark & \benchxmark & Finite parametric (Weibull) mixture components. \\
SumoNet                     & \benchcmark & \benchxmark & \benchxmark & \benchxmark & \benchxmark & Continuous survival-function via automatic differentiation. \\
SurvivalMDN                 & \benchcmark & \benchcmark & \benchxmark & \benchcmark & \benchxmark & Finite parametric (Gaussian) mixture components. \\
DeepAFT--Weibull            & \benchcmark & \benchcmark & \benchcmark & \benchxmark & \benchxmark & Continuous Weibull accelerated failure time model. \\
DeepAFT--Log-logistic       & \benchcmark & \benchcmark & \benchxmark & \benchxmark & \benchxmark & Continuous log-logistic accelerated failure time model. \\
\bottomrule
\end{tabular}
\end{threeparttable}
\end{table}
\end{landscape}
\endgroup
\clearpage

\begin{enumerate}
    \item \textbf{SurvivalPFN}. SurvivalPFN is the model we developed in this paper. See description in Section~\ref{sec:method} and Appendix~\ref{app:survivalpfn_details}.
    
    \item \textbf{StaticSurvivalTFM}~\citep{kim2026tabular}. 
    StaticSurvivalTFM is a framework that converts any binary classifier into a survival predictor; we provide methodological details in Appendix~\ref{app:concurrent_works}.
    For a fair comparison with SurvivalPFN, we use TabDPT~\citep{ma2024tabdpt} as the backbone binary classifier in the main benchmark.
    We further study the effect of replacing this backbone with MITRA~\citep{zhang2025mitra} (the reported best version in~\citet{kim2026tabular}), with results reported in Appendix~\ref{app:rq4}.
        
    \item \textbf{Cox proportional hazards model (CoxPH)}~\citep{cox1972regression}.
    CoxPH is the classical semi-parametric proportional-hazards model, estimating covariate effects through the Cox partial likelihood while leaving the baseline hazard unspecified. We used the \texttt{scikit-survival} implementation~\citep{polsterl2020scikit}, which produces risk scores and can recover survival curves using an estimated baseline hazard~\citep{breslow1975analysis}.

    \item \textbf{Elastic-net Cox proportional hazards model (CoxNet)}~\citep{simon2011regularization}.
    CoxNet regularizes the Cox partial likelihood with an elastic-net penalty, making Cox-style modeling more stable in high-dimensional settings. We used the \texttt{scikit-survival} implementation~\citep{polsterl2020scikit}, with the same baseline hazard estimator as CoxPH.

    \item \textbf{Censored Support Vector Regression (cSVR)}~\citep{polsterl2015fast}.
    cSVR extends support-vector regression to right-censored outcomes by treating censored observations through inequality constraints or ranking-style losses. Specifically, we use the \texttt{FastSurvivalSVM} method in \texttt{scikit-survival}~\citep{polsterl2020scikit}. Because cSVR only outputs a scalar regression time rather than a full survival distribution, it cannot produce valid values for evaluation metrics that require an entire survival curve or event-time distribution.

    \item \textbf{Gradient-boosted Cox model (GB)}~\citep{ridgeway1999state}.
    GB fits an additive risk model by gradient boosting under a Cox partial-likelihood objective. We used the \texttt{scikit-survival} implementation~\citep{polsterl2020scikit}, which yields a boosted risk score and survival estimates through the fitted baseline hazard as CoxPH.

    \item \textbf{Component-wise gradient-boosted Cox model (CWGB)}~\citep{hothorn2006survival}.
    CWGB is a component-wise variant of gradient-boosted Cox regression, where weak learners update individual covariate components under a Cox-style objective. We used the \texttt{scikit-survival} implementation~\citep{polsterl2020scikit}.

    \item \textbf{Random Survival Forests (RSF)}~\citep{ishwaran2008random}.
    RSF is an ensemble of survival trees that partitions the feature space using survival-specific split criteria and averages terminal-node survival estimates across trees. We use the \texttt{scikit-survival} implementation~\citep{polsterl2020scikit}.

    \item \textbf{DeepHit} \citep{lee2018deephit}.
    DeepHit is a discrete-time neural survival model that directly parameterizes the probability mass function over event-time bins. We used the \texttt{pycox} implementation; its objective combines a likelihood term with a C-index-like ranking term, explicitly encouraging discrimination.

    \item \textbf{DeepSurv} \citep{katzman2018deepsurv}.
    DeepSurv replaces the linear predictor in CoxPH with a neural network while retaining the Cox partial-likelihood objective and proportional-hazards structure. We reimplemented it following the public PyTorch implementation (\url{https://github.com/czifan/DeepSurv.pytorch}) and add baseline hazard estimation as in CoxPH.

    \item \textbf{Multi-task Logistic Regression (MTLR)}~\citep{yu2011learning,fotso2018deep}.
    MTLR parameterizes the survival distribution over a discrete time grid using a sequence of dependent logistic regressors. We reimplemented MTLR following the public PyTorch implementation (\url{https://github.com/mkazmier/torchmtlr}).

    \item \textbf{Nnet-survival} \citep{biganzoli1998feed,gensheimer2019scalable}.
    Nnet-survival models discrete-time conditional hazards with a neural network and trains by a binary cross-entropy-style survival likelihood over time intervals. We used the \texttt{pycox} implementation, converting predicted discrete hazards into survival curves for distributional evaluation.

    \item \textbf{CoxTime} \citep{kvamme2019time}.
    CoxTime generalizes neural Cox regression by allowing the relative risk to depend on both covariates and time, thereby relaxing the proportional-hazards assumption. We used the \texttt{pycox} implementation and recovered survival curves from the learned time-dependent risk function and the same baseline hazard estimator as CoxPH.

    \item \textbf{Inverse-Weighted Survival Games (IWSG)} \citep{han2021inverse}.
    IWSG explicitly models both the failure-time and censoring distributions through an inverse-probability-of-censoring-weighted (IPCW) game objective. We reimplemented it from the official IWSG codebase (\url{https://github.com/rajesh-lab/Inverse-Weighted-Survival-Games}).

    \item \textbf{Censored Quantile Regression Neural Network (CQRNN)} \citep{pearce2022censored}.
    CQRNN directly predicts event-time quantiles under censoring, providing a distribution-free way to represent time-to-event uncertainty. We reimplemented it following the official CQRNN codebase (\url{https://github.com/TeaPearce/Censored_Quantile_Regression_NN}) and converted the predicted quantile function into a monotone survival curve when distributional metrics (IBS and D-calibration) were required.

    \item \textbf{Bayesian Neural Network Multi-task Logistic Regression (BNN-MTLR)}~\citep{qi2023using}.
    BNN-MTLR extends MTLR with Bayesian neural-network uncertainty, producing a PPD over discrete survival curves. We reimplemented it from the official BNN-MTLR codebase (\url{https://github.com/shi-ang/BNN-ISD}).

    \item \textbf{Deep Survival Machines (DSM)} \citep{nagpal2021dsm}.
    DSM parameterizes the event-time distribution as a finite mixture of Weibull components, with neural networks producing mixture weights and component parameters. We reimplemented DSM using the code in \texttt{auton-survival} package~\citep{nagpal2022auton}.

    \item \textbf{Survival Monotonic Network (SuMoNet)}~\citep{rindt2022survival}.
    SuMoNet models continuous-time survival distributions with monotonic neural networks, using automatic differentiation to obtain valid densities from the learned cumulative distribution function. We reimplemented it following the official SuMoNet codebase (\url{https://github.com/MrHuff/Sumo-Net}).


    \item \textbf{Survival Mixture Density Network (SurvivalMDN)} \citep{han2022survival}.
    SurvivalMDN models the event-time distribution using a finite mixture density network, providing a flexible but still finite-dimensional parametric mixture representation. We reimplemented it from the official SurvivalMDN codebase (\url{https://github.com/XintianHan/Survival-MDN}).

    \item \textbf{Neural Weibull accelerated failure-time model (DeepAFT-Weibull)} \citep{norman2024deepaft}.
    DeepAFT-Weibull uses a neural network to parameterize a Weibull accelerated failure-time distribution for right-censored data. We reimplemented the model following the paper.

    \item \textbf{Neural log-logistic accelerated failure-time model (DeepAFT-Loglogistic)} \citep{norman2024deepaft}.
    DeepAFT-Loglogistic similarly uses a neural network to parameterize a log-logistic accelerated failure-time distribution. We reimplemented the model following the paper.
\end{enumerate}

\subsection{Unified Time Grid for Consistent Evaluation}

All methods in our benchmark are evaluated through a common prediction object. 
After fitting, each model is converted into a survival-probability matrix
\begin{align*}
    \widehat{\mathbf{S}} \ = \
    \left[\widehat{S}_i(t_j)\right]_
    {i=1,\ldots,n_{\mathrm{test}};\,j=1,\ldots,m},
    \qquad
    \mathcal{G} \ =\ \{t_1 < \cdots < t_m\},
\end{align*}
where $\widehat{S}_i(t_j)$ denotes the predicted survival probability for test individual $i$ at time $t_j$. 
All metrics are then computed from $(\mathcal{G}, \widehat{\mathbf{S}})$. 
Thus, differences across methods enter only through how their native time representation is constructed before prediction.

To avoid evaluation leakage, all time grids are defined using only the data available during training; test-set event times are never used to define the evaluation support.

For models that require a predefined binning for making discrete-time survival function prediction, including MTLR, DeepHit, Nnet-survival, BNN-MTLR and StaticSurvivalTFM, we use an event-time quantile grid. 
Let
\begin{align*}
    \mathcal{T}_E \ = \ \{T_i: \delta_i=1,\ i\in\data^{tr} \}
\end{align*}
denote the uncensored event times in the fitting data. 
When the number of bins is not specified, we set
\begin{align*}
    K\ = \ \left\lceil \sqrt{|\mathcal{T}_E|}\right\rceil,
\end{align*}
and define the discrete support by the unique empirical quantiles
\begin{align*}
    \mathcal{G}_{\mathrm{disc}} \ = \
    \operatorname{unique}
    \left\{
    Q_{\mathcal T_E}\!\left(\frac{k-1}{K-1}\right)
    :\ k=1,\ldots,K
    \right\},
\end{align*}
where $Q_{\mathcal T_E}$ is the empirical quantile function. 
This grid provides the native discretization on which discrete-time models are trained, with each bin contains the exact same number of uncensored instances.

For DeepHit and Nnet-survival, the implementation requries that the first bin location must smaller than the smallest time in the data. Therefore, we additionally shift the first bin location:
\begin{align*}
    b_1 \ \leftarrow  \
    \max\left\{
    \min_{i\in\data^{tr}} \{T_i-\epsilon\},\ 0
    \right\},
    \qquad
    \epsilon=10^{-5}.
\end{align*}

Continuous-time models do not require a training-time discretization. 
Nevertheless, curve-based evaluation still requires a finite query set. 
For these methods, we define the evaluation support from the observed fitting durations:
\begin{align}
    \mathcal G_{\mathrm{cont}} \ = \ 
    \{0\}\cup
    \operatorname{unique}\{T_i:\ i\in\data^{tr}\},
    \end{align}
with duplicate zeros removed if necessary. 
For quantile-output models such as CQRNN, predicted quantile functions are first converted to survival probabilities on the same support before metric computation.

This protocol preserves each model class's natural time parameterization: discrete-time methods learn on event-time quantile bins, whereas continuous-time methods are queried on the empirical training-time support. 

\subsection{Hyperparameter Tuning}
\label{app:hparam_tuning}

We tune hyperparameters only for neural-network baselines whose performance depends on optimizer and architecture choices. Classical \texttt{scikit-survival} models (CoxPH, CoxNet, cSVR, GB, CWGB, and RSF), StaticSurvivalTFM, and SurvivalPFN are kept fixed at their benchmark 
settings.

For each outer split $r$, hyperparameter selection is performed using only the 
training-side data,
\begin{align*}
    \data^{tr + val}_{(r)} \ = \  \data^{tr}_{(r)} \cup \data^{val}_{(r)} ,
\end{align*}
and the test set is never used for either tuning or discretization. For each
model $m$, we sample $R=10$ configurations from its search space
$\Lambda_m$ and estimate their performance by shuffled $F=5$-fold
cross-validation on $\data^{tr+val}_{(r)}$, using the outer
split seed for reproducibility. The selected configuration is 
\begin{align*}
    \lambda_{m,(r)}^\ast \ = \
    \arg\min_{\lambda \in \{\lambda_1,\ldots,\lambda_R\}}
    \frac{1}{F} \sum_{f=1}^{F}
    \mathcal{L}^{val}_{(f)}(\lambda; m),
\end{align*}
where $\mathcal{L}^{val}_{(f)}$ denotes the model-specific objective loss on the validation on fold $f$. 
Configurations that fail to complete training are treated as infeasible and assigned infinite validation loss. 
After selection, model $m$ is refit on $\mathcal D^{tr+val}_{(r)}$ using $\lambda_{m,(r)}^\ast$, and evaluated once on the held-out test split.

The full model-specific search spaces are summarized in Table~\ref{tab:hparam_search}.
All remaining optimization settings are fixed across tuning trials: AdamW
optimizer, batch size $256$, ReLU activations, no normalization layer, early
stopping, and a maximum budget of $10{,}000$ epochs. For models that require
an explicit learning-rate floor, we set
\begin{align*}
    \eta_{\min}\ = \ 10^{-3}\eta,
\end{align*}
where $\eta$ is the tuned learning rate. Model-specific constants not included
in Table~\ref{tab:hparam_search} are fixed throughout tuning.

\begin{table}[t]
\centering
\setlength{\tabcolsep}{1pt}
\begin{threeparttable}
\caption{
Hyperparameter search spaces for tuned neural baselines.
The table defines the shared base space and model-specific extensions.
}
\label{tab:hparam_search}
\begin{tabular}{
@{}
>{\raggedright\arraybackslash}p{0.23\linewidth}
>{\raggedright\arraybackslash}p{0.36\linewidth}
>{\raggedright\arraybackslash}p{0.39\linewidth}
@{}
}
\toprule
\textbf{Models}
&
\textbf{Hyperparameter meanings}
&
\textbf{Search space}
\\
\midrule
\multicolumn{3}{@{}l}{\textit{Base hyperparameters}} \\

\makecell[l]{DeepHit\\ DeepSurv \\ MTLR \\ Nnet-survival \\ CoxTime \\ DeepAFT-Weibull \\ DeepAFT-Loglogistic}
&
\makecell[l]{
\texttt{lr}: learning rate;\\
\texttt{weight\_decay}: weight decay;\\
\texttt{neurons}: hidden architecture;\\
\texttt{dropout}: dropout probability
}
&
\makecell[l]{
\{ \\ 
\texttt{lr}: \(\{10^{-4},10^{-3},10^{-2}\}\); \\
\texttt{weight\_decay}: \(\{10^{-3},10^{-2},10^{-1}\}\);\\
\texttt{neurons}: \(\{[64], [64,32], [64,64,16],\) \\
\qquad \qquad \quad \([32], [32,16], [32,32,16],\) \\
\qquad \qquad \quad \([16], [16,8], [8], []\}\);\\
\texttt{dropout}: \(\{0.0,0.4,0.6\}\) \\
\}
}
\\
\midrule
\multicolumn{3}{@{}l}{\textit{Model-specific hyperparameters}} \\

CQRNN
&
\texttt{n\_quantiles}: number of predicted quantile levels.
&
\(\{\)
Base;
\texttt{n\_quantiles}: \(\{9,19,39\}\)
\(\}\)
\\

SumoNet
&
\texttt{neurons\_alter}: hidden architecture for the censoring branch.
&
\(\{\)
Base;
\texttt{neurons\_alter} = \texttt{neurons}
\(\}\)
\\

IWSG
&
\texttt{neurons\_alter}: hidden architecture for the censoring branch.
&
\(\{\)
Base;
\texttt{neurons\_alter} = \texttt{neurons}
\(\}\)
\\

SurvivalMDN
&
\texttt{n\_mixtures}: number of mixture components.
&
\(\{\)
Base;
\texttt{n\_mixtures}: \(\{3,5,10,50\}\)
\(\}\)
\\

DSM
&
\texttt{n\_mixtures}: number of mixture components.
&
\(\{\)
Base;
\texttt{n\_mixtures}: \(\{3,5,10,50\}\)
\(\}\)
\\

BNN-MTLR
&
\texttt{pi}: prior mixture probability.
&
\(\{\)
Base;
\texttt{pi}: \(\{0.2,0.5,0.8\}\)
\(\}\)
\\

\bottomrule
\end{tabular}
\end{threeparttable}
\end{table}

\subsection{Evaluation Metrics}
\label{app:metrics}

We evaluate the model's performance on the held-out test set $\data^{te}$. 
For a fitted model, let $\widehat{S}_{E\mid X}(u\mid x_i)$
denote the predicted event-time survival function for subject $i$. When a
point prediction is required, we use the predicted median survival time
\begin{align*}
    \widehat e_i \ = \ \inf\!\left\{u: \widehat{S}_{E\mid X}(u\mid x_i) \leq 1/2\right\},
\end{align*}
and define the corresponding risk score as
\begin{align*}
    \widehat{r}_i \ =\ -\widehat{e}_i,
\end{align*}
so that higher risk corresponds to shorter predicted survival.

\paragraph{Concordance Index.}
Discrimination measures whether a model correctly orders subjects by risk. We
use Harrell's concordance index~\citep{harrell1982evaluating}, computed over
comparable pairs in which subject $i$ experiences an observed event before
subject $j$:
\begin{align*}
    \operatorname{CI} \ = \
    \frac{
    \sum_{i=1}^{n}\sum_{j\neq i}
    \delta_i\,
    \mathbbm{1}[t_i<t_j]\,
    \mathbbm{1}\big[\widehat{r}_i>\widehat{r}_j\big]
    }{
    \sum_{i=1}^{n}\sum_{j\neq i}
    \delta_i\,
    \mathbbm{1}[t_i<t_j]
    }.
\end{align*}
Higher CI values indicate better performance.

\paragraph{Integrated Brier Score.}
The Brier score evaluates probabilistic accuracy at a target time $u$. 
Because the event status at $u$ may be unknown for subjects censored before $u$, we use inverse-probability-of-censoring weighting (IPCW)~\citep{graf1999assessment,robins2000correcting}.
Let $\widehat{S}_C$ denote the Kaplan-Meier estimate of the marginal censoring survival function, estimated from the training-side data. 
The IPCW Brier score is
\begin{align*}
    \operatorname{BS}(u) \ = \
    \frac{1}{n} \sum_{i=1}^{n}
    \left[
    \frac{
    \delta_i\mathbbm{1}[t_i\le u] \cdot \widehat{S}_{E\mid X}(u\mid x_i)^2
    }{
    \widehat{S}_C(t_i)
    }
    +
    \frac{
    \mathbbm{1}[t_i>u]\left(1- \widehat{S}_{E\mid X}(u\mid x_i)\right)^2
    }{
    \widehat{S}_C(u)
    }
    \right].
    \end{align*}
The integrated Brier score averages this quantity over an evaluation horizon $\tau$:
\begin{align}
\label{eq:ibs}
    \operatorname{IBS} \ = \
    \frac{1}{\tau} \int_{0}^{\tau} \operatorname{BS}(u)\,du.
\end{align}
In our experiments, $\tau$ is chosen from the training-side event-time support, so that the evaluation horizon is not determined by the held-out test set. 
Lower IBS indicates better performance.

\paragraph{Mean Absolute Error.}
To evaluate point prediction of event time, we use the mean absolute error based on pseudo-observations (MAE-PO)~\citep{qi2023effective}. 
For uncensored subjects, the observed event time $t_i$ can be used directly. 
For censored subjects, it constructs a pseudo-observation $\widetilde{e}_i$ that estimates the subject's contribution to the marginal Kaplan-Meier survival curve, and then weights the corresponding error by a confidence weight $w_i$. 
The resulting error takes the form
\begin{align*}
    \operatorname{MAE\text{-}PO} \ = \ 
    \frac{
    \sum_{i=1}^{n}
    w_i\,\left|\widehat{e}_i - \widetilde{e}_i\right|
    }{
    \sum_{i=1}^{n} w_i
    }.
\end{align*}
This produces an event-time error metric that can use both uncensored and
censored test subjects. Lower MAE value indicates better performance.

\paragraph{Distribution Calibration.}
Distribution calibration (D-calibration) evaluates whether the predicted survival
distribution is calibrated over the full time axis~\citep{haider2020effective}.
For an uncensored subject, define the probability integral transform value
\begin{align*}
    u_i \ = \ \widehat{S}_{E\mid X}(t_i \mid x_i). 
\end{align*}
If the predicted survival distributions are calibrated, then
$\{u_i:\delta_i=1\}$ should follow a standard uniform distribution on
$[0,1]$. In practice, we partition the probability range $[0,1]$ into $10$ equal-width bins. Uncensored subjects contribute to the bin containing
$\widehat{S}_{E\mid X}(t_i\mid x_i)$. For censored subjects, the event time is only known to
satisfy $e_i > t_i$, so their contribution is ``blurred'' over the portion of
the probability scale consistent with this information, namely
$\left[0, \widehat{S}_{E\mid X}(t_i\mid x_i)\right]$. 
D-calibration is then assessed by the chi-square statistic over the numbers in all the bins.
We report the chi-square statistic for each experiments, a smaller statistic indicates less evidence against distribution calibration.

\paragraph{Log-rank Reliability Test.}
We also use a log-rank goodness-of-fit test to assess whether predicted event
times are statistically aligned with the observed time-to-event data. The test
compares the observed test sample
\begin{align*}
    \mathcal A_{\mathrm{obs}} \ = \ \{(t_i, \delta_i)\}_{i=1}^{n}
\end{align*}
with the predicted event-time sample
\begin{align*}
    \mathcal{A}_{\mathrm{pred}} \ = \ \left\{\left(\widehat{e}_i, \widehat{\delta}_i=1 \right)\right\}_{i=1}^{n},
\end{align*}
where predicted median survival times are treated as uncensored event-time
predictions. 
We report the log-rank statistic for each experiments, a smaller statistic indicate closer agreement between the predicted event-time distribution and the observed test outcomes. 

All metrics are computed using the \texttt{SurvivalEVAL} package~\citep{qi2023survivaleval}.

\subsection{Compute Environment and Runtime Protocol}
\label{app:compute}

All benchmark experiments were executed on a Slurm-managed GPU cluster. Each
experiment was submitted as an independent job with a fixed resource budget: one NVIDIA L40S GPU (48 GB memory), one CPU core from Intel Xeon Gold 6448Y, 16 GB of system memory, and a maximum wall-clock time of 72 hours. The same resource allocation was used across models and datasets for fair evaluation.

The 72-hour wall-clock limit includes all computation required for a benchmark run, including model fitting, hyperparameter tuning when enabled, final refitting,
prediction on the held-out test set, and metric computation. 
If a run did not complete successfully within this budget, we treated the corresponding model-dataset run as failure. 
Failed runs were deemed as the worst (or equally worst if multiple models failed on this dataset) in the ranking procedure described in Appendix~\ref{app:benchmark_protocol}.

\subsection{Performance Reporting Protocol}
\label{app:benchmark_protocol}

We evaluate each model on repeated random train/validation/test splits. Each dataset is evaluated over $R=10$ repetitions. The experiment seed is set from 0 to 9 to these repetition, to support data split and model initialization (if needed). 

For each split $r$, the dataset is divided into a 70\% $\data^{tr + val}_{(r)}$ and a 30\% $\data^{te}_{(r)}$. 
We compute performance scores on the held-out test set via the metrics described in Appendix~\ref{app:metrics}. 
We report the mean and standard deviation across 10 split.

For dataset-level comparisons, each model is ranked separately within each
dataset and metric. 
Rank $1$ is assigned to the best value, with ties receiving the minimum tied rank. 
If a model does not produce a valid result for a given dataset-metric pair, it is assigned one rank below the worst completed method for that pair.

The cSVR baseline produces only a scalar event-time prediction rather than a full survival distribution. Consequently, distributional metrics such as IBS and D-calibration are not applicable to cSVR on any dataset.

Across the full benchmark, we evaluated $22$ models on $61$ datasets, yielding $21\times 61=1281$ model-dataset runs. 
Among these, $28$ runs failed to produce valid results, corresponding to a failure rate of $2.19\%$.
We summarize the failures below.

\begin{itemize}
    \item \textbf{CoxPH} failed on $13$ datasets: micro.censure, nki70, stagec,
    BMT, cancer, zinc, BCCardiotox, vlbw, PDM, actg, METABRIC, AIDS, and
    hdfail. In all cases, the failure was caused by a singular Hessian matrix of
    the Cox partial log-likelihood, which prevented the Newton-Raphson optimizer
    from computing a valid update.\footnote{This issue can often be mitigated by
    adding regularization. However, because we include CoxNet as the regularized
    Cox baseline, we intentionally keep CoxPH as the conventional unregularized
    Cox model.}

    \item \textbf{DeepSurv} failed on $1$ dataset, PDM. The fitted baseline hazard produced a degenerate survival curve equal to $1$ at all time points, resulting in identical predictions for all test instances.

    \item \textbf{CoxTime} failed on $1$ dataset, FRTCS, because the predicted survival matrix contained NaN values.

    \item \textbf{IWSG} failed on $5$ datasets: SUPPORT, MIMIC-IV\_all, hdfail, SEER\_brain, and SEER\_liver. In all cases, the model did not complete training within the 72-hour wall-clock limit.

    \item \textbf{DSM} failed on $3$ datasets: FRTCS, NPC, and MIMIC-IV\_all. For
    FRTCS and NPC, the predicted survival matrix contained NaN values. For
    MIMIC-IV\_all, the run did not complete within the $72$-hour wall-clock limit.


    \item \textbf{SurvivalMDN} failed on $3$ datasets: SEER\_liver, SEER\_brain,
    and hdfail. On hdfail, the run exceeded the available $48$ GB GPU memory. On
    SEER\_liver and SEER\_brain, the model did not complete within the $72$-hour
    wall-clock limit.

    \item \textbf{DeepAFT-Weibull} failed on $1$ dataset, FRTCS, because the model
    did not converge during training.

    \item \textbf{DeepAFT-Loglogistic} failed on $1$ dataset, FRTCS, because the
    model did not converge during training.
\end{itemize}

Let $\operatorname{rank}_{m,\data,q}$ denote the rank of model $m$ on dataset $\data$ for metric $q$.
For each model and metric, we summarize performance by the median rank across datasets,
\[
    \widetilde r_{m,q} \ = \
    \operatorname{median}_{\data}
    \operatorname{rank}_{m,\data,q}.
\]
To quantify uncertainty in this median rank, we use a nonparametric bootstrap over datasets. 
Specifically, for each bootstrap replicate $b=1,\ldots,B$, we sample datasets with replacement from
the benchmark collection, recompute the median rank,
\[
    \widetilde r^{(b)}_{m,q}
    \ = \
    \operatorname{median}_{\data \in \mathcal{B}^{(b)}}
    \operatorname{rank}_{m,\data,q},
\]
and report the 95\% bootstrap confidence interval as
\[
    \left[
    Q_{0.025}\!\left(\{\widetilde r^{(b)}_{m,q}\}_{b=1}^{B}\right),
    Q_{0.975}\!\left(\{\widetilde r^{(b)}_{m,q}\}_{b=1}^{B}\right)
    \right].
\]
The overall rank is computed by pooling ranks over all evaluation metrics before applying the same median-rank and bootstrap procedure.

\section{Benchmark Dataset Details}
\label{app:data}

We construct a large collection of real-world survival datasets from two sources.
First, we use datasets from the \texttt{SurvSet} package~\citep{drysdale2022survset}.
We exclude datasets that are longitudinal, contain competing risks rather than a single event of interest, or are high-dimensional in the sense that the number of features exceeds the number of samples. 
After applying these criteria, 57 \texttt{SurvSet} datasets remain. 
Second, we collect 24 additional survival datasets from textbooks, recent papers, and public software packages, restricting attention to datasets with at most 100,000 samples. 
This yields a total of 81 real-world datasets, which, to our knowledge, constitutes one of the largest survival-analysis benchmark collections considered in a single study. Summary statistics for all datasets are provided in Table~\ref{tab:dataset_summary}.

Among the 81 datasets, we designate 20 datasets as validation datasets for selecting the SurvivalPFN checkpoint during training.\footnote{SurvivalPFN is not trained or fine-tuned on these validation datasets, nor on any other real-world dataset; they are used only for checkpoint selection.}
The remaining 61 datasets are held out for the final benchmark and are not used for checkpoint selection. 
For each validation dataset, we use a deterministic split with 70\% of samples as the in-context training set and 30\% as the inference set.
At each training epoch, the current SurvivalPFN checkpoint is evaluated on all 20 validation datasets. 
We select the checkpoint with the best weighted average integrated Brier score, 
\begin{align*}
    \operatorname{Weighted-IBS}(\theta) \ 
    &=\
    \frac{
    \sum_{\data} \sqrt{N}\,
    \operatorname{IBS}_{\data}(\theta)
    }{
    \sum_{\data} \sqrt{N}
    }\\
    \theta^\ast \ 
    &= \ \arg\min_{\theta}
    \operatorname{Weighted-IBS}(\theta),
\end{align*}
where $\operatorname{IBS}_{\data}(\theta)$ is the IBS of checkpoint $\theta$ on the inference split of dataset $\data$, calculated using \Eqref{eq:ibs}. 
The square-root weighting gives larger datasets more influence while avoiding domination by the largest cohorts.

The validation datasets were chosen to cover a broad range of empirical regimes.
In particular, we considered sample size, censoring rate, and the tail survival
probability estimated by the Kaplan-Meier curve,
$\widehat{S}_{\mathrm{KM}}(t_{\max})$.

\begin{longtable}{@{}lrrrrrr@{}}
\caption{Summary of survival datasets used for checkpoint selection and held-out benchmarking. Features (cat. feat.) reports the total number of covariates, with the number of categorical covariates in parentheses.}
\label{tab:dataset_summary}\\
\toprule
\textbf{Dataset} &
\makecell{\textbf{Number of}\\\textbf{samples}} &
\makecell{\textbf{Features}\\\textbf{(cat. feat.)}} &
\makecell{\textbf{Missing}\\\textbf{features}} &
\makecell{\textbf{Missing}\\\textbf{rate}} &
\makecell{\textbf{Censoring}\\\textbf{rate}} &
$\widehat S(t_{\max})$ \\
\midrule
\endfirsthead
\toprule
\textbf{Dataset} &
\makecell{\textbf{Number of}\\\textbf{samples}} &
\makecell{\textbf{Features}\\\textbf{(cat.)}} &
\makecell{\textbf{Missing}\\\textbf{feat.}} &
\makecell{\textbf{Missing}\\\textbf{rate}} &
\makecell{\textbf{Censoring}\\\textbf{rate}} &
$\widehat S(t_{\max})$ \\
\midrule
\endhead
\midrule
\multicolumn{7}{r}{\emph{Continued on next page}}\\
\endfoot
\bottomrule
\endlastfoot
\multicolumn{7}{@{}l}{\textit{Validation datasets for checkpoint selection}}\\
\addlinespace[2pt]
ovarian & 26 & 4 (4) & 0 & 0.0\% & 53.8\% & 49.7\% \\
glioma & 37 & 4 (4) & 0 & 0.0\% & 37.8\% & 34.9\% \\
leukemia & 42 & 3 (2) & 0 & 0.0\% & 28.6\% & 18.9\% \\
pharmacoSmoking & 125 & 16 (16) & 0 & 0.0\% & 28.8\% & 28.8\% \\
d.oropha.rec & 192 & 24 (24) & 0 & 0.0\% & 27.6\% & 16.5\% \\
Pbc3 & 349 & 19 (19) & 4 & 0.4\% & 82.5\% & 63.4\% \\
retinopathy & 394 & 11 (11) & 0 & 0.0\% & 60.7\% & 53.1\% \\
Rossi & 432 & 7 (5) & 0 & 0.0\% & 73.6\% & 73.6\% \\
phpl04K8a & 442 & 21 (21) & 0 & 0.0\% & 46.6\% & 0.0\% \\
prostate & 502 & 25 (25) & 4 & 0.2\% & 29.5\% & 23.8\% \\
uis & 628 & 14 (14) & 3 & 0.6\% & 19.1\% & 16.6\% \\
grace & 1,000 & 5 (5) & 0 & 0.0\% & 67.6\% & 63.5\% \\
rdata & 1,040 & 6 (6) & 0 & 0.0\% & 47.4\% & 38.1\% \\
TRACE & 1,878 & 6 (6) & 0 & 0.0\% & 49.0\% & 43.9\% \\
Aids2 & 2,839 & 12 (12) & 0 & 0.0\% & 38.0\% & 5.8\% \\
UnempDur & 3,241 & 6 (6) & 0 & 0.0\% & 38.7\% & 10.5\% \\
smarto & 3,873 & 34 (34) & 16 & 4.8\% & 88.1\% & 72.5\% \\
dataDIVAT1 & 5,943 & 16 (16) & 0 & 0.0\% & 83.6\% & 0.0\% \\
oldmort & 6,495 & 14 (14) & 0 & 0.0\% & 69.7\% & 0.0\% \\
prostateSurvival & 14,294 & 6 (6) & 0 & 0.0\% & 94.4\% & 81.4\% \\
\midrule
\multicolumn{7}{@{}l}{\textit{Held-out test datasets for benchmarking}}\\
\addlinespace[2pt]
Bergamaschi & 82 & 10 (10) & 0 & 0.0\% & 65.9\% & 58.3\% \\
larynx & 90 & 4 (3) & 0 & 0.0\% & 44.4\% & 29.7\% \\
lupus & 95 & 5 (2) & 1 & 0.2\% & 70.5\% & 36.8\% \\
micro.censure & 117 & 81 (81) & 0 & 0.0\% & 77.8\% & 42.4\% \\
cgd & 128 & 23 (23) & 0 & 0.0\% & 65.6\% & 47.6\% \\
veteran & 137 & 8 (8) & 0 & 0.0\% & 6.6\% & 0.0\% \\
nki70 & 144 & 76 (76) & 0 & 0.0\% & 66.7\% & 48.0\% \\
stagec & 146 & 18 (18) & 2 & 0.4\% & 63.0\% & 55.8\% \\
burn & 154 & 13 (13) & 0 & 0.0\% & 68.8\% & 48.0\% \\
BMT & 187 & 39 (24) & 11 & 1.1\% & 54.5\% & 53.3\% \\
WPBC & 198 & 32 (1) & 1 & 0.1\% & 76.3\% & 65.3\% \\
Melanoma & 205 & 5 (5) & 0 & 0.0\% & 72.2\% & 64.5\% \\
hepatoCellular & 227 & 43 (43) & 26 & 33.6\% & 57.3\% & 51.2\% \\
cancer & 228 & 28 (28) & 4 & 1.0\% & 27.6\% & 5.0\% \\
NCCTG & 228 & 8 (1) & 6 & 3.7\% & 27.6\% & 5.0\% \\
mgus & 241 & 12 (12) & 4 & 8.7\% & 6.6\% & 1.8\% \\
e1684 & 284 & 3 (3) & 0 & 0.0\% & 31.0\% & 28.3\% \\
HFCR & 299 & 11 (5) & 0 & 0.0\% & 67.9\% & 57.6\% \\
ova & 358 & 11 (11) & 0 & 0.0\% & 25.7\% & 22.2\% \\
diabetes & 394 & 4 (4) & 0 & 0.0\% & 60.7\% & 53.0\% \\
PBC & 418 & 17 (7) & 12 & 14.5\% & 61.5\% & 35.3\% \\
zinc & 431 & 20 (20) & 0 & 0.0\% & 81.2\% & 79.0\% \\
Unemployment & 452 & 7 (7) & 0 & 0.0\% & 43.4\% & 5.8\% \\
whas500 & 461 & 17 (17) & 0 & 0.0\% & 61.8\% & 0.0\% \\
cost & 518 & 13 (13) & 0 & 0.0\% & 22.0\% & 22.0\% \\
BCCardiotox & 531 & 25 (15) & 22 & 5.7\% & 89.8\% & 69.0\% \\
GBM & 591 & 10 (7) & 3 & 4.5\% & 17.1\% & 0.0\% \\
vlbw & 617 & 41 (41) & 5 & 1.7\% & 82.7\% & 0.0\% \\
GBSG2 & 686 & 8 (2) & 0 & 0.0\% & 56.4\% & 34.3\% \\
FRTCS & 697 & 13 (13) & 2 & 0.1\% & 89.7\% & 57.2\% \\
kidney\_transplant & 863 & 4 (3) & 0 & 0.0\% & 83.8\% & 72.4\% \\
dataOvarian1 & 912 & 162 (162) & 0 & 0.0\% & 40.4\% & 11.9\% \\
colon & 929 & 12 (12) & 1 & 0.2\% & 51.3\% & 45.5\% \\
credit & 1,000 & 31 (20) & 1 & 0.6\% & 30.0\% & 5.9\% \\
PDM & 1,000 & 8 (5) & 0 & 0.0\% & 60.3\% & 0.0\% \\
LeukSurv & 1,043 & 29 (29) & 0 & 0.0\% & 15.7\% & 4.4\% \\
actg & 1,151 & 17 (17) & 0 & 0.0\% & 91.7\% & 90.1\% \\
COVID & 1,422 & 9 (2) & 0 & 0.0\% & 90.5\% & 1.1\% \\
WHAS & 1,638 & 6 (4) & 0 & 0.0\% & 57.9\% & 35.7\% \\
dataDIVAT2 & 1,837 & 4 (4) & 0 & 0.0\% & 68.3\% & 31.1\% \\
scania & 1,931 & 8 (8) & 0 & 0.0\% & 43.8\% & 15.9\% \\
churn & 1,958 & 19 (10) & 0 & 0.0\% & 52.4\% & 24.4\% \\
METABRIC & 1,981 & 79 (73) & 0 & 0.0\% & 55.2\% & 11.6\% \\
AIDS & 2,139 & 22 (14) & 0 & 0.0\% & 24.4\% & 0.0\% \\
NACD & 2,396 & 48 (33) & 0 & 0.0\% & 36.4\% & 12.5\% \\
rott2 & 2,982 & 12 (12) & 0 & 0.0\% & 57.3\% & 26.5\% \\
divorce & 3,371 & 4 (4) & 0 & 0.0\% & 69.4\% & 55.7\% \\
acath & 3,504 & 3 (3) & 1 & 11.9\% & 33.4\% & 0.0\% \\
NWTCO & 4,028 & 6 (5) & 0 & 0.0\% & 85.8\% & 84.9\% \\
dataDIVAT3 & 4,267 & 16 (16) & 0 & 0.0\% & 94.4\% & 84.7\% \\
Framingham & 4,699 & 17 (17) & 2 & 0.1\% & 68.7\% & 60.5\% \\
NPC & 6,449 & 9 (3) & 0 & 0.0\% & 80.8\% & 75.6\% \\
Dialysis & 6,805 & 72 (72) & 0 & 0.0\% & 76.4\% & 58.7\% \\
FLCHAIN & 7,871 & 23 (2) & 1 & 0.7\% & 72.5\% & 68.2\% \\
MSKCC & 8,130 & 206 (199) & 2 & 0.0\% & 70.3\% & 34.8\% \\
SUPPORT & 9,105 & 31 (11) & 10 & 4.0\% & 31.9\% & 24.1\% \\
employee & 11,991 & 16 (9) & 0 & 0.0\% & 83.4\% & 50.8\% \\
MIMIC-IV\_all & 38,520 & 91 (6) & 0 & 0.0\% & 66.7\% & 0.0\% \\
hdfail & 52,422 & 88 (88) & 0 & 0.0\% & 94.5\% & 56.1\% \\
SEER\_brain & 73,703 & 10 (4) & 0 & 0.0\% & 40.1\% & 26.6\% \\
SEER\_liver & 82,841 & 14 (1) & 0 & 0.0\% & 37.6\% & 18.0\% \\
\end{longtable}

We briefly describe below the 24 datasets that are not taken from
\texttt{SurvSet}. For the \texttt{SurvSet} datasets, please refer to~\citet{drysdale2022survset}.
\begin{itemize}
    \item \textit{leukemia}: The \textit{leukemia} dataset records remission survival for 42 patients with sex, treatment assignment, and log white blood cell count~\citep{kleinbaum1996survival}. 

    \item \textit{larynx}: The \textit{larynx} cancer dataset follows 90 male patients from first treatment until death or study end, with disease stage, age, and diagnosis year~\citep{kardaun1983statistical}. 

    \item \textit{lupus}: The \textit{lupus} dataset records survival times and diagnostic features for systemic lupus erythematosus patients~\citep{merrell1955determination}. 

    \item \textit{BMT}: The Bone Marrow Transplant (\textit{BMT}) Children dataset describes pediatric patients with malignant and nonmalignant hematologic diseases who underwent unrelated-donor allogeneic hematopoietic stem cell transplantation~\citep{sikora2019guider}.

    \item \textit{NCCTG}: The \textit{NCCTG} lung cancer dataset follows advanced lung cancer patients with physician-rated and patient-rated performance scores~\citep{loprinzi1994prospective}.
    
    \item \textit{HFCR}: The Heart Failure Clinical Records (\textit{HFCR}) dataset contains follow-up outcomes and clinical measurements for 299 heart failure patients~\citep{chicco2020machine}.

    \item \textit{Rossi}: The \textit{Rossi} dataset follows 432 Maryland prison releasees for one year to study recidivism after financial-aid treatment assignment~\citep{rossi1980money}.

    \item \textit{BCCardiotox}: \textit{BCCardiotox} follows HER2+ breast cancer patients treated with potentially cardiotoxic therapies and records time to cancer therapy-related cardiac dysfunction~\citep{pineiro2023cardiotoxicity}.

    \item \textit{GBM}: The TCGA glioblastoma multiforme (\textit{GBM}) dataset contains clinical survival information for primary brain tumor patients~\citep{cancer2008comprehensive}. 

    \item \textit{kidney\_transplant}: The \textit{kidney\_transplant} dataset records post-transplant death times with recipient age, gender, and race~\citep{klein2003survival}.

    \item \textit{credit}: The \texttt{PySurvival}~\citep{pysurvival_cite} credit-risk dataset adapts the German Credit data to model the time until a loan is fully repaid.

    \item \textit{PDM}: The \texttt{PySurvival}~\citep{pysurvival_cite} predictive-maintenance (\textit{PDM}) dataset models the time until an industrial machine breaks.

    \item \textit{COVID}: The COVID-19 Asian discharge dataset models time to hospital discharge for patients with COVID-19 diagnosis~\citep{kumar2022learning}. 

    \item \textit{WHAS}: The Worcester Heart Attack Study (\textit{WHAS}) follows acute myocardial infarction patients after hospital admission~\citep{hosmer2008applied}.

    \item \textit{churn}: The \texttt{PySurvival}~\citep{pysurvival_cite} churn dataset models when SaaS customers stop their monthly subscription. 

    \item \textit{METABRIC}: \textit{METABRIC} profiles primary breast tumors with long-term clinical follow-up for breast-cancer prognosis~\citep{curtis2012genomic}.
    
    \item \textit{AIDS}: ACTG 175 records HIV-infected adults randomized to nucleoside monotherapy or combination therapy~\citep{hammer1996trial}.

    \item \textit{NACD}: The Northern Alberta Cancer Dataset contains clinical records for several cancer sites, which used to predict the death from cancer onset~\citep{yu2011learning}. 

    \item \textit{NPC}: The nasopharyngeal carcinoma dataset follows patients from Sun Yat-sen University Cancer Center for progression-free survival after radiotherapy with or without chemotherapy~\citep{tang2016establishment}. Original training and validation cohorts are combined.

    \item \textit{MSKCC}: The MSK-IMPACT cohort contains targeted sequencing and clinical annotations from more than 10,000 advanced cancer cases~\citep{zehir2017mutational}. 

    \item \textit{MIMIC-IV\_all}: \textit{MIMIC-IV\_all} is derived from the MIMIC-IV critical care database, which contains de-identified electronic health records for patients admitted to intensive care units~\citep{johnson2023mimic}. We use the all-cause mortality cohort curated by~\citet{qi2023effective}, restricting to patients who survived at least 24 hours after ICU admission.

    \item \textit{employee}: The \texttt{PySurvival}~\citep{pysurvival_cite} employee-retention dataset models when employees leave a company using HR and workload attributes.

    \item \textit{SEER\_brain}: \textit{SEER\_brain} is a brain cancer cohort derived from the Surveillance, Epidemiology, and End Results (SEER) Program registry. The task is to predict time from cancer diagnosis to death or censoring, using the cohort curated by~\citet{qi2024conformalized}.
    
    \item \textit{SEER\_liver}: \textit{SEER\_liver} is the corresponding liver cancer cohort from SEER, with the same time-to-death prediction target and curation protocol~\citep{qi2024conformalized}.
    
\end{itemize}

\section{Additional Experimental Results}
\subsection{RQ1: Predictive Performance}
\label{app:rq1}

Figure~\ref{fig:main_results} has demonstrate the overall performance across 61 benchmark datasets. 
The intervals reflect uncertainty in the estimated median rank across the benchmark collection, rather than the variability of ranks themselves.
In this appendix, we further analyze the benchmark results by stratifying datasets according to sample size and censoring rate. 

The sample-size stratification reveals that SurvivalPFN is particularly effective in the small-data regime. On small datasets (Figure~\ref{fig:best_model_small_data}), SurvivalPFN is clearly separated from most baselines in overall rank and performs strongly across all five metrics, including IBS, CI, D-calibration, MAE, and Log-Rank.
Traditional statistical survival methods and tree-based methods performance in the second tier, while neural-network-based methods generally performance the worst. 

\begin{figure}[t]
    \centering
    \includegraphics[width=\linewidth]{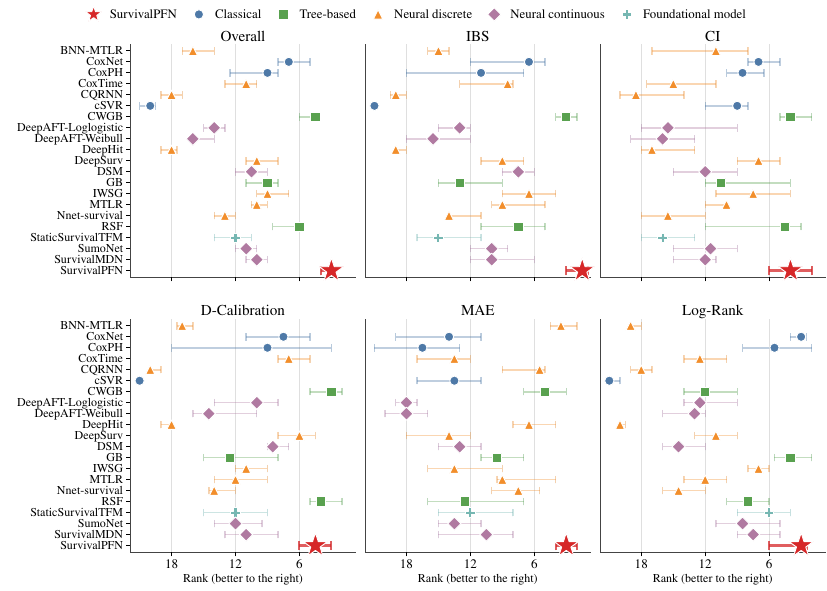}
    \caption{\textbf{Model ranks across 24 small-size datasets ($N < 500$).} Points/stars denote median ranks across datasets, with horizontal bars showing 95\% bootstrap confidence intervals for the median rank.}
    \label{fig:best_model_small_data}
\end{figure}

For medium-sized datasets (Figure~\ref{fig:best_model_mid_data}), SurvivalPFN remains among the strongest methods overall, with especially competitive performance on IBS, Log-Rank. 
However, the gap to the best baselines becomes smaller, and several nerual-network-based methods (\eg, MTLR, SurvivalMDN) become competitive on the overall performance and also on individual metrics such as CI or D-calibration. 

On large datasets (Figure~\ref{fig:best_model_large_data}), the advantage of SurvivalPFN decreases further: its overall, IBS, and CI ranks move leftward compared with the small-data setting, although it remains competitive on MAE and Log-Rank. 
This trend suggests a size-regime trade-off. 
As more observations become available, deep-learning-based estimators can benefit more directly from the larger sample size, whereas SurvivalPFN is constrained by finite context length and the need to summarize large tables during inference.

\begin{figure}[t]
    \centering
    \includegraphics[width=\linewidth]{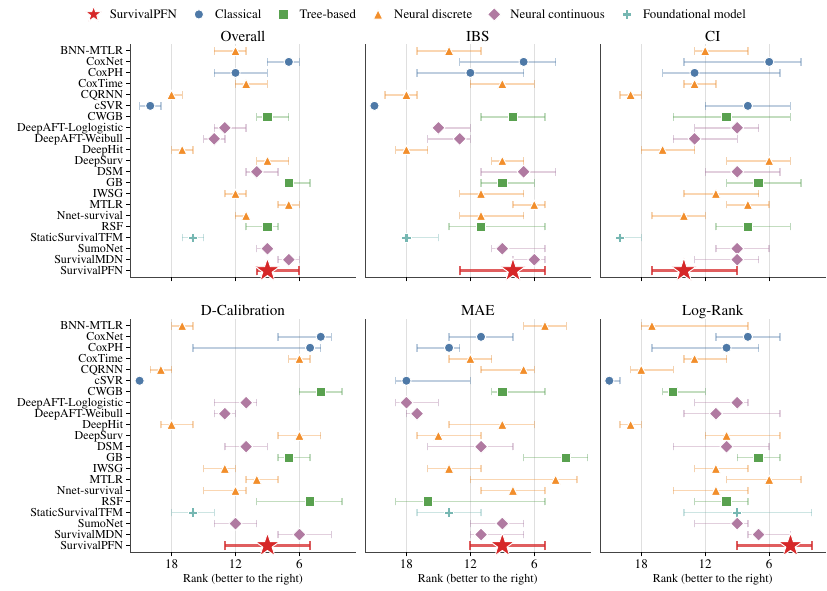}
    \caption{\textbf{Model ranks across 27 medium-size datasets ($500 \leq N < 5000$).} Points/stars denote median ranks across datasets, with horizontal bars showing 95\% bootstrap confidence intervals for the median rank.}
    \label{fig:best_model_mid_data}
\end{figure}

\begin{figure}[t]
    \centering
    \includegraphics[width=\linewidth]{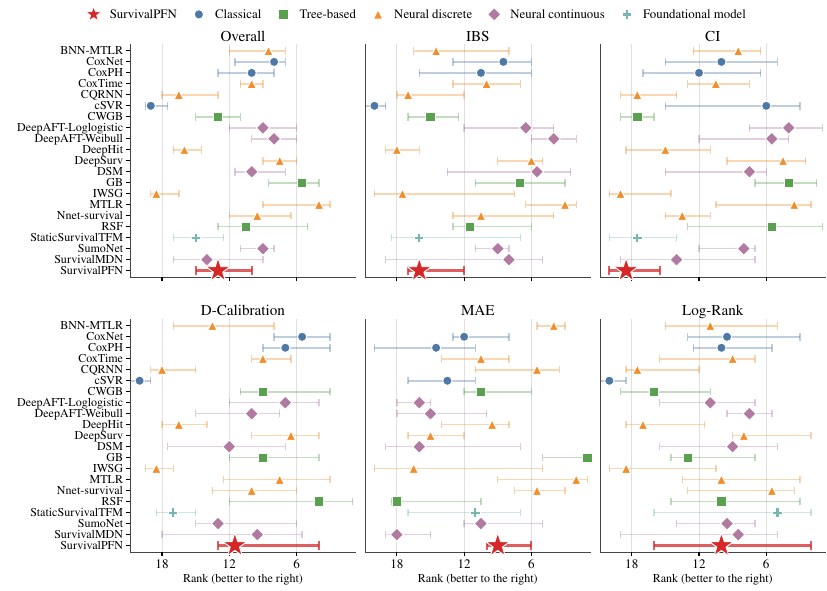}
    \caption{\textbf{Model ranks across 10 medium-size datasets ($N \geq 5000$).} Points/stars denote median ranks across datasets, with horizontal bars showing 95\% bootstrap confidence intervals for the median rank.}
    \label{fig:best_model_large_data}
\end{figure}

In contrast, the censoring-rate stratification shows substantially more stable behavior. Across low-, medium-, and high-censoring subsets (Figures~\ref{fig:best_model_low_censor}-\ref{fig:best_model_high_censor}), SurvivalPFN remains in the top group overall and retains strong performance on IBS, MAE, and Log-Rank. This consistency is encouraging because censoring affects the available event-time information and can differentially impact discrimination, calibration, and distributional accuracy metrics. The results suggest that the synthetic prior used to train SurvivalPFN provides useful robustness across censoring regimes. Although the strongest baseline varies across metrics and censoring levels, no single baseline matches SurvivalPFN's overall consistency across dataset strata, censoring strata, and evaluation metrics.

\begin{figure}[t]
    \centering
    \includegraphics[width=\linewidth]{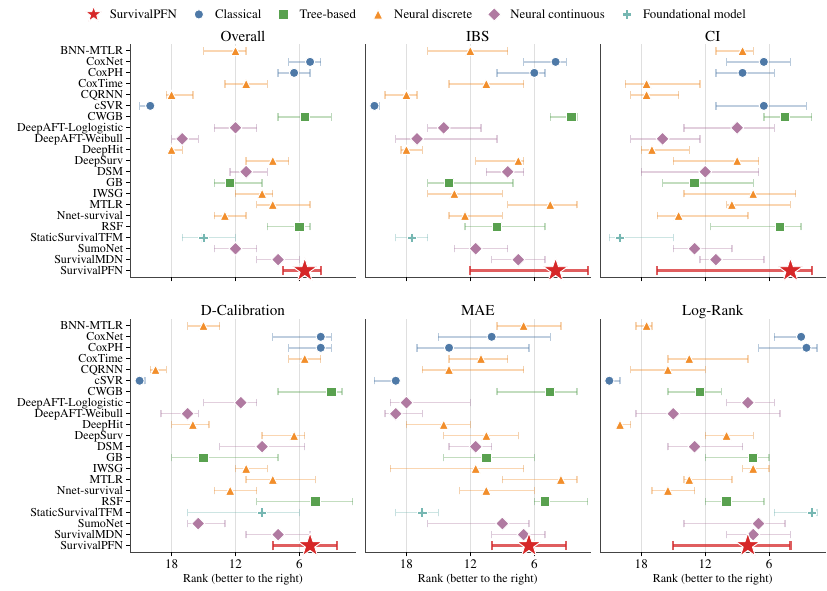}
    \caption{\textbf{Model ranks across 12 low-censoring-rate datasets (censoring rate $< 33\%$).} Points/stars denote median ranks across datasets, with horizontal bars showing 95\% bootstrap confidence intervals for the median rank.}
    \label{fig:best_model_low_censor}
\end{figure}

\begin{figure}[t]
    \centering
    \includegraphics[width=\linewidth]{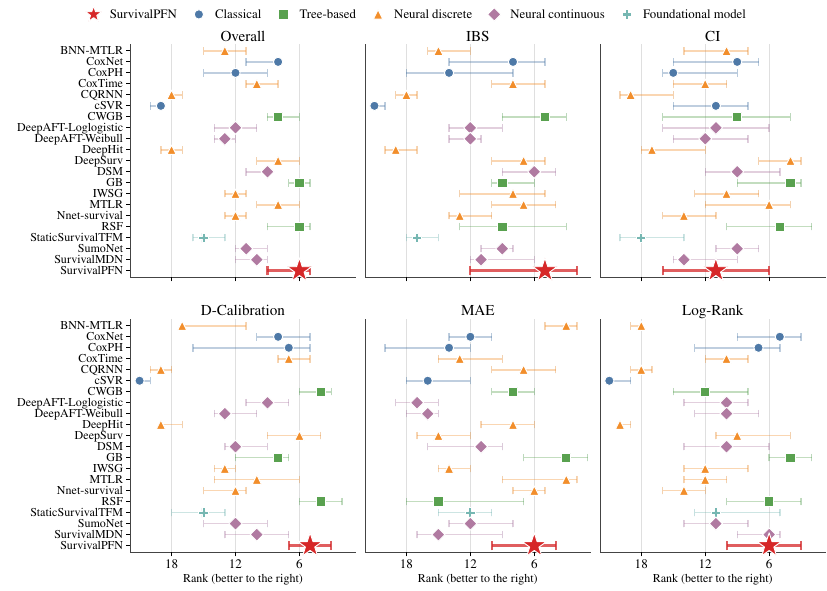}
    \caption{\textbf{Model ranks across 25 medium-censoring-rate datasets (censoring rate $\geq 33\%$ and $< 67\%$).} Points/stars denote median ranks across datasets, with horizontal bars showing 95\% bootstrap confidence intervals for the median rank.}
    \label{fig:best_model_mid_censor}
\end{figure}

\begin{figure}[t]
    \centering
    \includegraphics[width=\linewidth]{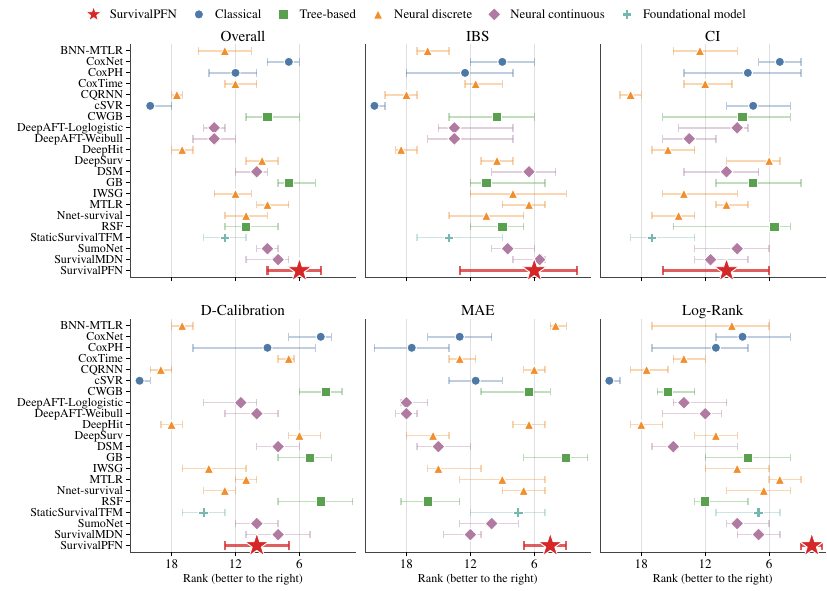}
    \caption{\textbf{Model ranks across 24 high-censoring-rate datasets (censoring rate $\geq 67\%$).} Points/stars denote median ranks across datasets, with horizontal bars showing 95\% bootstrap confidence intervals for the median rank. }
    \label{fig:best_model_high_censor}
\end{figure}

\subsection{RQ2: Computational Efficiency}
\label{app:rq2}
For the runtime comparison in Figure~\ref{fig:teaser}, datasets with at least one failed run are excluded from this runtime aggregation, so every method is compared on the same set of completed datasets. 
This prevents methods from appearing artificially faster because they crashed, timed out, or failed to produce valid predictions on more demanding datasets. Predictive-performance rankings still follow the failure handling protocol in Appendix~\ref{app:benchmark_protocol}.

\subsection{RQ3: Sensitivity to Training-Set Size}
\label{app:rq3}

Figure~\ref{fig:diff_ratio_appendix} extends the training-ratio analysis to 16 additional datasets. 
Across these datasets, increasing the training ratio improves all methods, with IBS decreasing and CI increasing most sharply when moving from very small context sizes to moderate context sizes. 
SurvivalPFN remains stable across ratios and is frequently among the best methods on both metrics, especially in low-data regimes. 

\begin{figure}[p]
    \centering
    \captionsetup[subfigure]{font=footnotesize}

    \begin{subfigure}{0.48\linewidth}
        \centering
        \includegraphics[width=\linewidth]{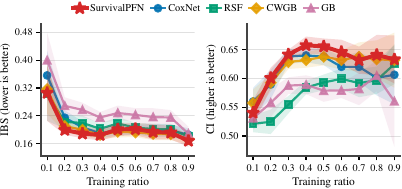}
        \vspace{-1.6em}
        \caption{larynx}
        \label{fig:panel-01}
    \end{subfigure}
    \hfill
    \begin{subfigure}{0.48\linewidth}
        \centering
        \includegraphics[width=\linewidth]{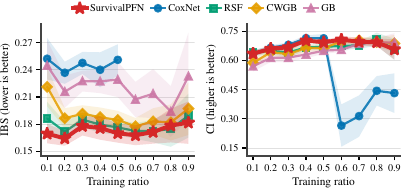}
        \vspace{-1.6em}
        \caption{nki70}
        \label{fig:panel-02}
    \end{subfigure}

    \vspace{0.4em}

    \begin{subfigure}{0.48\linewidth}
        \centering
        \includegraphics[width=\linewidth]{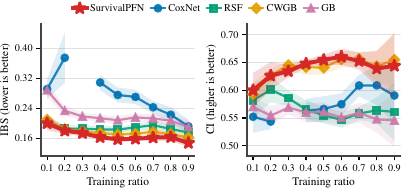}
        \vspace{-1.6em}
        \caption{WPBC}
        \label{fig:panel-05}
    \end{subfigure}
    \hfill
    \begin{subfigure}{0.48\linewidth}
        \centering
        \includegraphics[width=\linewidth]{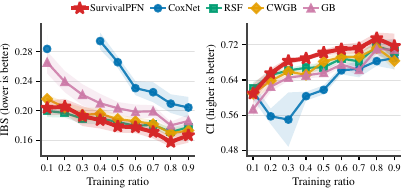}
        \vspace{-1.6em}
        \caption{hepatoCellular}
        \label{fig:panel-06}
    \end{subfigure}

    \vspace{0.4em}

    \begin{subfigure}{0.48\linewidth}
        \centering
        \includegraphics[width=\linewidth]{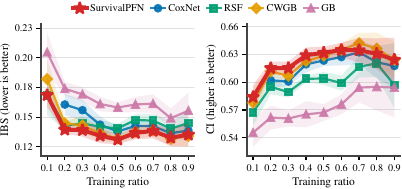}
        \vspace{-1.6em}
        \caption{NCCTG}
        \label{fig:panel-09}
    \end{subfigure}
    \hfill
    \begin{subfigure}{0.48\linewidth}
        \centering
        \includegraphics[width=\linewidth]{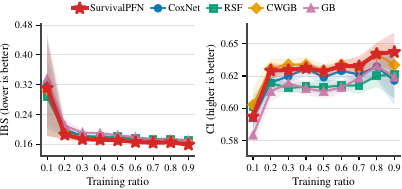}
        \vspace{-1.6em}
        \caption{ova}
        \label{fig:panel-10}
    \end{subfigure}

    \vspace{0.4em}

    \begin{subfigure}{0.48\linewidth}
        \centering
        \includegraphics[width=\linewidth]{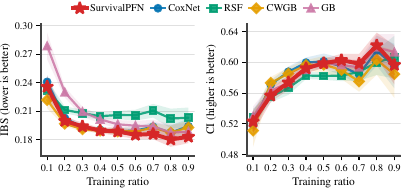}
        \vspace{-1.6em}
        \caption{diabetes}
        \label{fig:panel-11}
    \end{subfigure}
    \hfill
    \begin{subfigure}{0.48\linewidth}
        \centering
        \includegraphics[width=\linewidth]{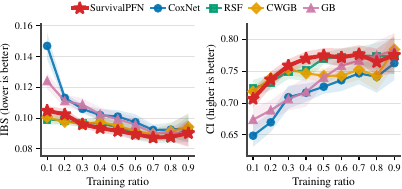}
        \vspace{-1.6em}
        \caption{zinc}
        \label{fig:panel-12}
    \end{subfigure}

    \vspace{0.4em}

    \begin{subfigure}{0.48\linewidth}
        \centering
        \includegraphics[width=\linewidth]{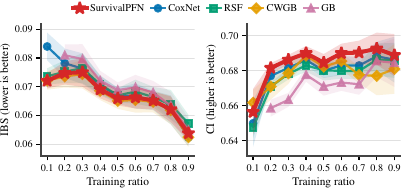}
        \vspace{-1.6em}
        \caption{GBM}
        \label{fig:panel-13}
    \end{subfigure}
    \hfill
    \begin{subfigure}{0.48\linewidth}
        \centering
        \includegraphics[width=\linewidth]{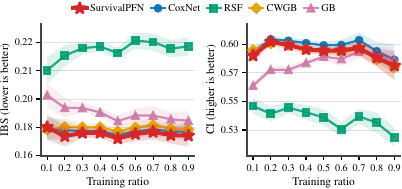}
        \vspace{-1.6em}
        \caption{dataDIVAT2}
        \label{fig:panel-14}
    \end{subfigure}

    \vspace{0.4em}

    \begin{subfigure}{0.48\linewidth}
        \centering
        \includegraphics[width=\linewidth]{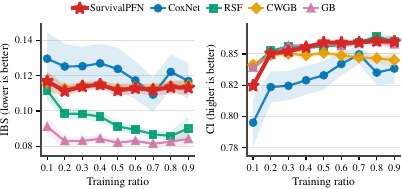}
        \vspace{-1.6em}
        \caption{churn}
        \label{fig:panel-15}
    \end{subfigure}
    \hfill
    \begin{subfigure}{0.48\linewidth}
        \centering
        \includegraphics[width=\linewidth]{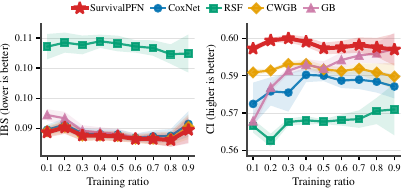}
        \vspace{-1.6em}
        \caption{acath}
        \label{fig:panel-16}
    \end{subfigure}

    \caption{Sensitivity to the training/context ratio across selected 16 datasets. }
    \label{fig:diff_ratio_appendix}
\end{figure}

\subsection{RQ4: Compare with General Tabular Foundational Models}
\label{app:rq4}
This section compares the SurvivalPFN, StaticSurvialTFM with other general TFMs. 
Specifically, we includes the most advanced TFMs including:
TabPFN v2.5~\citep{grinsztajn2025tabpfn}, TabICL v2~\citep{qu2026tabiclv2}, MITRA~\citep{zhang2025mitra}, and TabDPT~\citep{ma2024tabdpt} regressors.

For the these TFM regression baselines, we train only on uncensored training examples because these models cannot natively deal with right-censoring datasets. 
TabPFN and TabICL are used as quantile regressors: they predict event-time quantiles, which are monotonized and converted into survival curves for distributional evaluation. 
MITRA and TabDPT are used as point regressors: they predict a single event time per test subject (just like cSVR). 
Since point predictions do not define a full survival distribution, distributional metrics such as IBS and D-calibration are not calculated and ranked as the worst among all the methods, while CI, MAE, and log-rank style comparisons are computed from the predicted event times.

For StaticSurvialTFM~\citep{kim2026tabular}, it is a static fomula that can convert any classifier to survival predictor. 
We instantiate this static formulation with TabDPT and MITRA classifier backbones, predict failure probabilities over the cutoff grid, convert them to survival probabilities, and enforce monotone survival curves.
We choose TabDPT to match with the model architecture of SurvivalPFN (for a fair comparison). 
We include MITRA as it is the best performing backbone described in~\citet{kim2026tabular}.

The results present here uses the same evaluation protocal as described in Appendix~\ref{app:rq1}. 
Figure~\ref{fig:tfm_appendix} expands the comparison in Figure~\ref{fig:tfm_main} by including both general TFMs and the StaticSurvivalTFM wrapper instantiated with TabDPT and MITRA. Overall, SurvivalPFN remains the strongest and most consistent method: it achieves the best aggregate rank and ranks first or near-first across nearly all metrics. The largest gains appear for IBS and D-calibration, where SurvivalPFN clearly outperforms both direct TFM baselines and StaticSurvivalTFM variants, indicating better probabilistic survival estimation and calibration. SurvivalPFN also performs best on CI and log-rank, showing that its advantage extends beyond distributional accuracy to risk ranking and group separation.

The StaticSurvivalTFM performance is really sensitive to the backbone TFM model -- which aligns the findings in~\citep{kim2026tabular}.
Using MITRA as the backbone improve over using TabDPT, especially for CI and log-rank, confirming that survival-specific label construction is helpful. 
However, their performance is less stable across metrics: StaticSurvivalTFM (MITRA) is competitive on MAE, CI, but does not match SurvivalPFN on IBS, D-calibration and Log-rank; StaticSurvivalTFM (TabDPT) performs well on log-rank but is weaker on other. 
In contrast, the direct regression baselines -- TabPFN, TabICL, MITRA, and TabDPT -- are consistently worse, despite being strong general tabular predictors. 

These results support the main conclusion of \textbf{RQ4}: survival prediction benefits from a foundation model trained with survival-specific supervision and censoring-aware synthetic tasks, rather than relying only on generic tabular in-context learning.

\begin{figure}
    \centering
    \includegraphics[width=\linewidth]{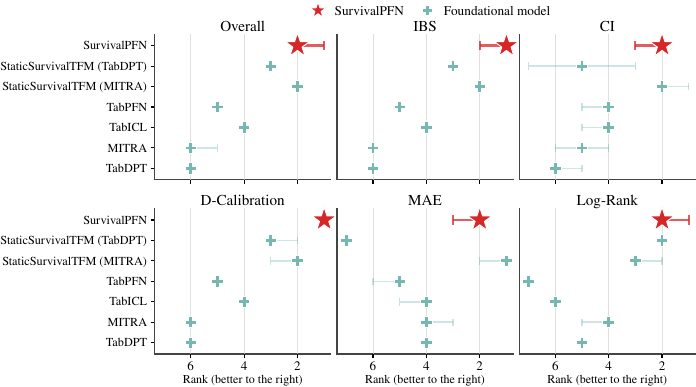}
    \caption{\textbf{Compare SurvivalPFN with general TFMs across 61 benchmark datasets.} Plotting conventions follow Figure~\ref{fig:main_results}.}
    \label{fig:tfm_appendix}
\end{figure}

\subsection{RQ5: Ablation Studies}
\label{app:rq5}

\begin{table*}[t]
\centering
\caption{\textbf{SurvivalPFN ablation configurations.} Each row corresponds to one pretrained SurvivalPFN
variant. The internal checkpoint-path column from the experiment log is omitted.
``Surv.-dist.'' denotes the survival-distribution prior. NLL denotes the one-hot
discrete negative log-likelihood over transformed-time bins; CE denotes the smoothed histogram cross-entropy loss.}
\label{tab:survivalpfn_ablation_configs}
\begin{tabular}{@{}ccllllc@{}}
\toprule
\textbf{Model}
& \makecell{\textbf{Predictive}\\\textbf{Pretrain}}
& \makecell{\textbf{Prior}}
& \makecell{\textbf{Time}\\\textbf{Transformation}}
& \makecell{\textbf{Loss}}
& \makecell{\textbf{Query}\\\textbf{Schedule}}
& \makecell{\textbf{Variable}\\\textbf{Train Ratio}} \\
\midrule
v01$^{\dagger}$ & \benchcmark & Surv.-dist.   & \texttt{lognormal2normal} & NLL & Both       & \benchxmark \\
v02              & \benchcmark & Surv.-dist.   & \texttt{time2quantile}    & NLL & Both       & \benchxmark \\
v03              & \benchxmark & Surv.-dist.   & \texttt{time2quantile}    & CE  & Random     & \benchcmark \\
v04              & \benchcmark & Surv.-dist.   & \texttt{time2quantile}    & CE  & Random     & \benchcmark \\
v05              & \benchcmark & Surv.-dist.   & \texttt{time2quantile}    & NLL & Random     & \benchcmark \\
v06              & \benchcmark & Surv.-dist.   & \texttt{lognormal2normal} & NLL & Random     & \benchcmark \\
v07              & \benchcmark & Kitchen-sink  & \texttt{lognormal2normal} & CE  & Random     & \benchcmark \\
v08              & \benchcmark & Kitchen-sink  & \texttt{lognormal2normal} & NLL & Event-only & \benchcmark \\
v09              & \benchcmark & Surv.-dist.   & \texttt{lognormal2normal} & CE  & Random     & \benchcmark \\
v10              & \benchcmark & Surv.-dist.   & \texttt{lognormal2normal} & NLL & Event-only & \benchcmark \\
v11              & \benchcmark & Naive         & \texttt{lognormal2normal} & CE  & Random     & \benchcmark \\
v12              & \benchcmark & Naive         & \texttt{lognormal2normal} & NLL & Event-only & \benchcmark \\
v13              & \benchxmark & Naive         & \texttt{lognormal2normal} & CE  & Random     & \benchcmark \\
v14              & \benchxmark & Naive         & \texttt{lognormal2normal} & NLL & Event-only & \benchcmark \\
\bottomrule
\end{tabular}

\vspace{0.5em}
\begin{minipage}{0.97\linewidth}
\footnotesize
$^{\dagger}$Best validation run; this checkpoint is used as the default SurvivalPFN model elsewhere
in the paper.
\end{minipage}
\end{table*}
\paragraph{SurvivalPFN Ablation Configurations.}
Table~\ref{tab:survivalpfn_ablation_configs} summarizes the SurvivalPFN variants used in the ablation study. 
Each row corresponds to one pretrained checkpoint and is defined by six configuration choices.

\textbf{Predictive Pretraining} specifies whether the model is initialized from the predictive PFN-style pretraining phase before survival-specific training. 
\benchcmark{} means that the model first undergoes the general predictive pretraining stage described in Appendix~\ref{app:model_details}, and is then further trained with the survival phase. 
A value of \benchxmark{} means that survival-phase training starts without this predictive initialization. 
This ablation tests whether generic PFN-style in-context predictive pretraining improves downstream survival prediction.

\textbf{Prior} specifies the synthetic survival prior used to generate the right-censored pretraining tasks, as described in Appendix~\ref{app:prior_generation}. 
We consider four possible prior families -- the \emph{naive prior}, the \emph{survival-distribution prior}, the \emph{mixture prior} and the \emph{kitchen-sink prior}.

\textbf{Time transformation} specifies the monotone transformation applied to event and censoring times before discretization, as described in Appendix~\ref{app:monotone_transformation}. 
We try the \texttt{lognormal2normal} and the \texttt{time2quantile} transformations.

\textbf{Loss} specifies how the discretized predictive distribution is trained in transformed-time space, as described in Appendix~\ref{app:train_objective}. 
The \emph{NLL} setting uses a one-hot discrete negative log-likelihood, assigning all target mass to the bin containing the latent target time.
The \emph{CE} setting uses the smoothed histogram cross-entropy loss, where the latent target time is converted into a narrow Gaussian-smoothed histogram over bins.

\textbf{Query schedule} specifies how the query indicator $\widetilde{\delta}^{\ast}$ is selected during training, following Section~\ref{sec:method}. 
We try the \emph{event-only}, \emph{both}, and \emph{random} strategies.

\textbf{Variable train ratio} specifies whether the ratio between context/training samples and query/inference samples is varied during synthetic pretraining. 
A value of \benchcmark{} means that this ratio is randomized across synthetic tasks, exposing the model to different amounts of context
information and encouraging robustness to varying downstream train/test splits. 
A value of \benchxmark{} means that the ratio is fixed at 70\%/30\% during training.

Together, these choices define a full configuration space of
$2 \times 4 \times 2 \times 2 \times 3 \times 2 = 192$ possible variants (corresponding respectively to predictive pretraining, prior family, time transformation, loss, query schedule, and variable train ratio). 
Exhaustively training all variants would be computationally expensive, so we selectively evaluate the representative configurations in Table~\ref{tab:survivalpfn_ablation_configs}. 
The model marked with $\dagger$ is the best validation run and is used as the default SurvivalPFN checkpoint elsewhere in the paper.

\begin{figure}[t]
    \centering
    \includegraphics[width=\linewidth]{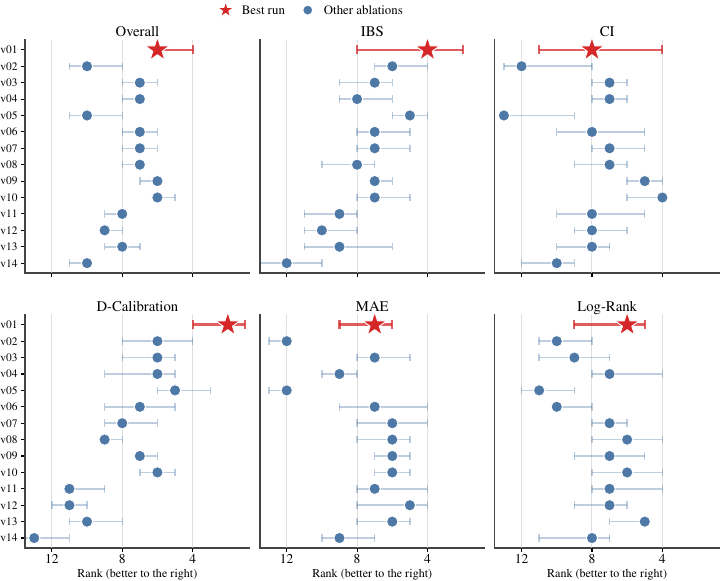}
    \caption{Ablation study over SurvivalPFN training configurations. 
    Each row corresponds to one pretrained SurvivalPFN variant, with v01 marked as the selected best validation run and used as the default checkpoint elsewhere in the paper. 
    Plotting conventions follow Figure~\ref{fig:main_results}.}
    \label{fig:ablation}
\end{figure}

\paragraph{Ablation results.}
Figure~\ref{fig:ablation} summarizes the rank of each SurvivalPFN configuration. 
The selected checkpoint, v01, achieves the strongest overall behavior:
it attains the best median rank across metrics among the evaluated configurations, and is particularly
strong on distributional metrics, ranking best on IBS and D-calibration. 
This configuration uses predictive pretraining, the survival-distribution prior, the \texttt{lognormal2normal} transformation,
the one-hot NLL objective, the Both query schedule, and a fixed train/query ratio. 
Its strong IBS and D-calibration performance suggests that this combination is especially effective for learning calibrated
posterior predictive survival distributions, which is the primary target of SurvivalPFN.

Several trends emerge from the ablation. 
First, the survival-distribution prior is consistently stronger than the naive prior under otherwise similar settings. 
This suggests that directly modeling flexible positive-time distributions provides a more useful synthetic pretraining signal than treating generic tabular outputs as raw survival times. 
The kitchen-sink prior performs competitively but does not clearly dominate the survival-distribution prior, this might indicating that simply increasing prior diversity is not sufficient; the match between the prior family and the survival-prediction target also matters.

Second, the \texttt{lognormal2normal} transformation is preferred in this set of experiments. 
The clearest comparison is between v01 and v02, replacing \texttt{lognormal2normal} with \texttt{time2quantile} substantially worsens the overall rank and degrades all five metric-specific ranks. 
This pattern suggests that the smooth positive-time coordinate and tail extrapolation provided by \texttt{lognormal2normal} are useful
for transferring across heterogeneous real-world time scales, whereas the empirical quantile transform may lose information about tail behavior.

Finally, predictive pretraining is generally helpful but not uniformly decisive in this limited ablation set. 
Similarly, varying the train/query ratio does not show a clear monotonic benefit in the evaluated subset.

\section{Concurrent Works on Tabular Foundation Models for Survival Analysis}
\label{app:concurrent_works}

We discuss two concurrent approaches that adapt TFMs to right-censored survival prediction.

\paragraph{Classification-Based Framework with Off-the-Shelf TFMs.}
\citet{kim2026tabular} propose a conversion from survival analysis to binary classification, allowing existing TFMs to be used without survival-specific pretraining. Given predefined discretization points
\[
    0 \ =\ t_0 \  < \ t_1 \ < \ \cdots \ < \ t_{K-1},
\]
they define time-indexed binary labels
\[
    Y_{i,k} \ = \ \mathbbm{1}(T_i \le t_k),
\]
so that each original tuple $(x_i,t_i,\delta_i)$ is expanded into multiple classification examples indexed by $k$. Under right censoring, labels after the censoring time are treated as missing; equivalently, their binary cross-entropy objective is evaluated only when $t_k < C_i$. Under conditional independent censoring and positivity, they show that minimizing the population binary cross-entropy loss recovers the true failure probabilities,
\[
    p(x,t_k) \ = \ \Pr(T \le t_k \mid X=x),
\]
and hence the survival probabilities $S(t_k\mid x)=1-p(x,t_k)$. 
This formulation is attractive because it can immediately use strong off-the-shelf TFMs such as MITRA, TabPFN v2.5, and TabICL v2~\citep{zhang2025mitra,grinsztajn2025tabpfn,qu2026tabiclv2}, without retraining a survival-specific model.

The main limitation is that this reduction increases the effective context size. Each subject produces up to $K-1$ time-indexed classification examples, so a dataset with $N$ subjects becomes an expanded context of order $N(K-1)$. 
This is manageable for small datasets, but can exceed the input limits of current TFMs on medium or large survival datasets, requiring subsampling and potentially discarding observed survival information. 
The method also inherits limitations of discrete-time classification, including dependence on the time grid and the need for post-hoc monotonicity correction of predicted survival curves. 
In our experiments, we include the static version of this approach as StaticSurvivalTFM.

\paragraph{Survival-Specific Prior-Fitted In-Context Learning.}
\citet{seletkov2026survival} propose Survival In-Context (SIC), a survival-specific prior-fitted model trained on synthetic right-censored datasets. 
Their data generator first samples covariates and latent risk variables $(\eta_1,\eta_2)$ from structural causal models (SCMs).
Event times are then generated using the extended-hazard model
\[
    h(t\mid x) \ = \ h_0(te^{\eta_1})e^{\eta_2},
\]
which yields
\[
    T \ = \ e^{-\eta_1} H_0^{-1}\!\left(e^{\eta_1-\eta_2}(-\log U)\right),
    \qquad U \ \sim \ \mathrm{Unif}(0,1),
\]
where $H_0^{-1}$ is chosen from a set of parametric baseline families such as Weibull, lognormal, log-logistic, Gompertz, and Birnbaum-Saunders distributions. 
Censoring is generated by random censoring assumption -- not dependent on covariates and event times.

Architecturally, SIC builds on TabICL, adds a time-event embedding, and uses a DeepHit-style discrete-time survival head~\citet{lee2018deephit} trained with a likelihood-plus-ranking loss.

SIC is closely related to SurvivalPFN in that both methods pretrain an in-context model specifically for survival prediction. 
However, the two approaches differ substantially in prior design, which is central to the PFN paradigm. 
SIC's prior is based on a parametric extended-hazard construction and only random censoring, whereas SurvivalPFN uses a broader family of identifiable right-censored DGPs, including random censoring and covariate-dependent censoring mechanisms satisfying conditional independence.
SurvivalPFN also avoids committing to explicit parametric hazard or survival families, allowing the event and censoring distributions to be generated by more flexible stochastic neural mechanisms. 
Finally, SurvivalPFN is accompanied by a posterior-predictive consistency guarantee for identifiable survival priors, whereas SIC only provides empirical evidence. 

The empirical scope also differs: SIC evaluates on a smaller benchmark with one main metric and a limited set of baselines, while our study evaluates on 61 held-out datasets, five metrics, and 21 baselines. Since SIC has not released public model weights or code, we cannot include it in our direct empirical comparison.

\end{document}